\DeclareMathOperator*{\argmin}{arg\,min}
\newtheorem{theorem}{Theorem}
\newtheorem{lemma}{Lemma}
\newtheorem{corollary}{Corollary}
\newtheorem{assumption}{Assumption}
\newtheorem{invariant}{Invariant}
\newcommand{\R}{\mathbb{R}}
\newcommand{\abs}[1]{\left\lvert #1 \right\rvert}
\newcommand{\norm}[1]{\left\lVert #1 \right\rVert}
\newcommand{\iprod}[2]{\left\langle #1,#2 \right\rangle}
\newcommand{\twonorm}[1]{\left\lVert #1 \right\rVert_{2}}
\newcommand{\frob}[1]{\left\lVert #1 \right\rVert_{F}}
\newcommand{\ceil}[1]{\left\lceil #1 \right\rceil}
\newcommand{\floor}[1]{\lfloor #1 \rfloor}
\newcommand{\lprp}[1]{\left( #1 \right)}
\colorlet{shadecolor}{blue!20}
\newcommand{\Mt}[1][q]{M^{(#1)}}
\newcommand{\Lt}[1][q]{L^{(#1)}}
\newcommand{\Ct}[1][q]{C^{(#1)}}
\newcommand{\Ltt}[1][q]{\widetilde{L}^{(#1)}}
\newcommand{\Ctt}[1][q]{\widetilde{C}^{(#1)}}
\newcommand{\Ntt}[1][q]{\widetilde{N}^{(#1)}}
\newcommand{\Qt}[1]{Q^{(#1)}}
\newcommand{\St}[1]{S^{(#1)}}
\newcommand{\cs}{\mathcal{CS}}
\renewcommand{\Pr}[1]{\mathcal{P}_{#1}}
\newcommand{\C}{C}
\renewcommand{\L}{L}
\newcommand{\N}{N}
\newcommand{\M}{M}
\newcommand{\D}{D}
\newcommand{\Rm}{R}
\newcommand{\MoS}{\Mo_{\setminus S}}
\newcommand{\LoS}{\Lo_{\setminus S}}
\newcommand{\NoS}{\No_{\setminus S}}
\newcommand{\CoS}{\Co_{\setminus S}}
\newcommand{\U}{U}
\newcommand{\V}{V}
\newcommand{\US}{\U_{\setminus S}}
\newcommand{\VS}{\V_{\setminus S}}
\newcommand{\SigS}{\Sigma_{\setminus S}}
\newcommand{\Uo}{\U^*}
\newcommand{\Vo}{\V^*}
\newcommand{\So}{\Sigma^*}
\newcommand{\UoS}{\Uo_{\setminus S}}
\newcommand{\VoS}{\Vo_{\setminus S}}
\newcommand{\SigoS}{\So_{\setminus S}}
\newcommand{\Mo}{M^*}
\newcommand{\Do}{D^*}
\newcommand{\Lo}{\L^*}
\newcommand{\Co}{\C^*}
\newcommand{\No}{\N^*}
\newcommand{\Lot}[1]{\L^{(#1)}}
\newcommand{\Cot}[1]{\C^{(#1)}}
\newcommand{\Uot}[1]{\U^{(#1)}}
\newcommand{\Sigot}[1]{\Sigma^{(#1)}}
\newcommand{\Vot}[1]{\V^{(#1)}}
\newcommand{\order}[1]{O\left({#1}\right)}
\newcommand{\orpca}{\textbf{OR-PCA}}
\newcommand{\orpcan}{\textbf{OR-PCAN}}
\newcommand{\orpcag}{\textbf{OR-PCAG}}
\newcommand{\ncpca}{\textbf{TORP-N}}
\newcommand{\ncpcacl}{\textbf{TORP}}
\newcommand{\ncpcan}{\textbf{TORP-N}}
\newcommand{\ncpcag}{\textbf{TORP-G}}
\newcommand{\ncpcab}{\textbf{TORP-BIN}}
\newcommand{\fpr}{\textbf{FAST-PR}}
\newcommand{\supp}[1]{\textrm{Supp}\left(#1\right)}
\newcommand{\HT}[1]{\mathcal{HT}_{#1}}
\newcommand{\thresh}{\eta}
\newcommand{\nThresh}{\rho}
\newcommand{\colSp}{\alpha}
\newcommand{\nOut}{n_{\text{thres}}}
\newcommand{\pperpuo}{P_{\perp}^{\Uo}}
\newcommand{\pperpu}{P_{\perp}^{\U}}
\newcommand{\pperput}{P_{\perp}^{\Uot{t}}}
\newcommand{\pperputo}{P_\perp^{\Uot{t+1}}}
\newcommand{\pperpuok}{P_\perp^{\Uo_{1:k}}}
\newcommand{\pperpur}{P_\perp^{\U_{1:r}}}
\newcommand{\pperpy}{P_\perp^{Y}}
\title{Thresholding based Efficient Outlier Robust PCA}
\author{Yeshwanth Cherapanamjeri \qquad Prateek Jain \qquad Praneeth Netrapalli \\
Microsoft Research India\\
\texttt{\{t-yecher,prajain,praneeth\}@microsoft.com}
}
\begin{document}
\bibliographystyle{plainnat}
\maketitle
\begin{abstract}
We consider the problem of outlier robust PCA (\orpca) where the goal is to recover principal directions despite the presence of outlier data points. That is, given a data matrix $\Mo$, where $(1-\alpha)$ fraction of the points are noisy samples from a low-dimensional subspace while $\alpha$ fraction of the points can be arbitrary outliers, the goal is to recover the subspace accurately. Existing results for \orpca~have serious drawbacks: while some results are quite weak in the presence of noise, other results have runtime quadratic in dimension, rendering them impractical for large scale applications.

In this work, we provide a novel thresholding based iterative algorithm with per-iteration complexity at most linear in the data size. Moreover, the fraction of outliers, $\alpha$, that our method can handle is tight up to constants while providing nearly optimal computational complexity for a general noise setting. For the special case where the inliers are obtained from a low-dimensional subspace with additive Gaussian noise, we show that a modification of our thresholding based method leads to significant improvement in recovery error (of the subspace) even in the presence of a large fraction of outliers.
\end{abstract}

\section{Introduction}
Principal Component Analysis (PCA) is a critical first step for any typical data exploration/analysis effort and is widely used in a variety of applications. A key reason for the success of PCA is that it can be performed efficiently using Singular Value Decomposition (SVD).

However, due to various practical reasons like measurement error, presence of anomalies etc., a large fraction of data points can be corrupted in a somewhat correlated and even adversarial manner. Unfortunately, SVD is fragile with respect to outliers and can lead to arbitrarily inaccurate principal directions in the presence of even a small number of outliers. So, designing an outlier robust PCA (\orpca) algorithm is critical for several application domains. 

Formally, the setting of \orpca~is as follows: given a data matrix $\Mo=\Do+\Co\in \R^{d\times n}$ where  $D^*=[x_1, \dots, x_n]$ corresponds to $n$ clean ``inlier'' data points and $\Co$ has at most $\alpha$-fraction of non-zero columns that can corrupt the corresponding clean data points arbitrarily, the goal of \orpca~is to estimate the principal components of $\Do$ accurately, i.e., recover $\Uo\in \R^{d\times r}$, the top-$r$ left singular vectors of $D^*$. 

Vanilla SVD does not do the job since the top singular vectors of $\Mo$ can be arbitrarily far from $\Uo$ if the operator norm of $\Co$ ($\|\Co\|_2$) is large, as can be the case when $\alpha=\Omega(1)$. Any algorithm trying to solve \orpca~needs to exploit the column sparsity of $\Co$ to obtain a better estimate of $\Uo$. In particular, they need to find $S^*= \supp{\Co}$---$\supp{A}$ is the index of non-zero columns of $A$---so that $\Uo$ can be estimated using top singular directions of $\Mo_{\setminus{S^*}}$, i.e., columns of $\Mo$ restricted to complement set of $S^*$.

Existing results for \orpca~fall into two categories based on: a) Nuclear norm \citep{DBLP:journals/tit/XuCS12,DBLP:journals/tit/ZhangLZ16}, b) iterative PCA \citep{DBLP:journals/tit/XuCM13}. Nuclear norm based approaches work with exactly same setting as ours, but require $O(nd^2)$ computational time which is prohibitive for typical applications.
Iterative PCA based techniques require $O(n^2d)$ computation which in general is significantly higher than our algorithms. Moreover, these results do not recover the exact principal directions even if the inliers are restricted to a low-dimensional subspace and just a constant number of outliers are present. 

Our approach is based on solving the following natural optimization problem: $$\orpca:\ \ \min_{\D, \C \in \R^{d\times n}} \|\Mo-\D-\C\|_F^2 \text{ s.t. } rank(\D)\leq r,\ \ |\supp{\C}|\leq \alpha n.$$
{\em Technical Challenges}: The main challenges with \orpca~are its non-convexity and combinatorial structure which rules out standard tools from convex optimization. Furthermore, due to the column-sparsity constraint standard SVD based techniques also do not apply. Instead, we propose a simple iterative method that constructs an estimate of outliers $C$ and using that, an estimate of the inliers $\Mo-C$. SVD of the estimated inliers is then used to obtain an estimate of the principal directions. Now, a significant challenge is to use these principal directions to re-estimate outliers. There are two different scenarios here: 
\begin{itemize}
	\item	\emph{Length of all the outliers is smaller than the smallest singular value of the inliers and hence do not stand out}: In this case however, the principal directions are not much affected by outliers. So, outliers can be recognized by taking a projection on to the orthogonal space to the estimated principal directions. As we get better estimate of principal directions, we can be more aggressive in determining the outliers. 
	\item	\emph{Length of at least one of the outliers is larger than the small singular values of inliers}: In this case again, length based thresholding fails since the lengths of the inliers are dominated by the larger singular values. Similarly, the above mentioned thresholding scheme also fails as some of the estimated principal directions will be heavily biased towards those outliers. A key and somewhat surprising algorithmic insight of our work is that:  as some of the estimated singular vectors are heavily affected by outliers, projection of such outliers on these spurious singular vectors will be inordinately high. So, in contrast to the above thresholding operator, we can use the length of projection of points along estimated principal directions as well to detect and threshold outliers. 
\end{itemize}
In both the scenarios, we can identify more outliers and repeat the procedure till convergence.
A key assumption we make, in order to ensure that inliers are not thresholded, is that most of the inliers have ``limited'' influence on the principal directions, i.e., the data matrix is incoherent (see Assumption~\ref{as:rank}). Such an assumption holds for various typical settings, for example when inliers are noisy and uniform samples from a low-dimensional subspace.  

\noindent{\em Contributions}: \textit{The main contribution of our work is to show that, under some regularity conditions, it is indeed possible to solve \orpca~near optimally, in essentially the same time as that taken by vanilla PCA.} In particular, we propose  a thresholding based approach that iteratively estimates inliers using {\em two} different thresholding operators and then use SVD to estimate the principal directions. We also show that our method recovers $\Uo$ nearly optimally and efficiently as long as the fraction of corrupted data points (column-support of $\Co$) is less than $O(1/r)$ where $r$ is the dimensionality of principal subspace $\Uo$. More concretely, we study the problem in three settings:
\begin{enumerate}
\item {\em Noiseless setting}: $\Mo=\Do+\Co$ where the clean data matrix $D^*$ is a rank-$r$ matrix with $\mu$-incoherent right singular vectors (see Assumption~\ref{as:rank}) and $\Co$ has at most $\alpha\cdot n$ non-zero columns. For this setting, we design a novel \textbf{T}hresholding based \textbf{O}uliter \textbf{R}obust \textbf{P}CA algorithm (\ncpcacl) that recovers  $\Uo$ up to an error of $\epsilon$ in time $\order{ndr\log \frac{n\twonorm{\Mo}}{\epsilon}}$, if $\alpha\leq \frac{1}{128\mu^2 r}$. Note that this is essentially the time taken for vanilla PCA as well. In contrast, existing results for the same setting require $O(\frac{nd^2}{\epsilon^2})$\footnote{Dependence on $\epsilon$ is due to the standard rates of gradient descent when applied to the non-smooth non-strongly convex optimization problem given in \eqref{eq:nuc}; however, using more refined RSC-style analysis, $\epsilon$ dependency might be improved but we are not aware of such an existing result.} computation to recover $\Uo$. Note that the number of outliers our results can handle (i.e., $\alpha \leq 1/128 \mu^2 r$) is optimal up to constant factors in the sense that if $\alpha > 1/\mu^2 r$, then there exists a matrix $\Mo$ which has more than one decomposition satisfying the above conditions.
\item {\em Arbitrary noise}: In the second setting, $\Mo=\Do+\Co$ where the clean data matrix $\Do$ can be written as $\Lo + \No$ and $\Lo$, the rank-$r$ projection of $\Do$ has $\mu$ incoherent right singular vectors. $\Co$ on the other hand, again has at most $\alpha\cdot n$ non-zero columns. If $\alpha\leq \frac{1}{128\mu^2 r}$, our proposed algorithm \ncpcan~ guarantees recovery of $\Uo$ (left singular vectors of $\Lo$) up to $\order{\sqrt{r} \frob{\No} + \epsilon}$ error, in $\order{ndr\log \frac{n\twonorm{\Mo}}{\sqrt{r} \frob{\No} + \epsilon}}$ time. Again this is essentially the same as the time taken for vanilla PCA up to $\log$ factors. In contrast, existing results for this problem get stuck at a significantly larger error of ($\sqrt{n} \frob{\No}$), with a runtime of $O(\frac{nd^2}{\epsilon^2})$, which is slower than ours by a factor of $d$. 
\item {\em Gaussian noise}: In this setting, we again have $\Mo=\Do+\Co$ where the clean data matrix $D^*$ is a sum of low-rank matrix $\Lo$ and a Gaussian noise matrix $\No$, i.e., each element of $\No$ is sampled independently and identically (iid) from $\mathcal{N}(0,\sigma^2)$. \ncpcag, which is our proposed algorithm for this special case, recovers $\Uo$ up to an error of $\order{\sqrt{r\log d}\twonorm{\No}}$. This not only improves upon the result we obtained for the arbitrary noise case above, which is $\order{\sqrt{r} \frob{\No}}$, it also improves significantly upon the existing results. However, in order to achieve this improvement in error, we require $ n > d^2$ and the algorithm has a runtime of $\order{n^2 d}$. 
\end{enumerate}
To summarize, our results obtain stronger guarantees in terms of both final error and runtime, while being able to handle a large number of outliers, for three different settings of \orpca. Moreover, in the first two settings, our run time matches that of standard PCA up to $\log$ factors. Please refer Tables~\ref{tab:comp1} and~\ref{tab:comp2} for comparison of our results with existing results.
{
	\begin{table}[h]
		\begin{center}
			\begin{tabular}{| c | c | c |}
				\hline
				& \thead{Run time} & \thead{Error ($\frob{(I - \U\U^{\top})\Lo}$)} \\ \hline
				\cite{XuCS12}	& $\order{\frac{d^2n}{\epsilon^2}}$ & $\order{\sqrt{n} \frob{\No} + \epsilon}$ \\ \hline
				\ncpcan & $\order{dnr \log \frac{n \twonorm{\Mo}}{\epsilon}}$ & $\order{\sqrt{r} \frob{\No} + \epsilon}$ \\ \hline
			\end{tabular} \\
			\caption{Arbitrary noise: comparison of our results and existing results for the arbitrary noise setting. Here, $\Mo = \Do + \Co$, where $\Do = \Lo + \No$ is the clean data matrix with $\Lo$, the rank-$r$ projection of $\Do$ having $\mu$-incoherent right singular vectors (Assumption~\ref{as:rank}) and $\Co$ has $O(\frac{1}{\mu^2 r})$ is a column sparse matrix. The error is measured as the residual of $\Lo$ when projected on to the estimated space $\U$. Note that we obtain better error and better runtime compared to existing results. Also, the noiseless case is a special case with $\No = 0$.}
			\label{tab:comp1}
			\begin{tabular}{| c | c | c |}
				\hline
				& \thead{Run time} & \thead{Error($\frob{(I - \U\U^{\top})\Lo}$)} \\ \hline
				\cite{XuCS12} & $\order{\frac{n d^2 r}{\epsilon^2}}$ & $\order{\sigma n \sqrt{d}+\epsilon}$ \\ \hline
				\ncpcan & $\order{ndr^2 \log {\frac{\twonorm{\Mo}}{\epsilon}}}$ & $\order{\sigma \sqrt{nd} + \epsilon}$ \\ \hline
				\ncpcag & $\order{n^2 d r \log {\frac{\twonorm{\Mo}}{\epsilon}}}$ & $\order{\sigma \sqrt{n \log (d)}+\epsilon}$ \\ \hline
			\end{tabular}
		\caption{Gaussian noise: comparison of existing results with ours for the Gaussian noise setting: $\Mo = \Do + \Co$, where inliers $\Do = \Lo + \No$ with $\Lo$ being a rank-$r$ matrix with $\mu$-incoherent right singular vectors (Assumption~\ref{as:rank}) and $\No$ is a Gaussian matrix with variance $\sigma^2$.  Specializing \ncpcan~ for this setting already gives us faster and better results than existing ones. \ncpcag~ further improves the error by a factor of $\sqrt{d/\log d}$. The algorithm however requires extra $O(n)$ factor in the runtime.}
		\label{tab:comp2}\vspace*{-20pt}
		\end{center}
	\end{table}
}

\textbf{Paper Outline}: The paper is organized as follows. We will review related work in Section~\ref{sec:related}. We then present a formal definition of the problem in Section~\ref{sec:form} and our main results in Section~\ref{sec:results}. We then present our algorithm for each of the three settings: a) noise-less setting, b) arbitrary data, c) Gaussian noise, in Section~\ref{sec:orpcacl}, \ref{sec:orpca}, \ref{sec:orpcag}, respectively. We provide a brief overview of our proofs in Section~\ref{sec:po}. Finally, we conclude with a few open problems and promising future directions in Section~\ref{sec:conc}. 
\subsection{Related Works}
\label{sec:related}
In this section, we will discuss related work and compare existing results with ours. Existing theoretical results for OR-PCA fall into two categories:

a) The first category of approaches, more in-line with our own work, are based on Outlier-Pursuit (\cite{DBLP:journals/tit/XuCS12}) which optimizes a convex relaxation of the OR-PCA problem where the rank and column sparsity constraints are replaced by the trace-norm (sum of singular values) and $\norm{.}_{2, 1}$ (sum of the column lengths) penalties. That is, they solve the following optimization problem:
\begin{equation}\label{eq:nuc}
	\text{(Outlier Pursuit): } \min \norm{L}_{*} + \lambda \norm{C}_{2, 1}\ s.t\ \frob{M - L + C} \leq \frob{\No}.
\end{equation}
While outlier pursuit obtains optimal recovery guarantees in absence of any noise, a main drawback is that its computational complexity is quadratic in $d$, i.e., $O(nd^2)$. Moreover, in presence of noise the bounds given in \cite{DBLP:journals/tit/XuCS12} are $O(\sqrt{n})$ worse than our result. Extensions of Outlier-Pursuit to the partially observed setting (\cite{DBLP:journals/tit/ChenXCS16}) and online setting \cite{DBLP:conf/nips/FengXY13} have also been proposed but share the drawback of high computational complexity. Recently, \cite{DBLP:journals/tit/ZhangLZ16} showed that Outlier Pursuit achieves recovery even with the fraction of outliers larger than the information theoretic lower bound. However, this requires the outliers to be ``well-spread'' which in practice is restrictive; our results allow the corruption matrix to be constructed in adversarial manner although the corruptions cannot depend on the Gaussian noise in the setting (3) described in previous section. 

b) The second line of approaches based on HR-PCA (\cite{DBLP:journals/tit/XuCM13}) iteratively prune or reweigh data points which have a large influence on the singular vectors and select an estimate with the Robust Variance Estimator metric. When applied to our finite sample setting, these results cannot achieve exact recovery even in the noiseless case with a single outlier. Moreover, their running time in this setting is $\order{n^2dr}$ while ours is nearly linear in the input size. In the special case of Gaussian noise, these results incur at least a constant error $\order{\sigma_1(\Lo)}$ whereas our recovery guarantee scales linearly with the standard deviation of the noise $\order{\sqrt{r} \sigma\log d}$, achieving exact recovery when $\sigma = 0$. \cite{DBLP:conf/icml/FengXY12} propose a deterministic variant of HR-PCA and \cite{DBLP:conf/icml/YangX15} extend HR-PCA to PCA-like algorithms like Sparse PCA and Non-Negative PCA.

There has been much recent work on the related problem of Robust PCA (\cite{CandesLMW11}, \cite{NIPS2014_5430}, \cite{DBLP:conf/nips/YiPCC16}, \cite{DBLP:journals/corr/CherapanamjeriG16}). In contrast to the setting considered here, the corruptions are assumed to be both row and column sparse i.e., unlike our setting no data can be corrupted in all its dimensions. This restriction allows stronger recovery guarantees but makes the results inapplicable to the setting of outlier robust PCA. 

\subsection{Notations}
We use the following notations in this paper. For a vector $v$, $\norm{v}$ and $\twonorm{v}$ denote the $\ell_2$ norm of $v$. For a matrix $\M$, $\norm{\M}$ and $\twonorm{\M}$ denote the operator norm of $\M$ while $\frob{\M}$ denotes the Frobenius norm of $\M$. $\sigma_k(\M)$ denotes the $k^{\textrm{th}}$ largest singular value of $\M$. SVD refers to singular value decomposition of a matrix. $\mathcal{SVD}_r(\M)$ refers to the rank-$r$ SVD of $\M$. Given a matrix $\M$, $\M_i$ denotes the $i^{\textrm{th}}$ column of $\M$ while $\M_{i,:}$ denotes the $i^{\textrm{th}}$ row of $M$. Given a matrix $\M \in \R^{d\times n}$ and a set $S\subseteq [n]$, $\M_S$ is defined as
\begin{align*}
	(\M_S)_i = \left\{ \begin{array}{ll}
	\M_i &\mbox{ for } i \in S,\\
	0 &\mbox{ otherwise.}
	\end{array}\right.			
\end{align*}
$\M_{\setminus S}$ denotes $\M_{[n]\setminus S}$. $\supp{M}$ denotes column support of $M$, i.e., the set of indices of non-zero columns of $M$. 

We use two hard -thresholding operators in this paper.
Given a matrix $\Rm$, the first hard-thresholding operator, $\mathcal{HT}_\nThresh \lprp{\Rm}$ denotes the set of indices $j$ of the top $\nThresh$ fraction of largest columns (in $\ell_2$ norm) in $\Rm$. Given a matrix $\Rm$, the second hard-thresholding operator $\widetilde{\HT{\zeta}}(\Rm)$ is defined as,
\begin{equation}
\label{eqn:httDef}
\widetilde{\HT{\zeta}}(N)=\{i:\ \text{s.t. } \|N_{i}\|_2 \geq \zeta\}.
\end{equation}
$\pperpu(M)$ denotes $(I-UU^\top)M$. For any set $\mathcal{S} \in \mathbb{R}^{d}$, we will use $\Pr{\mathcal{S}}$ to denote the projection onto the set $\mathcal{S}$. We will also use $\Pr{U}$ to denote the projection onto the column space of $U$ for $U \in \mathbb{R}^{d \times r}$. 

\section{Problem Formulation}
\label{sec:form}
In this section, we will formally present the setting of the paper. We are given $\Mo = \Do + \Co \in \R^{d\times n}$, where columns of $\Do$ are inliers and $\Co$ are outliers. Only an $\alpha$ fraction of the points are outliers, i.e., only $alpha$ fraction of the columns of $\Co$ are non-zero. Broadly, we consider three scenarios:
\begin{itemize}
	\item	\textit{\orpca} (Noiseless setting): The points in $\Do$ lie \textit{entirely} in a low-dimensional subspace i.e., $\Do=\Lo$ is a rank-$r$ matrix.
	\item	\textit{\orpcan} (Noisy setting): The points in $\Do$ lie \textit{approximately} in a low-dimensional subspace i.e., $\Do = \Lo + \No$ where $\Lo$ is the rank-$r$ projection of $\Do$ and $\No$ is the noise matrix.
	\item	\textit{\orpcag} (Gaussian noise setting): The points in $\Do$ come from a low-dimensional subspace with additive Gaussian noise i.e., $\Do = \Lo + \No$ where $\Lo$ is a rank-$r$ matrix and $\No$ is a Gaussian noise matrix i.e., each element of $\No$ is sampled iid from $\mathcal{N}(0,\sigma^2)$.
\end{itemize}
In all the above settings, the goal is to find the low dimensional subspace spanned by the columns of $\Lo$.

This problem is in general ill-posed. Consider for instance the case, when most of the true data points $\Lo$ are zero and only an $\alpha$ fraction of them are non-zero. These points can either be considered inliers or outliers. In order to overcome this issue, standard assumption used in literature~\citep{DBLP:journals/tit/XuCS12, DBLP:conf/nips/FengXY13} is that of incoherence. Also, incoherence is satisfied in several standard settings; for example, when the inliers are noise and uniform samples from a low-dimensional subspace. 
\begin{assumption}
  \label{as:rank}
  \textbf{Rank and incoherence of $\Lo$: } $\Lo\in \mathbb{R}^{d\times n}$ is a rank-$r$ incoherent matrix, i.e., $\twonorm{e_{i}^{\top}V^*} \leq \mu \sqrt{\frac{r}{n}} \ \forall i\in [n]$, where $\Lo = U^*\Sigma^* (V^*)^{\top}$ is the SVD of $\Lo$.
\end{assumption}


\section{Our Results}\label{sec:results}
In this section, we will present our results for the three settings mentioned above.
\subsection{\orpca~-- Noiseless Setting}
Recall that in the noiseless setting, we observe $\Mo = \Do + \Co$ where $\Do$ is a rank-$r$, $\mu$-incoherent matrix corresponding to clean data points and the column-support of $\Co$ is at most $\alpha n$. 
The following theorem is our main result for this setting.
\begin{theorem}[Noise-less Setting]
	\label{thm:thmCln}
	Let $\Mo, \Do$ and $\Co$ be as described above. If $\alpha\leq \frac{1}{128 \mu^2 r}$, then Algorithm~\ref{alg:ncpcacl} run with parameters $\nThresh = \frac{1}{128\mu^2 r}$ and $T=\log \frac{10 n \|\Mo\|_2}{\epsilon}$, returns a subspace $U$ such that,  
	\begin{equation*}
	\frob{(I - UU^\top) D^*} \leq \epsilon.
	\end{equation*}
\end{theorem}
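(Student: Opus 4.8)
The plan is to analyze Algorithm~\ref{alg:ncpcacl} as an alternating scheme that maintains an estimate $S^{(t)}$ of the outlier support $S^* = \supp{\Co}$, sets $\Uot{t}$ to be the top-$r$ left singular vectors of $\Mo_{\setminus S^{(t)}}$, and then re-thresholds. The central quantity to track is the ``residual energy'' of the inliers outside the current estimated subspace together with the amount of outlier mass that still survives in $\Mo_{\setminus S^{(t)}}$; I would package this into a potential like $\frob{\pperput(\Lo)}^2$ (or the top few singular values of the restricted outlier matrix $\Co_{\setminus S^{(t)}}$) and show it contracts geometrically, i.e.\ drops by a constant factor each iteration, so that after $T = \log(10 n\twonorm{\Mo}/\epsilon)$ steps it is below $\epsilon$. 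Because $\Do = \Lo$ has rank exactly $r$ here, once the estimated subspace captures all of $\Lo$ and no outlier mass distorts it, a single SVD recovers $\Uo$ exactly up to the residual; the $\log$ in $T$ and the $\frob{}\le\epsilon$ target are exactly what a linear-convergence argument yields.

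The key steps, in order: (i) \emph{Base case / first iteration.} Starting from $S^{(0)} = \emptyset$, bound $\twonorm{\Co}$ crudely and use Weyl/Davis--Kahan type perturbation to show $\Uot{0}$ already has nontrivial overlap with $\Uo$ — here the incoherence Assumption~\ref{as:rank} enters to guarantee that no inlier column is individually large, so the clean part is ``spread out'' and the thresholding step $\HTz$ removes outliers rather than inliers. (ii) \emph{Thresholding removes outliers, keeps most inliers.} Given the current $\Uot{t}$, argue that for the correctly-handled case the columns with largest projection onto the orthogonal complement (or largest projection onto spurious singular directions, per the two-scenario discussion in the introduction) are predominantly outliers; quantitatively, show $\abs{S^{(t+1)} \setminus S^*}$ stays small and $\abs{S^* \setminus S^{(t+1)}}$ shrinks, using $\alpha \le 1/(128\mu^2 r)$ to ensure the thresholding fraction $\nThresh$ is large enough to catch outliers but small enough that the incoherent inliers have strictly smaller residual norm. (iii) \emph{SVD re-estimation improves the subspace.} With the refined support, re-bound $\twonorm{\Co_{\setminus S^{(t+1)}}}$ against $\sigma_r(\Lo_{\setminus S^{(t+1)}})$ and apply a perturbation bound to get $\frob{\pperputo(\Lo)} \le \tfrac12 \frob{\pperput(\Lo)} + (\text{lower-order})$. (iv) \emph{Iterate and set parameters.} Compose the contraction $T$ times and choose constants so the $128$ and the $\nThresh = 1/(128\mu^2 r)$ work out, then translate the subspace distance into the stated $\frob{(I-UU^\top)\Do}$ bound.

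The main obstacle I expect is step (ii) in the ``large outlier'' regime: when some outlier column has norm exceeding $\sigma_r(\Lo)$, the naive length-based thresholding both fails to isolate outliers and corrupts the SVD, so one must instead exploit the insight that such outliers project with anomalously large magnitude onto the \emph{spurious} singular directions of $\Mo_{\setminus S^{(t)}}$. Making this rigorous requires a careful decomposition of the perturbed singular subspace into a part aligned with $\Uo$ and a part aligned with the big outliers, controlling the cross terms, and showing that the second thresholding operator $\HTz$ applied to the projection onto the estimated directions provably selects these outliers without sweeping up incoherent inliers — this is where the bulk of the technical work (and the precise constant $128$) will live. A secondary but routine difficulty is handling the interaction between $\frob{}$ and $\twonorm{}$ in the perturbation bounds, since the target error is in Frobenius norm while the outlier perturbation is naturally controlled in operator norm; a bound of the form $\frob{\pperput(\Lo)} \le \sqrt{r}\,\twonorm{\pperput(\Lo)}$ loses a $\sqrt r$, so one likely tracks the Frobenius quantity directly through the iteration rather than passing through operator norm at each step.
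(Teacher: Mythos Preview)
Your overall shape is right, but the potential you propose and the logic of step~(ii) are not what the paper does, and the difference is exactly what dissolves the obstacle you flag. The paper does \emph{not} track $\frob{\pperput(\Lo)}$, does \emph{not} argue that $\abs{S^*\setminus S^{(t)}}$ shrinks, and uses no Davis--Kahan or Weyl bound. It tracks instead $\frob{\pperpuo(\Mt[t])}$, the residual of the \emph{current working matrix} on the complement of the \emph{true} subspace $\Uo$; since $\pperpuo(\Lo)=0$, this is just the mass of \emph{surviving} outliers outside $\Uo$. The contraction is then a per-column argument with no reference to how many outliers are caught: any outlier $j$ surviving both $\HT{\rho}$ operators is, by definition of hard thresholding, dominated in residual length by some inlier and in normalized projection $\norm{(\Sigot{t})^{-1}(\Uot{t})^\top(\cdot)}$ by some (possibly different) inlier. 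Two short deterministic lemmas (Lemmas~\ref{lem:clResCl} and~\ref{lem:clExpCl}, proved from incoherence preservation and the variational property of SVD alone) bound \emph{every} inlier's residual by $\tfrac{33}{32}\mu\sqrt{r/n}\,\norm{\pperpuo(\Mt[t])}$ and \emph{every} inlier's normalized projection by $\tfrac{33}{32}\mu\sqrt{r/n}$. Writing each surviving $\Co_j = \Uot{t}\Sigot{t}W_j + R_j$ and summing over at most $\rho n$ of them gives $\frob{\pperpuo(\Mt[t+1])}^2 \le \tfrac14\frob{\pperpuo(\Mt[t])}^2$ in two lines.

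This sidesteps your obstacle entirely: you never need $\twonorm{\Co_{\setminus S^{(t)}}}$, and the large-outlier regime is handled for free. A huge surviving outlier must have small $\norm{W_j}$ (it passed the incoherence threshold), so its contribution to $\pperpuo(\Mt[t+1])$ is controlled via $\norm{\pperpuo(\Uot{t}\Sigot{t})}\le\norm{\pperpuo(\Mt[t])}$ regardless of its raw length. Your step~(iii) as written would indeed stall in that regime for exactly the reason you anticipate, and the ``careful decomposition of the perturbed singular subspace'' you propose is unnecessary once the potential is switched from $\pperput(\Lo)$ to $\pperpuo(\Mt[t])$.
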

\textbf{Remarks}:
\begin{itemize}
	\item	Note that the guarantee of Theorem~\ref{thm:thmCln} can be right away converted to a bound on the subspace distance between $U$ and that spanned by the columns of $\Do$. In particular, we obtain $\frob{(I - UU^\top)\Uo} \leq \epsilon/\sigma_r(\Do)$, where $\sigma_r(\Do)$ denotes the smallest singular value of $\Do$ and $\Uo$ contains the singular vectors of $\Do$.
	\item	Since the most time consuming step in each iteration is computing the top-$r$ SVD of an $n\times d$ matrix, the total runtime of the algorithm is $\order{ndr\log \frac{10 n \|\Mo\|_2}{\epsilon}}$.
	\item	The above assumption on the column sparsity of $\Co$ is tight up to constant factors i.e., we may construct an incoherent matrix $\Lo$ and column sparse matrix $\Co$ such that it is not possible to recover the true column space of $\Lo$ when the column sparsity of $\Co$ is larger than $\frac{1}{\mu^2 r}$.
\end{itemize}
\subsection{\orpcan~-- Arbitrary Noise}
We now consider the noisy setting. Here we observe $\Mo = \Do + \Co$, where $\Do$ is a near low rank matrix i.e., $\Do = \Lo + \No$ where $\Lo$ is the best rank $r$ approximation to $\Do$ and is a $\mu$ incoherent matrix, while $\No$ is a noise matrix. $\Co$ is again column sparse with at most an $\alpha$ fraction of the columns being non-zero.
\begin{theorem}[Arbitrary Noise]
	\label{thm:thmN}
Consider the setting above. If $\alpha\leq \frac{1}{128 \mu^2 r}$, then Algorithm~\ref{alg:ncpca} when run with parameters $\rho=\frac{1}{128\mu^2 r}$, $\eta=2\mu \sqrt{\frac{r}{n}}$ and $T=\log \frac{20\|\Mo\|_2\cdot n}{\epsilon}$ iterations, returns a subspace $U$ such that:
	\begin{equation*}
	\frob{(I - UU^\top) L^*} \leq 60 \sqrt{r} \frob{\No} + \epsilon.
	\end{equation*}
\end{theorem}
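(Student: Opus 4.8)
The plan is to run the same contraction-plus-geometric-series argument that underlies the noiseless Theorem~\ref{thm:thmCln}, but to carry the noise matrix $\No$ through every estimate; almost all of the new work lies in showing that the two thresholding steps of Algorithm~\ref{alg:ncpca} remain reliable once inlier residuals are perturbed by $\No$. Throughout, write $e_t := \frob{(I-\Uot{t}(\Uot{t})^{\top})\Lo}$ for the subspace error at the start of iteration $t$, and let $\St{t}$ be the algorithm's current estimate of the outlier support $\supp{\Co}$. The goal is a recursion of the form $e_{t+1} \le \tfrac12 e_t + c\sqrt{r}\,\frob{\No}$ together with a crude initialization bound, after which the stated $T$ finishes the job.

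\textbf{Step 1: inductive invariant.} I would maintain, for every $t$, that (i) $|\St{t}| = \order{\alpha n}$, so the algorithm never discards more than a constant fraction of columns; consequently $\Lo_{\setminus \St{t}}$ still has $\sigma_r(\Lo_{\setminus \St{t}}) = \Omega(\sigma_r(\Lo))$ and retains $\mu$-incoherent right singular vectors, which is exactly where the bound $\alpha\le\frac{1}{128\mu^2 r}$ and the choice $\rho=\frac{1}{128\mu^2 r}$ are used, as in the noiseless case; and (ii) the surviving corruption is small in operator norm, $\twonorm{\Co_{\setminus \St{t}}} \le \tfrac12 e_t + c\sqrt{r}\,\frob{\No}$. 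Given (i)--(ii), a Wedin/Davis--Kahan bound applied to the rank-$r$ SVD of $\Mo_{\setminus \St{t}} = \Lo_{\setminus \St{t}} + \No_{\setminus \St{t}} + \Co_{\setminus \St{t}}$ controls the angle between $\Uot{t+1}$ and $\Uo$ by $\bigl(\twonorm{\No_{\setminus \St{t}}} + \twonorm{\Co_{\setminus \St{t}}}\bigr)/\sigma_r(\Lo)$; converting this operator-norm perturbation of the $r$-dimensional principal subspace back into the Frobenius quantity $e_{t+1}$ costs a factor $\sqrt{r}$, and $\twonorm{\No}\le\frob{\No}$, which is the origin of the $\sqrt{r}\,\frob{\No}$ term.

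\textbf{Step 2: the thresholds are noise-safe (the crux).} Split $\supp{\Co}$ into \emph{heavy} outliers, whose column length exceeds a constant times $\sigma_r(\Lo)$, and \emph{light} ones. For light outliers, once $\Uot{t}$ is a good enough estimate, $\twonorm{\pperput(\Mo_i)}$ equals the full column length up to $\twonorm{\No_i}$ for an outlier $i$, whereas for a genuine inlier it is at most $\twonorm{\No_i}$ plus an incoherence-scaled term of order $\mu\sqrt{r/n}\cdot e_t$; the length-based operator then separates them except for a set of unusually noisy inliers that is small and whose removal perturbs the next SVD by only $\order{\sqrt{r}\,\frob{\No}}$. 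For heavy outliers I would use $\widetilde{\HT{\eta}}$ with $\eta=2\mu\sqrt{r/n}$ acting on the (normalized) coordinates of $\Mo$ along the estimated singular directions: Assumption~\ref{as:rank} forces every genuine inlier's coordinate vector to have norm at most $\mu\sqrt{r/n}$, up to a perturbation controlled by $\twonorm{\Vot{t}-\Vo}$ and $\twonorm{\No_i}$, so essentially no inlier crosses $\eta$, while a heavy outlier that biases any estimated singular vector must---by the ``inordinately large projection'' phenomenon flagged in the introduction---produce a coordinate far above $\eta$ and is caught. Hence $\St{t+1}$ captures every heavy outlier and all but a geometrically shrinking fraction of the mass of the light ones while misclassifying only $\order{\alpha n}$ inliers, which gives invariant (ii) at $t+1$: $\twonorm{\Co_{\setminus \St{t+1}}}$ is at most $\sqrt{\alpha n}$ times the largest surviving (light) outlier length, of order $\sqrt{r}\,\frob{\No}$, plus the $\tfrac12 e_t$ term inherited from the inaccuracy of $\Uot{t}$.

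\textbf{Step 3: contraction and summation, and where the difficulty sits.} Steps~1--2 yield $e_{t+1}\le\tfrac12 e_t + c\sqrt{r}\,\frob{\No}$; for initialization any crude bound $e_0 = \order{n\twonorm{\Mo}}$ suffices (using that $I-\Uot{0}(\Uot{0})^{\top}$ is a contraction and $\twonorm{\Mo_{\setminus S}}\le\twonorm{\Mo}$ for every $S$), with a routine separate treatment of the first iteration so that invariant (i) holds from the outset. Unrolling gives $e_T \le 2^{-T}e_0 + 2c\sqrt{r}\,\frob{\No}$, and $T=\log\frac{20\twonorm{\Mo}\cdot n}{\epsilon}$ makes the transient at most $\epsilon$; fixing the constant from Steps~1--2 produces $\frob{(I-UU^{\top})\Lo}\le 60\sqrt{r}\,\frob{\No}+\epsilon$. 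The whole proof hinges on Step~2: in the noiseless analysis a genuine inlier has \emph{exactly} zero residual after projecting onto the true subspace, so any positive residual unambiguously marks an outlier, whereas here the inlier residual is $\approx\twonorm{\No_i}$, which can simultaneously push genuine inliers above the threshold and let comparably short outliers hide below it. One must therefore redo the incoherence counting against the \emph{noisy} singular vectors $\Vot{t}$ (bounding $\twonorm{\Vot{t}-\Vo}$ via a perturbation estimate that itself invokes invariant (i)), show the misclassified-inlier set has size $\order{\alpha n}$ so its removal is absorbed by the $\tfrac12$-contraction and the $\sqrt{r}\,\frob{\No}$ floor, and verify that the hidden outliers contribute only an $\order{\sqrt{r}\,\frob{\No}}$ term to $\twonorm{\Co_{\setminus \St{t+1}}}$---all with a noise floor constant in $t$, which is precisely what forces the choices $\rho=\frac{1}{128\mu^2 r}$ and $\eta=2\mu\sqrt{r/n}$.
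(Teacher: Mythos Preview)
Your proposal has a genuine gap: it ignores the defining structural feature of Algorithm~\ref{alg:ncpca}, namely the outer loop over the working rank $k$ and the termination flag $\tau$ based on $\nOut$. Your entire argument runs at fixed rank $r$ and invokes a Wedin/Davis--Kahan bound with $\sigma_r(\Lo)$ in the denominator. But Theorem~\ref{thm:thmN} assumes nothing about $\sigma_r(\Lo)$; in particular $\sigma_r(\Lo)$ may be arbitrarily small compared with $\frob{\No}$, in which case your Step~1 angle bound $(\twonorm{\No}+\twonorm{\Co_{\setminus \St{t}}})/\sigma_r(\Lo)$ is vacuous, the heavy/light split in Step~2 collapses (every outlier is ``heavy''), and the incoherence threshold $\eta$ no longer separates inliers from outliers because the estimated $r$-th singular direction is itself noise-dominated. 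Your recursion $e_{t+1}\le \tfrac12 e_t + c\sqrt{r}\,\frob{\No}$ therefore cannot be established as stated.

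The paper handles exactly this via the outer loop: it proves the stronger Theorem~\ref{thm:masterN} instead. Lemma~\ref{lem:clExpGen} shows that whenever $\frob{\No}\le \sigma_k(\Lo)/16$ the incoherence count never reaches $2\rho n$, so the algorithm does not terminate before outer iteration $k$; Lemma~\ref{lem:inItPerfGen} then shows that at whatever $k'\ge k$ the algorithm stops, the output satisfies $\frob{(I-UU^\top)\Lo}\le 3\frob{(I-\Uo_{1:k}(\Uo_{1:k})^\top)\Lo}+9\frob{\No}+\epsilon/(10n)$. The contraction there is on $\frob{P_{\perp}^{\Uo_{1:k}}(\Mt[t])}$ (residual of the \emph{current matrix} on the \emph{true} top-$k$ subspace), obtained from the variational characterization of SVD rather than Wedin, so no singular-value gap is needed. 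Theorem~\ref{thm:thmN} then follows by taking $k$ maximal with $\sigma_k(\Lo)\ge 16\frob{\No}$, so that $\frob{(I-\Uo_{1:k}(\Uo_{1:k})^\top)\Lo}\le \sqrt{r}\cdot 16\frob{\No}$, which is where the $60\sqrt{r}\,\frob{\No}$ constant comes from. To repair your outline you would need to engage with the outer-loop/termination logic and replace the Davis--Kahan step by a gap-free argument of this kind.
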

\textbf{Remarks}:
\begin{itemize}
	\item	The theorem shows that up to $\sqrt{r}\frob{\No}$ error, recovered directions $U$ contains top $r$ principle directions of inliers. We do not optimize the constants in our proof. In fact, we obtain a stronger result in Theorem~\ref{thm:masterN} which for certain regime of noise $\No$ can lead to significantly better error bound. Note that when there is no noise i.e., $\No = 0$, we recover Theorem~\ref{thm:thmCln}.
	\item	The guarantee here can again be converted to a bound on subspace distance. For instance, for any $k \leq r$, we have $\frob{(I - \U\U^{\top})\Uo_{[k]}}\leq \left(60 \sqrt{r} \frob{\No}+ \epsilon\right)/\sigma_k(\Lo)$, where $\Uo_k$ denotes the top-$k$ left singular subspace of $\Lo$ and $\sigma_k(\Lo)$ denotes the $k^{\textrm{th}}$ largest singular value of $\Lo$.
	\item	The total runtime of the algorithm is $\order{ndr^2 \log \frac{10\|\Mo\|_2\cdot n}{\epsilon}}$. However, the outer loop over $k$ in Algorithm~\ref{alg:ncpca} can be replaced by a binary search for values of $k$ between $1$ and $r$. This reduces the runtime to $\order{ndr \log r\log \frac{10 \|\Mo\|_2\cdot n}{\epsilon}}$. See Algorithm~\ref{alg:ncpcab} for more details. 
\end{itemize}

\subsection{\orpcag -- Gaussian Noise}
We now consider the Gaussian noise setting. Here we observe $\Mo = \Do + \Co$, where $\Do$ is a near low rank matrix i.e., $\Do = \Lo + \No$ where $\Lo$ is a rank-$r$, $\mu$ incoherent matrix, while $\No$ is a Gaussian matrix with each entry sampled iid from $\mathcal{N}(0,\sigma^2)$. $\Co$ is again column sparse with at most an $\alpha$ fraction of the columns being non-zero.

\begin{theorem}[Gaussian Data]
	\label{thm:thmG}
Consider the setting mentioned above. Suppose $\alpha\leq \frac{1}{1024\mu^2 r}$. Then, Algorithm~\ref{alg:ncpcag} stops after at most $T=\alpha n$ iterations and returns a subspace $U$ such that:  
	\begin{equation*}
	\|(I - UU^\top) L^*\|_2\leq 4\sqrt{\log d}\|N^*\|_2.
	\end{equation*}
with probability at least $1-\delta$ as long as $n \geq \frac{16\mu^2r^2d}{c_1} \left[\log\lprp{\frac{1}{3\delta}} + d\log(80d)\right]$ for some absolute constants $c_1$ and $c_2$.
\end{theorem}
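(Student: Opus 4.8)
The plan is to first pass to a favorable event for the Gaussian matrix $\No$ and then carry out a deterministic analysis of Algorithm~\ref{alg:ncpcag} in the spirit of the proof of Theorem~\ref{thm:thmN}, but done in operator norm so as to gain the $\sqrt{d}/\sqrt{\log d}$ improvement.

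\emph{Step 1 (control of $\No$).} I would record the properties of $\No$ that hold \emph{simultaneously} with probability at least $1-\delta$: (i) $\twonorm{\No}\le c\sigma(\sqrt n+\sqrt d)$; (ii) a uniform column bound $\twonorm{\No_i}\le\sigma(\sqrt d+\sqrt{2\log(n/\delta)})$ for all $i\in[n]$, so that — since $n$ is polynomial in $d$ — every inlier column $\Do_i=\Lo_i+\No_i$ has length at most $\mu\sqrt{r/n}\,\sigma_1(\Lo)+O(\sigma\sqrt d)$; (iii) a restricted operator-norm bound $\max_{|T|\le\alpha n}\twonorm{\No_T}\le c\sigma\sqrt{\alpha n\log(1/\alpha)+d}$; and (iv) a uniform bound on $\max_{|T|\le\alpha n}\frob{W^\top\No_T}$ over all $d\times r$ orthonormal $W$ and all column subsets $T$. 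Items (iii)--(iv) are what force the sample requirement: they follow from a union bound over an $\varepsilon$-net of the sphere in $\R^d$ (metric entropy $\Theta(d\log(1/\varepsilon))$) and over the $\binom{n}{\alpha n}$ choices of $T$, combined with $\chi^2$-concentration, and taking $n\gtrsim\mu^2 r^2 d\,[\log(1/\delta)+d\log d]$ makes the net and combinatorial contributions negligible beside the signal terms. Their whole purpose is to let us treat the data-dependent objects the algorithm produces — the estimated subspaces $\Uot{t}$ and detected sets $S_t$ — as if they were fixed.

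\emph{Step 2 (deterministic core).} On this event I would prove two invariants by induction on the iteration $t$: (a) $S_t\subseteq\supp{\Co}$ (only genuine outliers are ever removed); (b) if the algorithm does not terminate at step $t$ then $|S_{t+1}|>|S_t|$. Invariant (a) is where Assumption~\ref{as:rank} enters: since the inlier right singular vectors are spread out, deleting up to $\alpha n$ columns perturbs the spectrum only mildly (a standard consequence of incoherence gives $\sigma_k(\Lo_{\setminus S})\ge\sqrt{1-\alpha\mu^2 r}\,\sigma_k(\Lo)$), and every inlier column has small projection both onto $\Uot{t}$ and onto its orthogonal complement, so neither hard-thresholding rule $\HT{\nThresh}$ nor $\widetilde{\HT{\zeta}}$ flags an inlier. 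Invariant (b) is the two-regime dichotomy from the introduction: a surviving outlier whose length exceeds the (now tightly concentrated, thanks to the Gaussian noise) inlier lengths is caught by the $\ell_2$-length rule, while a surviving outlier that is short but still biases some estimated singular direction has an anomalously large projection $\twonorm{(\Uot{t})^\top\Mo_i}$ and is caught by the other rule; if neither happens for any surviving outlier, the residual corruption is already negligible and the loop correctly stops. Since $|\supp{\Co}|\le\alpha n$, invariant (b) forces termination within $T\le\alpha n$ iterations.

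\emph{Step 3 (output error).} Let $S=S_T$, let $U$ be the returned subspace, $P=UU^\top$, and write $\Mo_{\setminus S}=\Lo_{\setminus S}+\No_{\setminus S}+E$ with $E=\Co_{\setminus S}$ the residual corruption. Since $U=\mathcal{SVD}_r(\Mo_{\setminus S})$, one has $\pperpu(\Mo_{\setminus S})=\Mo_{\setminus S}-\mathcal{SVD}_r(\Mo_{\setminus S})$, whose operator norm equals $\sigma_{r+1}(\Mo_{\setminus S})\le\twonorm{\No}+\twonorm{E}$ because $\Lo_{\setminus S}$ has rank $\le r$; hence $\twonorm{\pperpu(\Lo_{\setminus S})}\le 2\twonorm{\No}+2\twonorm{E}$. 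The removed inlier columns contribute $\twonorm{\pperpu(\Lo_S)}$, which I would control using $\Lo_S=\Uo\So(\Vo_S)^\top$ with $\twonorm{\Vo_S}^2\le|S|\mu^2 r/n\le\alpha\mu^2 r$, together with a per-direction Davis--Kahan estimate showing that the $k$-th left singular direction of $\Lo$, scaled by $\sigma_k(\Lo)$, lies within $O(\twonorm{\No}+\twonorm{E})$ of the range of $U$ (valid since $\sigma_r(\Lo_{\setminus S})\gtrsim\sigma_r(\Lo)$); in the regime $\sigma_1(\Lo)\le 4\sqrt{\log d}\,\twonorm{\No}$ this step is vacuous because $\twonorm{\pperpu(\Lo)}\le\sigma_1(\Lo)$ for free. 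Finally, for $\twonorm{E}$: each column of $E$ obeys the two threshold constraints that prevented its removal, and combining these with the column bound (ii) and the uniform bound (iv) applied to $T=\supp{E}$ inside $\twonorm{E}\le\twonorm{\pperpu(E)}+\twonorm{PE}$ yields $\twonorm{E}=O(\sqrt{\log d}\,\twonorm{\No})$ — the $\sqrt{\log d}$ is precisely the $\sqrt{\log(n/\delta)}$ of (ii) with $n$ polynomial in $d$. Assembling the three pieces and tracking constants gives $\twonorm{\pperpu(\Lo)}\le 4\sqrt{\log d}\,\twonorm{\No}$.

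\textbf{Main obstacle.} The delicate part is the uniform control of $\No$ over data-dependent subspaces and column subsets (items (iii)--(iv)), and, through it, the bound on $\twonorm{E}$: the naive estimate $\twonorm{E}\le(\text{max column length})\cdot\sqrt{\alpha n}$ loses a $\sqrt d$ factor, so one must exploit that a surviving corruption column has been squeezed by the estimated projections at \emph{every} iteration and therefore, up to the $\sqrt{\log d}$ noise scale, adds nothing to the operator norm beyond what $\No$ already contributes. Matching the quantifiers in the net argument to the claimed $n\gtrsim\mu^2 r^2 d(\log(1/\delta)+d\log d)$, rather than to something larger, is the other point requiring care.
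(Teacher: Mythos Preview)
Your proposal has two substantive gaps that would prevent it from going through as written.

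First, invariant (a) --- that $S_t \subseteq \supp{\Co}$, i.e.\ only genuine outliers are ever removed --- is false for Algorithm~\ref{alg:ncpcag}, and the paper does not attempt to establish it. The $\zeta_2$-step (line~12) fires whenever the set $\mathcal{I}$ of high-influence columns has size at least $\tfrac{24nc_1}{\mu^2 rd}$, and it then removes \emph{all} of $\mathcal{I}$. Because $\Uot{t}$ is data-dependent, the uniform bound of Lemma~\ref{lem:gauipConc} only guarantees that at most $\tfrac{12nc_1}{\mu^2 rd}$ inliers lie in $\mathcal{I}$, not zero; some inliers are therefore discarded. The paper's actual invariant (Invariant~\ref{inv:incMnt}, proved in Lemma~\ref{lem:invGau}) is correspondingly weaker: at least \emph{half} the points removed in each $\zeta_2$-step are outliers, and at most $\tfrac{3nc_2}{2\mu^2 r}$ inliers are ever removed by $\zeta_1$. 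Your termination bound $T\le\alpha n$ rests on (a) and so collapses with it; the paper's termination comes instead from the ``at least half are outliers'' clause.

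Second, your description of the thresholding criteria does not match the algorithm, and this feeds into a missing idea in Step~3. Algorithm~\ref{alg:ncpcag} does not threshold on raw column length or on $\twonorm{(\Uot{t})^\top \Mo_i}$; it thresholds on $\norm{\Lot{t}_i - \widehat{L}^{(t)}_i}$, the distance of the rank-$(r{+}1)$ projection of column $i$ from the ellipsoid $\mathcal{E}^{(t)}=\{\Uot{t}\Sigot{t} y:\norm{y}\le 2\mu\sqrt{r/n}\}$. Lemma~\ref{lem:clExpGau} is what links this distance to $\twonorm{\Pr{\Uot{t}}(\No_i)}$ and hence to the Gaussian concentration. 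At termination the paper does \emph{not} try to bound $\twonorm{E}$ directly; it introduces $Y=\mathrm{span}(\Pr{U}(L))$ and splits the surviving outliers $O$ into $O_l=O\cap A$ (still above the $\zeta_2$ level, but with $|A|\le\tfrac{24c_1 n}{\mu^2 rd}$ so tiny that the crude per-column bound $\zeta_1=O(\sigma\sqrt d)$ suffices) and $O_s=O\setminus A$ (each with ellipsoid-distance at most $\zeta_2=O(\sigma\sqrt{r\log d})$). Your Step~3 proposal ``each column of $E$ obeys the two threshold constraints'' without this cardinality/magnitude split only delivers the $\zeta_1$ bound per surviving column, which loses exactly the $\sqrt{d/\log d}$ factor the theorem is claiming.
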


\textbf{Remarks}:
\begin{itemize}
	\item	Data points coming from a low-dimensional subspace with additive Gaussian noise is a standard statistical model that is used to justify PCA. Though this can be seen as a special case of arbitrary noise model, we get a much tighter bound than that obtained from Theorem~\ref{thm:thmN}.
	\item	While Theorem~\ref{thm:thmN} gives an asymptotic error bound of $\frob{(I - UU^\top) L^*} \leq 60 \sqrt{r} \frob{\No}$, Theorem~\ref{thm:thmG} gives an asymptotic error bound of $\frob{(I - UU^\top) L^*} \leq 4  \twonorm{\No}$. Note that the right hand sides above refer to Frobenius and operator norms respectively.
	\item	The improvement mentioned above is obtained by carefully leveraging the fact that Gaussian random vectors are spread uniformly in all directions and that there is a small fraction of vectors which is correlated. However, in order to make this argument, we need $n = \order{d^2}$. It is an open problem to get rid of this assumption.
	\item	Note also that our result is tight in the sense that as $\sigma \rightarrow 0$, we recover the result of Theorem~\ref{thm:thmCln}. However, the running time of the algorithm is $O(n^2 d)$ which is significantly worse than that of \ncpcacl. We leave design/analysis of a more efficient algorithm that achieves similar error bounds as Theorem~\ref{thm:thmG} as an open problem. 
	\item 	We can obtain the above result even when each column in $\No$ is drawn from a sub-Gaussian distribution rather than each entry being iid $\mathcal{N} (0, \sigma^2)$.
\end{itemize}


\section{Outlier Robust PCA: Noiseless Setting}\label{sec:orpcacl}
In this section, we present our algorithm \ncpcacl(Algorithm~\ref{alg:ncpcacl}) that applies to the special case of noise-less data, i.e., when  $M^*=\Lo+\Co$, $\Lo$ is rank-$r$, $\mu$-incoherent matrix. While restrictive, this setting allows us to illustrate the main ideas behind our algorithm approach and the analysis techniques in a relatively simpler fashion.
\label{sec:algCl}

Recall that the goal is to estimate $\Uo$, the left singular vectors of $\Lo$. However, SVD of $\Mo$ can lead to singular vectors arbitrary far from $\Uo$, because a few column of $\Co$ can be so large that they can bias entire singular vectors in their direction. 

Our algorithm instead tries to exploit two key structural properties of the problem: sparsity of $\Co$ and incoherence of $\Lo$. Our algorithm maintains a column-sparse estimate $\Cot{t}$ of $\Co$. Each iteration of the algorithm computes a low-rank approximation of an estimate of the inliers $\Mo-\Cot{t}=\Lo+\Co-\Cot{t}$. Note that if $(I-\Uo(\Uo)^\top)\Co=(I-\Uo(\Uo)^\top)\Cot{t}$, then left singular vectors of $\Mo-\Cot{t}$ will be $\Uo$. 


Our next step finds {\em residual} length of each column $\Mo_i$ when projected on to the orthogonal subspace to $\Uot{t}$. If length of each outlier is smaller compared to the smallest singular value of $\Lo$, then using sparsity of $\Cot{t}$ and $\Co$, we can show that $\Uot{t}$ is "close" to $\Uo$ in all directions. So, the residual of some of the outliers will stand out and those columns can be removed. This is achieved by the hard-thresholding step 5, 8 of Algorithm~\ref{alg:ncpcacl}. 

A big challenge in this scheme is that if a column of the perturbation matrix $\Co-\Cot{t}$ is "very" long compared to smaller singular values of $\Lo$, then they can  perturb some directions of $\Uo$ significantly. This will lead to a failure of the above thresholding approach. However, in such a case, some of the columns of $\Co-\Cot{t}$ will be close to a few spurious singular vectors in $\Uot{t}$ (our current estimate of $\Uo$). Hence, projection of such outliers along $\Uot{t}$ will be inordinately long. On the other hand, due to incoherence of $\Lo$, inliers' projection along $\Uot{t}$ can be bounded in magnitude. So, we can safely threshold out certain outliers. Steps 6, 8 of Algorithm~\ref{alg:ncpcacl} perform this thresholding operation.  

In summary, our algorithm computes low-rank approximation of $\Mo-\Cot{t}$ and uses the obtained singular vectors $\Uot{t}$ to threshold out a few columns of $\Cot{t}$ to obtain next estimate $\Cot{t+1}$ of $\Co$. See Algorithm \ref{alg:ncpcacl} for a pseudo-code of our approach. 

{\em Time Complexity}: Note that the computationally most expensive operation in each iteration is that of SVD which requires $O(ndr)$ time. So, the overall time complexity of the algorithm is $O(ndr\cdot T)$. As we show in Section~\ref{sec:po}, as long as $\Co$ is column-sparse, $T\approx \log \frac{1}{\epsilon}$ suffices to obtain an $\epsilon$ approximation to $\Uo$. So, the overall complexity of the algorithm is $O(ndr\cdot \log\frac{20\|\Mo\|_2}{\epsilon})$. Note that typically SVD computation is approximate, while all three of our algorithms and analyses assumes exact SVD. However, extension of our analysis to allow for small additive error is straightforward and we ignore it in favor of simplicity and readability. 

{\em Parameters}: The algorithm requires an estimate of rank $r$ and threshold parameter $\rho$ which in turn depends on estimate of incoherence $\mu$ of $\Lo$. We propose to set these parameters via cross-validation. Note that setting rank to be any value larger than rank of $\Lo$ will lead to recovery of $\Uo$, as long as $\Co$ is sparse enough. Similarly, if estimation of $\mu$ is larger than incoherence of $\Lo$, then it only effects number of corrupted columns in $\Co$ that can be allowed. So, a simple cross-validation approach with appropriately chosen grid-size leads to recovery of $\Uo$ as long as $\Co$ is sparse enough (as specified in Theorem~\ref{thm:thmCln}). 
\begin{algorithm}[!h]
	\caption{Thresholding based Outlier Robust PCA ($\ncpcacl$)}
	\label{alg:ncpcacl}
	\begin{algorithmic}[1]
		\STATE \textbf{Input}: Data $\Mo \in \mathbb{R}^{d\times n}$, Target rank $r$, Threshold fraction $\nThresh$, Number of iterations $T$
		
		\STATE $\Ct[0] \leftarrow 0$
		
		\FOR{Iteration $t = 0$ to $t = T$}
		\STATE $[\Uot{t}, \Sigma^{(t)}, \Vot{t}] \leftarrow \mathcal{SVD}_r\lprp{\Mo - \Ct[t]};\; \Lt[t] \leftarrow \Uot{t} \Sigma^{(t)} (\Vot{t})^{\top}$ \hspace{1em}{\color{blue}\rlap{\smash{$\left.\begin{array}{@{}c@{}}\end{array} \right\}%
				{\small \begin{tabular}{c}Projection onto \\ space of \\ low rank matrices \end{tabular}}$}}}

%
		\STATE $R \leftarrow (I - \Uot{t}(\Uot{t})^\top)\Mo$ {\em /* Compute residual */}
		\STATE $E \leftarrow (\Sigma^{(t)})^{-1} (\Uot{t})^\top \Mo${\em /* Compute incoherence */}
		\STATE $\cs^{(t + 1)} \leftarrow \mathcal{HT}_\nThresh \lprp{R} \cup \mathcal{HT}_{\nThresh} \lprp{E} $ \hspace{14em}{\color{red}\rlap{\smash{$\left.\begin{array}{@{}c@{}}\\{}\\{}\\{}\end{array} \right\}%
				{\small \begin{tabular}{c}Projection onto \\ space of \\ column sparse \\ matrices \end{tabular}}$}}}
		\STATE $\Ct[t + 1] \leftarrow \Mo_S $\\{\em /* Threshold points with high coherence or high residual*/}
		\ENDFOR
		
		\STATE $[U, \Sigma, V] \leftarrow \mathcal{SVD}_r\lprp{\Mo - \Ct[T + 1]}$
		
		\STATE \textbf{Return: }$U$
	\end{algorithmic}
\end{algorithm}




\section{Outlier Robust PCA: General Noise}\label{sec:orpca}
In this section, we introduce our algorithm for the general case of Outlier Robust PCA with arbitrary inlier data $D^*=\Lo+\No$, i.e., the noise matrix $\No$ is arbitrary. Recall that the goal is to recover left singular vectors of $\Lo$. 
\label{sec:algGen}

Our algorithm for the general \orpca\ problem builds upon the \ncpcacl\ algorithm but with added complexity due to the presence of noise matrix $\No$. That is the algorithm alternately updates estimate of the outliers $\Cot{t}$ and the principal direction $\Uot{t}$ using two thresholding  operators along with SVD. 
However due to noise $\No$, our estimate of $\Uot{t}$ gets perturbed furthermore leading to arbitrary perturbation of the singular vectors of $\Uo$ corresponding to smaller eigenvalues of $\Lo$ and hence cannot be recovered. To alleviate this concern, our \ncpca\ algorithm proceeds via a pair of nested loops: 
\begin{enumerate}[leftmargin=*,labelindent=\widthof{\textbf{Outer Iteration on k:}}]
	\item[\textbf{Outer Iteration on k:}] The outer loop iterates over the rank-variable $k$ which represents the rank of the principal subspace we wish to estimate.
	\item[\textbf{Inner Iteration on t:}] The inner loop iteratively revises estimates of the principal subspace and a set of outliers until a stopping criteria is triggered.
\end{enumerate}
Intuitively, as in Algorithm~\ref{alg:ncpcacl}, each inner iteration of Algorithm~\ref{alg:ncpca} obtains a better estimate of $\Co$ and top-$k$ singular components of $\Lo$. That is, the $k$-th outer iteration after several of such inner iterations estimates $\Uo$ up to $\approx \sigma_k(\Lo)$. But when the noise $\|\No\|_F$ becomes comparable to the $k^{th}$ singular value of $\Lo$, then the algorithm terminates (Line 14, Algorithm~\ref{alg:ncpca}) as at that point it may not be possible to estimate the remaining singular vectors of $\Lo$. As we don't know $\|\No\|_F$ explicitly, we detect this event based on the number of data points which have a large influence on the estimated singular vectors (see lines 11, 12 of Algorithm~\ref{alg:ncpca}). 

Roughly, our stopping criterion allows us to make two statements regarding the termination of the algorithm:
\begin{enumerate}[leftmargin=*]
	\item When the algorithm terminates, the outlier columns that we have not thresholded will only have small influence on the estimated principal vectors. This is because all points with large influence will be thresholded before the estimate is computed.
	\item The algorithm will not terminate if $\sigma^*_k > > \frob{\No}$: The bound on $\frob{\No}$ ensures that not many inlier points can have large influence on estimate of the $k^{th}$ singular vector.
\end{enumerate}
By using the above two claims, our analysis shows that \ncpca\ recovers $\Uo$ up to $\sim \|\No\|_F$ error. 

{\em Time Complexity}: Time complexity of each inner iteration of \ncpca\ is $O(ndk)$. Hence, overall time complexity is $O(ndr^2)$, as $k$ can be as large as $r$. However, using a slightly more complicated algorithm and analysis (see Algorithm~\ref{alg:ncpcab}), we can search for appropriate $k$ using binary search, so the time complexity of the algorithm can be improved to $O(ndr\log r)$. 

{\em Parameter Estimation}: The algorithm requires $3$ parameters: rank $r$, threshold $\rho$ which depends on incoherence $\mu$ of $\Lo$ and expressivity parameter $\eta$. We can search for these parameters using a coarse-grid search as estimates of these parameters up to constants are enough for our algorithm to succeed albeit with a slightly stricter restriction (by constant factors) on the number of corrupted data points. 

        
        



\begin{algorithm}[!ht]
  \caption{Thresholding based Noisy Outlier Robust PCA ($\ncpcan$)}
  \label{alg:ncpca}
  \begin{algorithmic}[1]
    \STATE \textbf{Input}: Corrupted matrix $\Mo \in \mathbb{R}^{d\times n}$, Target rank $r$, Expressivity parameter $\thresh$, Threshold fraction $\nThresh$, Inner iterations $T$

    \FOR{$k = 1$ to $k = r$}
    \STATE $\Ct[0] \leftarrow 0$, $\tau \leftarrow false$
      \FOR{$t = 0$ to $t = T$}
        \STATE $[\Uot{t}, \Sigot{t}, \Vot{t}] \leftarrow \mathcal{SVD}_k\lprp{\Mo - \Ct[t]}$, $\Lt[t] \leftarrow \Uot{t}\Sigot{t}(\Vot{t})^\top$ \hspace{1em}{\color{blue}\rlap{\smash{$\left.\begin{array}{@{}c@{}}\\{}{}\end{array} \right\}%
        \begin{tabular}{c}Projection onto space of \\ low rank matrices \end{tabular}$}}}
        \vspace*{1.5em}
        \STATE $E \leftarrow (\Sigot{t})^{-1} (\Uot{t})^\top \Mo$ {\em /* Compute Incoherence */}
        \STATE $R \leftarrow (I - \Uot{t}(\Uot{t})^\top) \Mo$ {\em /* Compute residual */} 
        \hspace{5.7em}{\color{red}\rlap{\smash{$\left.\begin{array}{@{}c@{}}\\{}\\{}\\{}\end{array} \right\}%
        		\begin{tabular}{c}Projection onto space of \\ column sparse matrices \end{tabular}$}}}
        \STATE $\cs^{(t + 1)} \leftarrow \mathcal{HT}_{2\nThresh} \lprp{\Mo, E} \cup \mathcal{HT}_\nThresh \lprp{\Mo, R}$ 
        \STATE $\Ct[t + 1] \leftarrow \Mo_{\cs^{(t + 1)}}$

        \STATE $\nOut \leftarrow \abs{\{i: \norm{E_{i}} \geq \thresh\}}$ {\em /* Compute high incoherence points */}
        \STATE $\tau \leftarrow \tau \vee (\nOut \geq 2\nThresh n)$ {\em /* Check termination conditions */}
      \ENDFOR
      \IF{$\tau$}
        \STATE $break$
      \ENDIF
      \STATE $[U, \Sigma, V] \leftarrow \mathcal{SVD}_k\lprp{\Mo - \Ct[T + 1]}$
    \ENDFOR

    \STATE \textbf{Return: }$U$
  \end{algorithmic}
\end{algorithm}



\section{Outlier Robust PCA: Gaussian Noise}\label{sec:orpcag}
In this section, we present our algorithm for the special case of the Outlier Robust PCA problem when inlier points are generated using a standard Gaussian noise model. That is, when $D^*=\Lo+\No\in \R^{d\times n}$ where each entry of the noise matrix $\No$ is sampled i.i.d. from ${\cal N}(0, \sigma^2)$. Our result for arbitrary $\No$ (Theorem~\ref{thm:thmN}) estimates $\Uo$ up to $\sim \|\No\|_F$ error, which is $\Omega(\sigma \sqrt{dn})$ for Gaussian noise. However, using a slight variant of Algorithm~\ref{alg:ncpca} and exploiting the noise structure, Algorithm~\ref{alg:ncpcag} is able to estimate $\Uo$ up to $\sigma \sqrt{n\log d}$ error, which is better than the previous one by a factor of $\order{d/\log d}$. 


\begin{algorithm}[!ht]
	\caption{Thresholding based Outlier Robust PCA with Gaussian Noise ($\ncpcag$)}
	\label{alg:ncpcag}
	\begin{algorithmic}[1]
		\STATE \textbf{Input}: Corrupted matrix $\Mo \in \mathbb{R}^{d\times n}$, Target rank $r$, Incoherence Parameter $\mu$, Noise Level $\sigma$
		\STATE $\M \leftarrow \Mo$, $\tau \leftarrow true$
		\STATE $\zeta_1 \leftarrow \sigma \lprp{\frac{5}{4} \mu \sqrt{r} + d^{\frac{1}{2}} + 2 d^{\frac{1}{4}} \sqrt{\log\lprp{\frac{\mu^2 r}{c_2}}}}$, $\zeta_2 \leftarrow \sigma \sqrt{2r} \lprp{\frac{5}{4} \mu + 2 \sqrt{\log \lprp{\frac{\mu^2r^2d}{c_1}}}}$
		\STATE $\Ct[0] \leftarrow 0$, $\cs^{(0)} \leftarrow \{\}$, $\cs^{(-1)} \leftarrow \{0\}$, $t \leftarrow 0$
		
		\WHILE{$\cs^{(t)} \neq \cs^{(t - 1)}$}
		\STATE $[\Uot{t}, \Sigot{t}, \Vot{t}] \leftarrow \mathcal{SVD}_{r + 1} (\Mo - \Ct[t])$, $\ \ \Lt[t] \leftarrow \Uot{t}\Sigot{t}(\Vot{t})^\top$ \hspace{0em}{\color{blue}\rlap{\smash{$\left.\begin{array}{@{}c@{}}\\{}\end{array} \right\}%
				\begin{tabular}{c}Projection onto\\ space of \\ low rank matrices \end{tabular}$}}} 
		\vspace{1em}
		\STATE $\mathcal{E}^{(t)} \leftarrow \{x: x = \Uot{t}\Sigot{t} y \text{ for some } \norm{y} \leq 2\mu \sqrt{r / n}\}$ 
		
		\STATE $\widehat{L}^{(t)} \leftarrow \Pr{\mathcal{E}^{(t)}}(\Lt[t])$ {\em /* Projection onto incoherent matrices */}
		\STATE $\mathcal{I} \leftarrow \left\{i: \norm{\Lt[t]_{i} - \widehat{L}^{(t)}_{i}} > \zeta_2\right\}$ {\em /* Points with large influence */}
		
		\STATE $\cs^{(t + 1)} \leftarrow \cs^{(t)} \cup \widetilde{\HT{\zeta_1}} \lprp{\Lt[t] - \widehat{L}^{(t)}}$ \\ {\em /* Updating support of outliers */}
		
		\IF{$\abs{I} \geq \frac{24nc_1}{\mu^2dr}$}
		\STATE $\cs^{(t + 1)} \leftarrow \cs^{(t + 1)} \cup \widetilde{\HT{\zeta_2}} \lprp{\Lt[t] - \widehat{L}^{(t)}}$ \hspace{5em}{\color{red}\rlap{\smash{$\left.\begin{array}{@{}c@{}}\\{}\\{}\\{}\\{}\\{}\\{}\\{}\end{array} \right\}%
				\begin{tabular}{c}Projection onto space of \\ column sparse matrices \end{tabular}$}}} \\ {\em /* Update support of outliers */}
		\ENDIF
		\STATE $\Ct[t + 1] \leftarrow \Mo_{\cs^{(t + 1)}}$ \\ {\em /* Compute Sparse Projection */}
		\STATE $t \leftarrow t + 1$
		\ENDWHILE
		\STATE \textbf{Return: } U
	\end{algorithmic}
\end{algorithm}
%

At a high level the philosophy of our \ncpcag\  algorithm is similar to \ncpcacl, i.e., we iteratively revise estimate of $\Co$ and the top singular vectors $\Uo$ using SVD and thresholding. That is, we iteratively threshold columns of $\Mo$, that we estimate are corrupted. However, due to Gaussian noise structure our thresholding step is significantly different than that of \ncpcacl\ or \ncpca. 

In particular, the choice of our thresholding criteria (see lines 10, 12 of Algorithm~\ref{alg:ncpcag}) uses the following two insights:
\begin{enumerate}[leftmargin=*,labelindent=\labelwidth+\labelsep]
	\item Length based thresholding (with respect to $\zeta_1$---line 10 of Algorithm~\ref{alg:ncpcag}): This thresholding step is used to ensure that the noise in each data point is at most $\order{\sigma \sqrt{d}}$. As the length of random Gaussian vector is at most $\order{\sigma \sqrt{d}}$ with high probability, only a small number of inliers are thresholded in this step (Lemma~\ref{lem:gauLthConc}).
	\item Projection based thresholding (with respect to $\zeta_2$---line 12 of Algorithm~\ref{alg:ncpcag}): In this step, we threshold points that have large projection along the estimated principal subspace. Note that out of $n$ columns of $\No$, at most $\order{\frac{1}{\mu^2 r d}}$ fraction of points have projected lengths greater than $\order{\sigma \sqrt{\log (d)}}$ along any direction (Lemma~\ref{lem:gauipConc}). Thus, chances of a inliers being thresholded in this step is low. On the other hand, any outlier that heavily influences a principal direction will be thresholded by this step.
\end{enumerate}
Algorithm~\ref{alg:ncpcag} provides a detailed pseudo-code of \ncpcag. Step 6 of the algorithm computes \textbf{rank}-$\mathbf{(r+1)}$ SVD of the estimate of inlier matrix $\Mo-\Co$. Step 7 defines a set of vectors, whose projection onto singular vectors of $\Lot{t}$ is ``typical'' for an inlier which is composed of a low-dimensional point perturbed by Gaussian noise vector of length $O(\sigma \sqrt{d})$. 

This set is used in step 10 to threshold outliers using the hard-thresholding operator $\widetilde{\HT{\zeta}}$ as defined in~\eqref{eqn:httDef}. Next, the set $\mathcal{I}$ consists of points which have a large influence on the estimated principal components. In the absence of outliers, the size of this set is bounded by $\frac{12 n c_1}{\mu^2 d r}$ with high probability. A large deviation in the size of this set indicates the presence of of outliers and the entire set is thresholded.

Note on Approximate Computation: We would like to note that the projection operator defined in step~8 of the algorithm can be computed efficiently to arbitrary accuracy. A pseudo-code for computing the required projection can be found in Algorithm~\ref{alg:fpr}. Algorithm~\ref{alg:fpr} reduces the problem to the univariate problem of finding the root of a monotonically decreasing function in a bounded interval which can be found efficiently via binary search. For the sake of simplicity, we assume that the projection step and the $\mathcal{SVD}$ are computed exactly. Our analysis can be extended to the case where the projection and $\mathcal{SVD}$ are computed approximately with some added technical difficulty. 

\section{Proof Overview}\label{sec:po}
In this section, we provide a brief overview of our analysis for the three main results. 
\subsection{Noiseless Setting---Theorem~\ref{thm:thmCln}}

%
\label{sec:anCl}
In this section, we present the proof of Theorem~\ref{thm:thmCln}.
Recall that we are given $\Mo = \Do + \Co$, where $\Do=\Lo$ is a rank-$r$, $\mu$-incoherent matrix and $\Co$ has at most a fraction of $\rho$ non-zero columns. We can assume with out loss of generality that $\Do$ and $\Co$ have disjoint column supports as we can rewrite $\Mo_i$, for $i\in \supp{\Co}$, as $\Mo_i = \Do_i+\Co_i=0+(\Co_i+\Do_i)$ thus absorbing $\Do_i$ in $\Co_i$ itself. 

Our proof consists of three main steps. Given any set of columns $S$ and letting $[U_{\setminus S},\Sigma_{\setminus S}, V_{\setminus S}]$ be the top-$r$ SVD of $\Mo_{\setminus S}$, we establish the following:
\begin{enumerate}[leftmargin=*,labelindent=\labelwidth+\labelsep]
	\item[\textbf{Step 1:}] Every non-zero column of $\Do$ has significantly smaller residual when projected onto subspace orthogonal to $U_{\setminus S}$ than the norm of corrupted columns of $\Mo_{\setminus S}$ (Lemma~\ref{lem:clResCl}), so its likelihood of being thresholded (Line 6, 8 of Algorithm~\ref{alg:ncpcacl}) is small, 
	\item[\textbf{Step 2:}] Every non-zero column of $\Do$ has small incoherence with respect to $[U_{\setminus S},\Sigma_{\setminus S}, V_{\setminus S}]$ (Lemma~\ref{lem:clExpCl}), i.e., its projection onto $U_{\setminus S}$ cannot be ``too large''. Hence, its likelihood of being thresholded (Line 7,8 of Algorithm~\ref{alg:ncpcacl}) is also small, 
	\item[\textbf{Step 3:}] Any non-zero column of $\Co$ which has small residual and incoherence compared to those of a non-zero column of $\Do$ and hence won't be thresholded by Algorithm~\ref{alg:ncpcacl}, has small residual when projected onto $\Uo$ . That is, the column itself is close to subspace spanned by $\Uo$ and hence does not effect estimation of $\Uo$ (Proof of Theorem~\ref{thm:thmCln}).
\end{enumerate}
That is, either a corrupted column will be thresholded or it is close to $\Uo$ while inliers ($\Do$) have little likelihood of being thresholded (step 1,2 above).
We now present the formal statements and their proofs. We start with two lemmata establishing Steps 1,2 above. Detailed proofs of the lemmata are given in Appendix~\ref{sec:pfclResCl} and \ref{sec:pfclExpCl}, respectively. 

\begin{lemma}
	\label{lem:clResCl}
Consider the setting of Theorem~\ref{thm:thmCln}.
Let $S \subset [n]$ denote a subset of columns of $\Mo$ such that $\abs{S} \leq 2 \nThresh n$. Let $\MoS$ ($\LoS$) be obtained from $\Mo$ ($\Lo$) by setting the columns corresponding to indices specified in $S$ to $0$. Let $\US \SigS (\VS)^\top$ ($\UoS \SigoS (\VoS)^\top$) be the rank-$r\ \mathcal{SVD}$ of $\MoS$ ($\LoS$), then $\forall i$:
	\begin{equation*}
		\norm{(I - \US (\US)^\top) \Lo_{i}} \leq \frac{33}{32} \mu \sqrt{\frac{r}{n}} \norm{(I - \Uo {(\Uo)}^\top) \MoS}
	\end{equation*}
\end{lemma}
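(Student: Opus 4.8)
The goal is to bound the residual of every inlier column $\Lo_i$ when projected away from the subspace $\US$ (the top-$r$ left singular space of the "de-corrupted" matrix $\MoS$), in terms of the operator norm of the residual of the \emph{whole} matrix $\MoS$ projected away from the \emph{true} subspace $\Uo$. The natural route is to pass through the true subspace $\Uo$ (equivalently $\UoS$, since $\Lo$ is rank-$r$ so removing columns does not change the column space — $\UoS$ spans a subspace of the column space of $\Uo$, and in fact $P^{\Uo}\Lo_{\setminus S} = \Lo_{\setminus S}$). The incoherence factor $\mu\sqrt{r/n}$ strongly suggests the argument goes: $\Lo_i = \Lo_{\setminus S} V_{\setminus S}^{*}(\cdot)$, so $\Lo_i$ is a combination of columns of $\Lo_{\setminus S}$ with coefficient vector of norm $\le \mu\sqrt{r/n}\cdot(\text{something like }\|\Sigma^*_{\setminus S}\|/\sigma_r)$; hence $\|(I-\US\US^\top)\Lo_i\| \le \|(I-\US\US^\top)\Lo_{\setminus S}\|\cdot(\text{coefficient bound})$.

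\textbf{Step 1: Relate $\Lo_i$ to $\Lo_{\setminus S}$ via incoherence.} Since $\Lo = \Uo\Sigma^*(V^*)^\top$, for $i \notin S$ we have $\Lo_i = \Uo\Sigma^*(V^*_i)^\top$ where $\|V^*_i\| \le \mu\sqrt{r/n}$ by Assumption~\ref{as:rank}. More usefully, write $\Lo_i = \Lo_{\setminus S}\, w_i$ where $w_i = (V^*_{\setminus S})(V^*_{\setminus S})^{+}\cdots$ — concretely, $\Lo_{\setminus S} = \Uo\Sigma^*(V^*_{\setminus S})^\top$ (abusing notation for the restricted rows of $V^*$), and since $|S| \le 2\rho n \le \frac{1}{2}n$ and $V^*$ is incoherent, $V^*_{\setminus S}$ has smallest singular value bounded below (roughly $\sigma_r(V^*_{\setminus S}) \ge \sqrt{1 - |S|\mu^2 r/n} \ge \sqrt{3/4}$ by a standard incoherence-implies-well-conditioned-submatrix argument). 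Therefore $\Lo_i = \Lo_{\setminus S}\,(V^*_{\setminus S}^\top)^{+}(V^*_i)^\top$, and the coefficient vector has norm at most $\|(V^*_{\setminus S})^{+}\|\cdot\|V^*_i\| \le \frac{1}{\sqrt{3/4}}\cdot\mu\sqrt{r/n} \le \frac{33}{32}\mu\sqrt{r/n}$ after chasing constants (this is where the precise $33/32$ will come from, using $\rho = \frac{1}{128\mu^2 r}$ so $|S|\mu^2 r/n \le 2\rho\mu^2 r = \frac{1}{64}$).

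\textbf{Step 2: Pass the projection through.} Now $\|(I-\US\US^\top)\Lo_i\| = \|(I-\US\US^\top)\Lo_{\setminus S}\,c_i\| \le \|(I-\US\US^\top)\Lo_{\setminus S}\|_2 \cdot \|c_i\| \le \frac{33}{32}\mu\sqrt{r/n}\cdot\|(I-\US\US^\top)\Lo_{\setminus S}\|_2$. It remains to show $\|(I-\US\US^\top)\Lo_{\setminus S}\|_2 \le \|(I-\Uo\Uo^\top)\MoS\|_2$. Since $\US$ is the top-$r$ left singular space of $\MoS$, the projection $(I-\US\US^\top)\MoS$ is the \emph{best} rank-$r$ truncation error in operator norm, so $\|(I-\US\US^\top)\MoS\|_2 \le \|(I-\Uo\Uo^\top)\MoS\|_2$ (since $\Uo$ is just \emph{some} rank-$r$ subspace). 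Then $\|(I-\US\US^\top)\Lo_{\setminus S}\|_2 \le \|(I-\US\US^\top)\MoS\|_2 + \|(I-\US\US^\top)\CoS\|_2$ by triangle inequality; but $\CoS = \MoS - \LoS$ and on $S^c$ the corruption is... wait — here I must be careful: $\MoS$ restricted to $S^c$ still contains corrupted columns (those in $\supp{\Co}\setminus S$). Cleaner: $\Lo_{\setminus S} = \MoS - \Co_{\setminus S}$, so $\|(I-\US\US^\top)\Lo_{\setminus S}\|_2 \le \|(I-\US\US^\top)\MoS\|_2 + \|\Co_{\setminus S}\|_2$, and I'd want $\|\Co_{\setminus S}\|_2$ controlled — but this reintroduces a corruption term.

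\textbf{The main obstacle, and the fix.} The genuinely delicate point is avoiding circularity: the right-hand side is $\|(I-\Uo\Uo^\top)\MoS\|_2$, which already incorporates the corruption's component orthogonal to $\Uo$, so I should route everything through $\Uo$ rather than splitting off $\|\Co\|_2$ crudely. The clean argument: $(I-\Uo\Uo^\top)\Lo_{\setminus S} = 0$ since $\Lo_{\setminus S}$ lies in the column space of $\Uo$; hence $(I-\Uo\Uo^\top)\MoS = (I-\Uo\Uo^\top)\Co_{\setminus S}$. So I need $\|(I-\US\US^\top)\Lo_{\setminus S}\|_2$ in terms of $\|(I-\Uo\Uo^\top)\Co_{\setminus S}\|_2$. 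This is a subspace-perturbation estimate: $\US$ is close to $\Uo$ because $\MoS = \Lo_{\setminus S} + \Co_{\setminus S}$ is a rank-$r$ matrix plus a perturbation whose relevant norm is $\|(I-\Uo\Uo^\top)\Co_{\setminus S}\|_2$, and a Wedin/Davis–Kahan-type bound (or a direct argument exploiting that $\Lo_{\setminus S}$ is \emph{exactly} rank $r$) gives $\|(I-\US\US^\top)\Uo\|_2$ small, hence $\|(I-\US\US^\top)\Lo_{\setminus S}\|_2 = \|(I-\US\US^\top)\Uo\Sigma^*(V^*_{\setminus S})^\top\|_2 \le \|(I-\US\US^\top)\Uo\|_2\cdot\|\Lo\|_2$, but we want the bound proportional to $\|(I-\Uo\Uo^\top)\MoS\|_2$, not to $\|\Lo\|_2$. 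I expect the actual proof instead uses the elementary fact that for the top-$r$ SVD, $\|(I-\US\US^\top)\Lo_{\setminus S}\|_2 \le 2\|(I-\US\US^\top)\MoS\|_2 \le 2\|(I-\Uo\Uo^\top)\MoS\|_2$ via $(I-\US\US^\top)\Lo_{\setminus S} = (I-\US\US^\top)\MoS - (I-\US\US^\top)\Co_{\setminus S}$ combined with $\|(I-\US\US^\top)\Co_{\setminus S}\|_2 \le \|(I-\US\US^\top)\MoS\|_2 + \|(I-\US\US^\top)\Lo_{\setminus S}\|_2$ and then noting that since $\Lo_{\setminus S}$ is rank $\le r$ and $\US\US^\top$ is the best rank-$r$ projector for $\MoS$, a more careful SVD inequality collapses the constant — and then absorbing the factor $2$ or similar into the constant $\tfrac{33}{32}$ by tightening $\rho$. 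So in the writeup I would: (a) prove the well-conditioned-submatrix bound on $V^*_{\setminus S}$; (b) derive $\Lo_i = \Lo_{\setminus S} c_i$ with $\|c_i\| \le (1+o(1))\mu\sqrt{r/n}$; (c) prove $\|(I-\US\US^\top)\Lo_{\setminus S}\|_2 \le \|(I-\Uo\Uo^\top)\MoS\|_2$ up to the constant via the best-rank-$r$-approximation property plus a short SVD perturbation lemma; (d) multiply. I expect step (c), controlling the SVD perturbation without circular dependence on $\|\Co\|_2$, to be the crux.
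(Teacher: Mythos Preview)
Your Steps 1--2 and the final best-rank-$r$ inequality are exactly the paper's argument: the paper invokes Lemma~\ref{lem:incPres} (which is precisely your well-conditioned-submatrix observation) to get $\Lo_i = \UoS\SigoS w_i$ with $\|w_i\|\le \tfrac{33}{32}\mu\sqrt{r/n}$, hence
\[
\|(I-\US\US^\top)\Lo_i\| \le \tfrac{33}{32}\mu\sqrt{\tfrac{r}{n}}\,\|(I-\US\US^\top)\LoS\|,
\]
and then uses that $\US$ is the top-$r$ space of $\MoS$ to replace $\US$ by $\Uo$ at the end.

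The place where you get stuck --- bounding $\|(I-\US\US^\top)\LoS\|$ by $\|(I-\US\US^\top)\MoS\|$ --- is in fact a one-line step, and the reason you are missing it is that you are not using the disjoint-support reduction stated at the start of Section~\ref{sec:anCl}: without loss of generality $\Lo$ and $\Co$ have \emph{disjoint} column supports (absorb $\Lo_i$ into $\Co_i$ for $i\in\supp{\Co}$). Under this reduction, for every column $j\notin S$ either $\Mo_j=\Lo_j$ or $\Lo_j=0$; in other words $\LoS$ is obtained from $\MoS$ by zeroing out the columns indexed by $\supp{\Co}\setminus S$. Consequently $(I-\US\US^\top)\LoS$ is literally a column-submatrix of $(I-\US\US^\top)\MoS$, and
\[
\|(I-\US\US^\top)\LoS\| \;\le\; \|(I-\US\US^\top)\MoS\|
\]
immediately. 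No Wedin/Davis--Kahan argument, no control of $\|\Co_{\setminus S}\|$, and no extra constant to absorb are needed. Once you insert this observation your proof is complete and identical to the paper's.
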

\begin{lemma}
	\label{lem:clExpCl}
Under the setting of Lemma~\ref{lem:clResCl}, we have for every $i$:
	\begin{equation*}
	\norm{\SigS^{-1} \US^\top \Lo_{i}} \leq \frac{33}{32} \mu \sqrt{\frac{r}{n}}.
	\end{equation*}
\end{lemma}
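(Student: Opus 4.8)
\medskip
\noindent\textbf{Proof proposal for Lemma~\ref{lem:clExpCl}.} The plan is to show that $\SigS^{-1}\US^{\top}\Lo_{i}$ is in fact \emph{equal} to $\VS^{\top}\beta$ for a well-chosen coefficient vector $\beta\in\R^{n}$, and that $\norm{\beta}\le\sqrt{64/63}\,\mu\sqrt{r/n}\le\tfrac{33}{32}\mu\sqrt{r/n}$; since $\norm{\VS}=1$, this gives the claim at once. The vector $\beta$ will be chosen so that it (a) rebuilds $\Lo_{i}$ out of the \emph{surviving} columns of $\Lo$, i.e.\ $\LoS\beta=\Lo_{i}$, and (b) is supported off $\supp{\Co}$, so that the outlier block of $\MoS$ contributes nothing: $\CoS\beta=0$. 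This parallels the strategy behind Lemma~\ref{lem:clResCl}, and in fact the same $\beta$ can be used to reprove that lemma.

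First I would record the conditioning estimate coming from incoherence. Write $\Lo=\Uo\Sigma^{*}(\Vo)^{\top}$, and let $(\Vo^{\top})_{S}$ denote $\Vo^{\top}$ with the columns outside $S$ zeroed (so $\Lo_{S}=\Uo\Sigma^{*}(\Vo^{\top})_{S}$), and $(\Vo^{\top})_{\setminus S}:=\Vo^{\top}-(\Vo^{\top})_{S}$ (so $\LoS=\Uo\Sigma^{*}(\Vo^{\top})_{\setminus S}$). Because these two matrices have disjoint column supports, $(\Vo^{\top})_{\setminus S}\big((\Vo^{\top})_{\setminus S}\big)^{\top}=I_{r}-(\Vo^{\top})_{S}\big((\Vo^{\top})_{S}\big)^{\top}$, and by Assumption~\ref{as:rank} together with $\abs{S}\le 2\nThresh n$ and $\nThresh=\tfrac{1}{128\mu^{2}r}$,
\[
\norm{(\Vo^{\top})_{S}\big((\Vo^{\top})_{S}\big)^{\top}}\ \le\ \sum_{j\in S}\twonorm{e_{j}^{\top}\Vo}^{2}\ \le\ 2\nThresh n\cdot\frac{\mu^{2}r}{n}\ =\ \frac{1}{64}.
\]
Hence $(\Vo^{\top})_{\setminus S}$ has full row rank, $(\Vo^{\top})_{\setminus S}\big((\Vo^{\top})_{\setminus S}\big)^{\dagger}=I_{r}$, and $\norm{\big((\Vo^{\top})_{\setminus S}\big)^{\dagger}}\le\sqrt{64/63}$. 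Moreover $\MoS=\LoS+\CoS$ with $\LoS$ of rank $r$, so $\MoS$ has rank at least $r$, $\SigS$ is invertible, and $\SigS^{-1}\US^{\top}\MoS=\VS^{\top}$ (multiply the rank-$r$ SVD of $\MoS$ on the left by $\SigS^{-1}\US^{\top}$ and use $\US^{\top}\US=I_{r}$, which also kills the residual $(I-\US\US^{\top})\MoS$).

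Next, fix $i$ and set $\beta:=\big((\Vo^{\top})_{\setminus S}\big)^{\dagger}\,\Vo^{\top}e_{i}\in\R^{n}$ (if $i\in\supp{\Co}$ then $\Lo_{i}=0$ and there is nothing to prove, so assume $i\notin\supp{\Co}$). Then $\LoS\beta=\Uo\Sigma^{*}(\Vo^{\top})_{\setminus S}\big((\Vo^{\top})_{\setminus S}\big)^{\dagger}\Vo^{\top}e_{i}=\Uo\Sigma^{*}\Vo^{\top}e_{i}=\Lo_{i}$, the key point being that the (possibly ill-conditioned) factor $\Sigma^{*}$ passes through untouched, so no condition number of $\Lo$ ever enters. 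Also $\norm{\beta}\le\norm{\big((\Vo^{\top})_{\setminus S}\big)^{\dagger}}\cdot\twonorm{e_{i}^{\top}\Vo}\le\sqrt{64/63}\,\mu\sqrt{r/n}$ by incoherence, and $\beta$ lies in the column space of $\big((\Vo^{\top})_{\setminus S}\big)^{\top}$, i.e.\ of $\Vo$ with the rows indexed by $S$ zeroed; since $\Lo$ and $\Co$ have disjoint column supports (WLOG, as in the proof overview), the rows of $\Vo$ indexed by $\supp{\Co}$ vanish, so $\beta$ is supported on $[n]\setminus(S\cup\supp{\Co})$, and in particular $\CoS\beta=0$.

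Putting the pieces together,
\[
\SigS^{-1}\US^{\top}\Lo_{i}\ =\ \SigS^{-1}\US^{\top}\LoS\beta\ =\ \SigS^{-1}\US^{\top}\big(\MoS-\CoS\big)\beta\ =\ \SigS^{-1}\US^{\top}\MoS\beta\ =\ \VS^{\top}\beta,
\]
so that $\norm{\SigS^{-1}\US^{\top}\Lo_{i}}=\norm{\VS^{\top}\beta}\le\norm{\beta}\le\sqrt{64/63}\,\mu\sqrt{r/n}\le\tfrac{33}{32}\mu\sqrt{r/n}$, using $\norm{\VS}=1$ and $\sqrt{64/63}\le 33/32$. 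The one genuinely non-obvious step — and the one I expect to be the crux — is the choice of $\beta$ inside the row space of $\LoS$ (rather than, say, $\beta=e_{i}$): this is exactly what simultaneously makes the outlier block $\CoS$ disappear and keeps $\norm{\beta}$ free of $\sigma_{1}(\Lo)/\sigma_{r}(\Lo)$. A cruder route, e.g.\ bounding $\norm{\SigS^{-1}\US^{\top}\Lo_{i}}\le\norm{\SigS^{-1}\US^{\top}\Uo\Sigma^{*}}\cdot\twonorm{e_i^\top\Vo}$ and then estimating the operator norm, fails, since it reintroduces both $\norm{\CoS}$ (which may be arbitrarily large) and the condition number of $\Lo$.
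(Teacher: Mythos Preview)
Your proof is correct and follows essentially the same approach as the paper's. The paper packages the key step---that each $\Lo_{i}$ can be written as $\UoS\SigoS w$ with $\norm{w}\le\tfrac{33}{32}\mu\sqrt{r/n}$---into the auxiliary Lemma~\ref{lem:incPres} and then bounds $\norm{\SigS^{-1}\US^{\top}\UoS\SigoS}\le\norm{\SigS^{-1}\US^{\top}\LoS}\le\norm{\SigS^{-1}\US^{\top}\MoS}=1$ using the disjoint-support structure; your version reproves the needed incoherence-preservation inline, works with the explicit $\beta\in\R^{n}$ (essentially $\VoS w$), and arrives at the exact identity $\SigS^{-1}\US^{\top}\Lo_{i}=\VS^{\top}\beta$ rather than an operator-norm inequality, but the underlying mechanism---small-coefficient reconstruction of $\Lo_{i}$ from $\LoS$, the SVD identity $\SigS^{-1}\US^{\top}\MoS=\VS^{\top}$, and elimination of $\CoS$ via disjoint supports---is the same.
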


We now present the proof of Theorem \ref{thm:thmCln} where we illustrate Step 3:
\begin{proof}
	We will start by showing the quantity $\frob{(I - U^*(U^*)^\top) \Mt[t+1]}$ decreases at a geometric rate, where $\Mt[t+1]=\Mo-\Cot{t+1}$. Let $\Qt{t}$ denote the columns of $\Co$ that are not thresholded in iteration $t$. Also let $\St{t}$ denote the columns of $\Lo$ that are thresholded in iteration $t$. Let $\Ltt[t+1] \coloneqq \Lo_{\setminus \St{t}}$, $\Ctt[t+1] \coloneqq \Co_{\Qt{t}}$, and $P_{\perp}^{\U}(M)=(I-\U(\U)^\top)M$. Then, we have:
	\begin{align}
		&\frob{P_{\perp}^{\Uo}(\Mt[t+1])}^2  = \frob{P_{\perp}^{\Uo}(\Ltt[t+1] + \Ctt[t+1])} ^2 = \frob{P_{\perp}^{\Uo}(\Ctt[t+1])}^2 \notag\\&= \sum\limits_{j \in \Qt{t}} \norm{ \pperpuo(\Uot{t} \Sigot{t} W^{(t )}_j + R^{(t )}_j)}^2 
		 \leq 2 \sum\limits_{j \in \Qt{t}} \norm{\pperpuo(\Uot{t}) \Sigot{t}W^{(t)}_j}^2 + \norm{R^{(t)}_j}^2,\label{eq:cl1}
	\end{align}
	where $W^{(t)}_j=(\Sigot{t})^{-1}(\Uot{t})^T\Co_j$ and $R^{(t)}_j=\pperput(\Co_j)$, $\forall j\in \Qt{t}$. 
	The last inequality follows from triangle inequality and the fact that $(a + b)^2 \leq 2(a^2 + b^2)$. 

	Recall, that we threshold a particular column $l$ in iteration $t$ based on $\norm{\pperput(\Mo_{l})}$ and $\norm{(\Sigot{t})^{-1} (\Uot{t})^\top \Mo_{l}}$. For a particular $j \in \St{t}$ that wasn't thresholded in iteration $t$, we know that there exists a column $i_j$ such that $\norm{(\Sigot{t})^{-1} (\Uot{t})^\top \Lo_{i_j}} \geq \norm{(\Sigot{t})^{-1} (\Uot{t})^\top \Co_{j}}$. Similarly, there exists a column $k_j$ such that $\norm{\pperput(\Lo_{k_j})} \geq \norm{\pperput( \Co_{j})}$. From Lemmas \ref{lem:clExpCl} and \ref{lem:clResCl}, we have:
	\begin{equation}\label{eq:cl2}
		\norm{W^{(t)}_j} \leq \frac{33}{32} \mu \sqrt{\frac{r}{n}} \qquad \norm{R^{(t)}_j} \leq \frac{33}{32} \mu \sqrt{\frac{r}{n}} \norm{\pperpuo(\Mt[t])}
	\end{equation}
	Using \eqref{eq:cl1} and \eqref{eq:cl2}, we have: 
	\begin{multline*}
		\frob{P_{\perp}^{\Uo}(\Mt[t+1])}^2\leq 2 \sum\limits_{j \in \St{t}} \lprp{\frac{33}{32}}^2 \mu^2 \frac{r}{n} \norm{\pperpuo(\Uot{t})\Sigot{t}}^2 + \lprp{\frac{33}{32}}^2 \mu^2 \frac{r}{n} \norm{\pperpuo(\Mt[t])}^2\\ 
		\leq 4 \cdot \frac{9}{8} \cdot \frac{\mu^2 r}{n}\cdot \rho n \cdot \|\pperpuo(\Mt[t])\|^2\leq \frac{1}{4} \norm{\pperpuo(\Mt[t])}^2, 
	\end{multline*}
	where second last inequality follows from $|\St{t}|\leq \rho n$ and the last inequality follows from $\rho\leq \alpha \leq \frac{1}{128 \mu^2 r}$. 
	By recursively applying the above inequality, we obtain:
	\begin{equation}\label{eq:cl3}
		\frob{\pperpuo( \Mt[T + 1])} \leq \frac{\epsilon}{20n}.
	\end{equation}
Also, note that using variational characterization of SVD, we have $\|\pperpu(\Mt[T+1])\|_F\leq \frob{\pperpuo( \Mt[T + 1])}$. Theorem now follows from the following argument: 
	\begin{multline*}
		\frob{\pperpu(\Lo)}^2 = \frob{\pperpu(\Ltt[T + 1])}^2 + \sum\limits_{i \in \St{T}} \norm{\pperpu(\Lo_{i})}^2
		\leq \frob{\pperpu(\Mt[T + 1])}^2 + \sum\limits_{i \in \St{T}} \frac{33^2}{32^2} \mu^2 \frac{r}{n} \norm{\pperpu(\Ltt[T + 1])}^2\\
		\leq \frob{\pperpu(\Mt[T + 1])}^2 + 2\nThresh n \lprp{\frac{33}{32}}^2 \mu^2 \frac{r}{n} \frob{\pperpu(\Mt[T + 1])}^2\leq \frac{\epsilon}{10n},
	\end{multline*}
	where the first inequality follows from Lemma~\ref{lem:incPres} and using $\Mt[T+1]=\Ltt[T+1]+\Ctt[T+1]$, and the fact that $\Ltt[T+1]$ and $\Ctt[T+1]$ have different support. The second inequality follows from the fact that at most $2 \rho\cdot n$ points can be thresholded and then using \eqref{eq:cl3}. 
\end{proof}

\subsection{Arbitrary Noise---Theorem~\ref{thm:thmN}}
\label{sec:anGen1}
We now briefly discuss the proof of Theorem~\ref{thm:thmN}. In fact, we prove a stronger result: 
 \begin{theorem}
 	\label{thm:masterN}
 	Let $\Mo = \Lo + \Co + \No$ such that $\Lo$ satisfies Assumption~\ref{as:rank} and $\Co$ has column sparsity $\alpha \leq \frac{1}{128\mu^2 r}$. Furthermore, suppose that $\frob{\No} \leq \frac{\sigma^*_k}{16}$ for some $k \in [r]$. Then, Algorithm \ref{alg:ncpcacl} run with $\nThresh = \frac{1}{128\mu^2 r}$ and $\thresh$ set to $2\mu \sqrt{\frac{r}{n}}$ with $T=\log \frac{20\|\Mo\|_2\cdot n}{\epsilon}$, returns a subspace $U$ such that:
 	\begin{equation*}
 		\frob{(I - UU^\top) \Lo} \leq  3\frob{(I - U^*_k (U^*_k)^\top) \Lo} + 9 \frob{\No} + \frac{\epsilon}{10n}.
 	\end{equation*}
 \end{theorem}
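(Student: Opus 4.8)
The plan is to mimic the three-step structure of the noiseless proof, but now carrying the noise matrix $\No$ as an additive error term throughout. As in the noiseless case, write $\Mo = \Lo + \Co + \No$ and assume WLOG that $\Co$ and $\Lo+\No$ have disjoint column supports. The key quantity to track is $\frob{\pperpuo[k]{(\Mt[t])}}$ where $\Mt[t] = \Mo - \Cot{t}$ and $\Uo_k$ is the top-$k$ left singular subspace of $\Lo$; I will show this contracts geometrically until it hits a floor of order $\frob{(I-\Uo_k(\Uo_k)^\top)\Lo} + \frob{\No}$. First I would re-derive noisy analogues of Lemmas~\ref{lem:clResCl} and~\ref{lem:clExpCl}: for any set $S$ with $|S|\le 2\nThresh n$, letting $\US\SigS(\VS)^\top$ be the rank-$k$ SVD of $\MoS$, every inlier column satisfies $\norm{(I-\US(\US)^\top)(\Lo_i+\No_i)} \lesssim \mu\sqrt{r/n}\,\norm{(I-\Uo_k(\Uo_k)^\top)\MoS} + \text{(noise term)}$ and $\norm{\SigS^{-1}(\US)^\top(\Lo_i + \No_i)} \lesssim \mu\sqrt{r/n}$, the latter essentially because the expressivity threshold $\thresh = 2\mu\sqrt{r/n}$ is set to absorb the incoherence of $\Lo$ plus the projection of $\No_i$. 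Here the condition $\frob{\No}\le \sigma_k^*/16$ is exactly what keeps $\SigS$ (hence $\SigS^{-1}$) well-conditioned after removing $O(\nThresh n)$ columns, via a Weyl/Wedin-type perturbation bound.

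Next, following Step 3 of the noiseless proof, I would bound $\frob{\pperpuo[k]{(\Mt[t+1])}}^2$. Decompose $\Mt[t+1] = \Ltt[t+1] + \Ctt[t+1] + \Ntt[t+1]$ where $\Ltt[t+1], \Ntt[t+1]$ are $\Lo,\No$ restricted to inlier columns not thresholded, and $\Ctt[t+1] = \Co_{\Qt{t}}$ are the surviving outlier columns. The inlier part contributes $\frob{\pperpuo[k]{(\Lo)}} + \frob{\No}$ to the error. For each surviving outlier column $j\in\Qt{t}$, the thresholding guarantee says there are inlier columns dominating it in both the residual and the incoherence statistic, so by the noisy lemmas $\norm{(\Sigot{t})^{-1}(\Uot{t})^\top\Co_j} \lesssim \mu\sqrt{r/n}$ and $\norm{\pperput{(\Co_j)}} \lesssim \mu\sqrt{r/n}\,\frob{\pperpuo[k]{(\Mt[t])}} + \text{noise}$. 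Summing over $\le \nThresh n$ such columns and using $\nThresh \le 1/(128\mu^2 r)$ gives a contraction factor $1/4$ on the outlier-driven part, plus additive terms of order $\frob{\pperpuo[k]{(\Lo)}}$ and $\frob{\No}$. Iterating the recursion $a_{t+1}^2 \le \tfrac14 a_t^2 + b^2$ yields $\frob{\pperpuo[k]{(\Mt[T+1])}} \lesssim \frob{\pperpuo[k]{(\Lo)}} + \frob{\No} + \epsilon/(20n)$ after $T = \log(20\|\Mo\|_2 n/\epsilon)$ iterations. Finally, as in the noiseless proof, I use the variational characterization $\frob{\pperpu(\Mt[T+1])} \le \frob{\pperpuo[k]{(\Mt[T+1])}}$ together with Lemma~\ref{lem:incPres} and the bound on the number of thresholded inliers to transfer this to $\frob{(I-UU^\top)\Lo} \le 3\frob{(I-\Uo_k(\Uo_k)^\top)\Lo} + 9\frob{\No} + \epsilon/(10n)$, tracking constants carefully.

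The main obstacle I expect is controlling the noisy perturbation of the truncated SVD uniformly over \emph{all} admissible thresholding sets $S$ — in particular, establishing that after deleting up to $2\nThresh n$ columns the $k$-th singular value of the inlier part stays $\Omega(\sigma_k^*)$ and the subspace $\US$ stays close to $\Uo_k$, so that the incoherence lemma still goes through. This requires combining the incoherence of $\Lo$ (to argue no small subset of columns carries too much of the spectral mass) with a Wedin-style bound where the perturbation is $\Co_{\setminus S} + \No_{\setminus S}$ and the spectral gap is $\sigma_k^* - \sigma_{k+1}^*$; the subtlety is that the outlier perturbation $\Co_{\setminus S}$ is exactly the thing the algorithm is trying to estimate, so one must use the thresholding-set structure (bounded cardinality, chosen by the $\nThresh$-fraction rule) rather than any norm bound on $\Co_{\setminus S}$ itself. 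A secondary technical point is the case analysis inherent in having \emph{two} thresholding operators: a surviving outlier must be dominated by an inlier in \emph{both} statistics simultaneously, and one must check that the inlier bounds from the two lemmas can be applied to the (possibly different) dominating columns $i_j$ and $k_j$ without double-counting the noise.
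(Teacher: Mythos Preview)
Your overall plan tracks the paper closely: the contraction argument you sketch is essentially Lemma~\ref{lem:inItPerfGen}, and your noisy analogues of Lemmas~\ref{lem:clResCl} and \ref{lem:clExpCl} are the paper's Lemmas~\ref{lem:clResGen} and \ref{lem:clExpGen}. Two points need correction.

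First, your ``main obstacle'' is a red herring. The paper never invokes Wedin's theorem or any spectral-gap condition, and never needs $\US$ to be close to $\Uo_{1:k}$ as subspaces. What is actually used is (i) $\sigma_k(\MoS) \geq \tfrac{7}{8}\sigma^*_k$, obtained from Lemma~\ref{lem:incPres} (incoherence preserves singular values under column deletion), Weyl's inequality for the additive $\No$, and Lemma~\ref{lem:disSing} (outliers on disjoint column support can only raise singular values); and (ii) direct consequences of SVD optimality such as $\norm{\SigS^{-1}\US^\top \MoS} = 1$ and $\norm{(I-\US\US^\top)\MoS} \leq \norm{(I-\Uo_{1:k}(\Uo_{1:k})^\top)\MoS}$. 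The gap $\sigma^*_k - \sigma^*_{k+1}$ plays no role anywhere.

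Second, you miss the role of the outer loop and the termination check. The algorithm (despite the typo, it is Algorithm~\ref{alg:ncpca}) loops over the target rank and breaks out when more than $2\rho n$ columns have incoherence exceeding $\eta$. The paper's proof is modular: Lemma~\ref{lem:clExpGen} shows this break cannot occur at any stage $\leq k$ under the hypothesis $\frob{\No} \leq \sigma^*_k/16$, so the algorithm reaches stage $k$; Lemma~\ref{lem:inItPerfGen} then supplies the error bound at whichever stage $k' \geq k$ is the last successful one, and monotonicity in $k'$ finishes. You conflate these two pieces. Relatedly, your dominating-inlier argument for the \emph{incoherence} statistic is not how the paper proceeds: because the algorithm did not break at stage $k'$, fewer than $2\rho n$ columns have $\norm{E_i} \geq \eta$, so the $2\rho$-fraction hard threshold removes all of them; every surviving outlier $j$ therefore satisfies $\norm{(\Sigma^{(t)})^{-1}(U^{(t)})^\top \Co_j} \leq \eta$ \emph{directly}, with no comparison to an inlier column needed. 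This is precisely what makes the incoherence half of the argument go through cleanly and sidesteps the noise-accumulation and double-counting worries you raised.
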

Intuitively, the proof of Theorem~\ref{thm:masterN} proceeds along the same lines as that of Theorem~\ref{thm:thmCln} but requires significantly more careful analysis due to presence of noise and due to the outer loop. For example, due to the presence of noise, we cannot guarantee that Lemma~\ref{lem:clExpCl}, that was critical to proof of Theorem~\ref{thm:thmCln}, holds for all columns $i$. We show instead that the number of data points which have a large influence on the top-$k$ singular vectors is bounded (see Lemma~\ref{lem:clExpGen}). This ensures that the algorithm at least reaches the $k^{th}$ stage of the outer iteration before terminating. Similarly, we generalize Lemma~\ref{lem:clResCl} to handle $\No$ (see Lemma~\ref{lem:clResGen}). Finally, we present the key lemma that shows that if the algorithm does not terminate in the $k^{th}$ outer iteration, then it would have obtained a good approximation to the top-$k$ principal subspace of $\Lo$. 
\begin{lemma}
  \label{lem:inItPerfGen}
  Asume the conditions of Theorem~\ref{thm:thmN}. Furthermore, assume that Algorithm \ref{alg:ncpca} has not terminated during the $k^{th}$ outer iteration. Then, the iterate $U$ at the end of the $k^{th}$ outer iteration satisfies: 
  \begin{equation*}
  	\frob{(I - UU^\top) \Lo} \leq 3\frob{(I - \Uo_{1:k} (\Uo_{1:k})^\top) \Lo} + 9 \frob{\No} + \frac{\epsilon}{10n},
  \end{equation*}
  when Algorithm \ref{alg:ncpca} has been run with parameters $\nThresh = \frac{1}{128 \mu^2 r}$ and $\thresh = 2 \mu \sqrt{\frac{r}{n}}$.
\end{lemma}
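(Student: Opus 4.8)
The plan is to run the geometric–contraction argument of Theorem~\ref{thm:thmCln}, but measuring residuals against the top-$k$ principal subspace $\Uo_{1:k}$ of $\Lo$ instead of $\Uo$, and carrying the noise term $\frob{\No}$ through every estimate. Fix the $k$-th outer iteration, write $\Mt[t] = \Mo - \Ct[t]$, and (after the usual reduction attaching the inlier contribution at corrupted positions to the outlier columns) set $\Phi_t := \frob{\pperpuok(\Mt[t])}$, so that $\Phi_0 = \frob{\pperpuok(\Mo)} \le \sqrt{n}\,\twonorm{\Mo}$. The crux is the one-step recursion
\begin{equation*}
	\Phi_{t+1} \ \le\ \tfrac12\,\Phi_t \;+\; c_1\frob{(I - \Uo_{1:k}(\Uo_{1:k})^\top)\Lo} \;+\; c_2\frob{\No},
\end{equation*}
for absolute constants $c_1,c_2$; unrolling it over $T = \log\tfrac{20n\twonorm{\Mo}}{\epsilon}$ inner iterations gives $\Phi_{T+1} \le c_1'\frob{(I - \Uo_{1:k}(\Uo_{1:k})^\top)\Lo} + c_2'\frob{\No} + \tfrac{\epsilon}{20n}$.

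To prove the recursion, decompose $\Mt[t+1] = \Ltt[t+1] + \Ntt[t+1] + \Ctt[t+1]$ into surviving inlier columns $\Ltt[t+1] = \Lo_{\setminus\cs^{(t+1)}}$, $\Ntt[t+1] = \No_{\setminus\cs^{(t+1)}}$ and surviving outlier columns $\Ctt[t+1] = \Co_{\Qt{t}}$, where $\Qt{t} = \supp{\Co}\setminus\cs^{(t+1)}$ and $\abs{\Qt{t}} \le \alpha n \le \nThresh n$. By the triangle inequality $\Phi_{t+1} \le \frob{\pperpuok(\Lo)} + \frob{\No} + \frob{\pperpuok(\Ctt[t+1])}$, so it suffices to bound the surviving–outlier term. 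Writing $\Mo_j = \Uot{t}\Sigot{t}W^{(t)}_j + R^{(t)}_j$ with $W^{(t)}_j = (\Sigot{t})^{-1}(\Uot{t})^\top\Mo_j$ (the $j$-th column of $E$) and $R^{(t)}_j = \pperput(\Mo_j)$, we get $\frob{\pperpuok(\Ctt[t+1])} \le \twonorm{\pperpuok(\Uot{t})\Sigot{t}}\,(\sum_{j\in\Qt{t}}\norm{W^{(t)}_j}^2)^{1/2} + (\sum_{j\in\Qt{t}}\norm{R^{(t)}_j}^2)^{1/2}$, and estimate the three pieces. First, since the algorithm did not terminate, $\nOut < 2\nThresh n$ at every inner step, so every surviving column clears the $\mathcal{HT}_{2\nThresh}$ step and $\norm{W^{(t)}_j} < \thresh = 2\mu\sqrt{r/n}$; hence $\sum_{j\in\Qt{t}}\norm{W^{(t)}_j}^2 \le 4\mu^2 r\nThresh \le \tfrac1{32}$ (this replaces Lemma~\ref{lem:clExpCl}, which now fails for all columns). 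Second, $\twonorm{\pperpuok(\Uot{t})\Sigot{t}} = \twonorm{\pperpuok(\Lt[t])} \le \twonorm{\pperpuok(\Mt[t])} + \sigma_{k+1}(\Mt[t]) \le 2\Phi_t$, using Eckart--Young ($\Lt[t]$ is the best rank-$k$ approximation of $\Mt[t]$, and $\sigma_{k+1}(\Mt[t]) \le \twonorm{\pperpuok(\Mt[t])}$). Third, since every surviving outlier also clears the $\mathcal{HT}_{\nThresh}$ residual step, each $\norm{R^{(t)}_j}$ is at most the smallest residual among the $\nThresh n$ thresholded columns, of which at least $\abs{\Qt{t}}$ are inliers (using $\abs{\supp{\Co}}\le\nThresh n$); averaging that common bound over those inliers gives $\sum_{j\in\Qt{t}}\norm{R^{(t)}_j}^2 \le \sum_{i\in I}\norm{\pperput(\Mo_i)}^2 \le 2\sum_{i\in I}\norm{\pperput(\Lo_i)}^2 + 2\frob{\No}^2$ for a set $I$ of at most $\nThresh n$ inliers, and Lemma~\ref{lem:clResGen} (the noisy generalization of Lemma~\ref{lem:clResCl}) controls $\norm{\pperput(\Lo_i)} \le \tfrac{33}{32}\mu\sqrt{r/n}(\Phi_t + O(\frob{\No}))$, so $\sum_{i\in I}\norm{\pperput(\Lo_i)}^2 \le \mu^2 r\nThresh\cdot O(\Phi_t + \frob{\No})^2$. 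Collecting terms, the $\Phi_t$-proportional contributions carry coefficient $\tfrac{2}{\sqrt{32}} + O(1/\sqrt{128}) < \tfrac12$, which gives the recursion.

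For the conclusion, set $U = \mathcal{SVD}_k(\Mt[T+1])$ and split $\Lo$ into its surviving columns and the rest: $\frob{(I-UU^\top)\Lo}^2 = \frob{(I-UU^\top)\Ltt[T+1]}^2 + \sum_{i\in\St{T}}\norm{(I-UU^\top)\Lo_i}^2$, with $\abs{\St{T}} = O(\nThresh n)$. Since $\Mt[T+1]$ restricted to the surviving columns is $\Ltt[T+1] + \Ntt[T+1]$ and $U$ is the best rank-$k$ Frobenius fit to $\Mt[T+1]$, we have $\frob{(I-UU^\top)\Ltt[T+1]} \le \frob{(I-UU^\top)\Mt[T+1]} + \frob{\No} \le \Phi_{T+1} + \frob{\No}$; and the noisy incoherence–preservation lemma (Lemma~\ref{lem:incPres}) bounds each $\norm{(I-UU^\top)\Lo_i}$ by $O(\mu\sqrt{r/n})\,\frob{(I-UU^\top)\Ltt[T+1]}$ plus a noise term, so the residual sum adds only a small multiple of $\Phi_{T+1} + \frob{\No}$ (again since $\mu^2 r\nThresh$ is tiny). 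Propagating the constants yields $\frob{(I-UU^\top)\Lo} \le 3\frob{(I-\Uo_{1:k}(\Uo_{1:k})^\top)\Lo} + 9\frob{\No} + \tfrac{\epsilon}{10n}$.

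The main obstacle is the third estimate in the recursion. Unlike in the noiseless case, one cannot assign an arbitrary ``dominating inlier'' to each surviving outlier and sum up the resulting noise columns, since $\abs{\Qt{t}}$ may be as large as $\nThresh n$ and this would introduce a spurious $\sqrt{n}\,\frob{\No}$ factor; instead one must exploit that \emph{all} surviving outliers are bounded by the same quantity---the minimum residual over the thresholded set---and average it against the $\ge\abs{\Qt{t}}$ inliers in that set, collapsing the noise contribution to a single Frobenius norm $O(\frob{\No})$. Formulating Lemma~\ref{lem:clResGen} so that its right-hand side features $\twonorm{\pperpuok(\Mt[t])}$ (which is exactly what makes the bootstrap on $\Phi_t$ close) together with a clean $\frob{\No}$ term, and likewise the noisy version of Lemma~\ref{lem:incPres}, is where the real technical work sits; the remaining ingredients---the pigeonhole that a surviving column is dominated by a thresholded inlier, the Eckart--Young/Weyl inequalities, and the unrolling of the geometric recursion---are routine once these are available.
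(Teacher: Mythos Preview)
Your proposal is correct and follows essentially the same route as the paper: establish a geometric contraction for $\Phi_t = \frob{\pperpuok(\Mt[t])}$ using (i) the non-termination condition to bound $\norm{W^{(t)}_j}\le 2\mu\sqrt{r/n}$ for every surviving outlier, (ii) an injective assignment of thresholded inliers dominating the residuals of surviving outliers, and (iii) Lemma~\ref{lem:clResGen} to close the loop; then convert $\Phi_{T+1}$ into a bound on $\frob{(I-UU^\top)\Lo}$ via Lemma~\ref{lem:incPres} and the variational property of SVD.

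Two minor deviations are worth flagging. First, for the ``Term~1'' piece you bound the \emph{operator} norm $\twonorm{\pperpuok(\Uot{t}\Sigot{t})}\le 2\Phi_t$ via Eckart--Young, whereas the paper bounds the \emph{Frobenius} norm $\frob{\pperpuok(\Uot{t}\Sigot{t})}\le\Phi_t$ directly (by considering the full SVD of $\Mt[t]$ and applying Lemma~\ref{lem:disSing}); the paper's bound is tighter by a factor~$2$ but both suffice. Second, your ``main obstacle'' paragraph slightly mischaracterizes the issue: the paper \emph{does} assign to each surviving outlier $j$ a dominating thresholded inlier $i_j$, and since this assignment is injective (there are at least $|\Qt{t}|$ thresholded inliers), one has $\sum_{j}\norm{\No_{i_j}}^2\le\frob{\No}^2$ with no spurious $\sqrt{n}$ factor. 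Your averaging-over-$I$ argument is equivalent to this injective assignment, so nothing is lost---but the concern you raise about a naive per-outlier assignment is only a danger if the $i_j$ are allowed to repeat. Finally, in the conclusion step Lemma~\ref{lem:incPres} is purely a statement about $\Lo$ and its restriction; no extra noise term is needed there (the noise enters only through $\frob{(I-UU^\top)\Ltt[T+1]}\le\Phi_{T+1}+\frob{\No}$).
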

See Appendix \ref{sec:pfInItPerfGen} for a detailed proof. We can now prove Theorem \ref{thm:masterN} as follows:
\begin{proof}
	Note that by Lemma \ref{lem:clExpGen}, the algorithm does not terminate before the completion of $k^{th}$ outer iteration. Now, suppose that the algorithm terminates at some iteration $k^\prime > k$. Then, by Lemma \ref{lem:inItPerfGen}, we have:

	\begin{equation*}
		\norm{\pperpu(\Lo)} \leq 3\frob{P_{\perp}^{\Uo_{1:k^\prime - 1}} (\Lo)} + 9 \frob{\No} + \frac{\epsilon}{10n} 
		\leq 3\frob{P_{\perp}^{\Uo_{1:k}} (\Lo)} + 9 \frob{\No} + \frac{\epsilon}{10n}.
	\end{equation*}
	This concludes the proof of the Theorem.
\end{proof}
\subsection{Gaussian Noise---Theorem~\ref{thm:thmG}}

Our analysis of \ncpcag\ show that the algorithm maintains the following critical invariant with high probability: 
\begin{invariant}
  \label{inv:incMnt}
  We assume that the following hold with respect to the two thresholding steps used in Algorithm~\ref{alg:ncpcag}. 
  \begin{enumerate}[leftmargin=*,labelindent=\labelwidth+\labelsep]
  	\item With respect to $\zeta_1$: If a column $i \not\in \supp{\Co}$ is thresholded, then the following condition holds:
  	\begin{equation*}
  		\norm{\No_{i}} \geq \sigma \lprp{\sqrt{d} + 2d^{\frac{1}{4}} \sqrt{\log\lprp{\frac{\mu^2 r}{c_2}}}}.
  	\end{equation*}
    and consequently only $\frac{3nc_2}{2\mu^2 r}$ points are removed in this step.
  	\item With respect to $\zeta_2$: If a thresholding step occurs due to the second thresholding step with $\zeta_2$, then at least half the points thresholded in this step are corrupted points. 
  \end{enumerate}
\end{invariant}

\begin{lemma}
	\label{lem:invGau}
	Assume the conditions of Theorem~\ref{thm:thmG}. Then, Invariant \ref{inv:incMnt} holds at any point in the running of Algorithm \ref{alg:ncpcag} with probability at least $1-\delta$.
\end{lemma}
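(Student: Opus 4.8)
The plan is to fix, once and for all, a single high-probability event $\mathcal{G}$ that depends only on the Gaussian noise $\No$ (not on the iterates), prove $\mathbb{P}[\mathcal{G}^c]\le\delta$, and then show that \emph{on} $\mathcal{G}$ both clauses of Invariant~\ref{inv:incMnt} hold \emph{deterministically}, at \emph{every} iteration $t$ of Algorithm~\ref{alg:ncpcag}. The event $\mathcal{G}$ is the intersection of three statements about $\No$: (a) a uniform column bound $\norm{\No_i}\le\sigma(\sqrt d+\sqrt{2\log(3n/\delta)})$ for all $i\in[n]$; (b) the count bound $\#\{i:\norm{\No_i}\ge\sigma(\sqrt d+2d^{1/4}\sqrt{\log(\mu^2 r/c_2)})\}\le\frac{3nc_2}{2\mu^2 r}$ (the content of Lemma~\ref{lem:gauLthConc}); and (c) uniformly over all $(r+1)$-dimensional subspaces $W\subseteq\R^d$,
\begin{equation*}
\#\bigl\{i\in[n]:\norm{P_W\No_i}\ge 2\sigma\sqrt{2r}\sqrt{\log(\mu^2 r^2 d/c_1)}\bigr\}\ \le\ \frac{12nc_1}{\mu^2 dr}
\end{equation*}
(the content of Lemma~\ref{lem:gauipConc}). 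I would get (a),(b) from $\chi^2_d$-tail bounds on $\norm{\No_i}^2/\sigma^2$ plus a union/Chernoff bound over the $n$ columns. Statement (c) is the delicate one: I take an $\eta$-net $\mathcal{N}$ of the Grassmannian of $(r+1)$-planes, note that for each \emph{fixed} $W_0\in\mathcal{N}$ the quantity $\norm{P_{W_0}\No_i}^2/\sigma^2$ is genuinely $\chi^2_{r+1}$, so a $\chi^2_{r+1}$-tail plus a Chernoff bound over $i$ controls the count for $W_0$; an arbitrary $W$ is then reduced to its nearest net element by absorbing $\norm{(P_W-P_{W_0})\No_i}\le\eta\norm{\No_i}$ via (a). Since this discretization error (of order $\eta\sigma\sqrt d$) must be pushed below the $\sigma\sqrt{r\log d}$ scale, one needs $\eta=\mathrm{poly}(1/d)$, so $\log\abs{\mathcal{N}}=\Theta(rd\log d)$, and the hypothesis $n\ge\frac{16\mu^2 r^2 d}{c_1}[\log(\frac{1}{3\delta})+d\log(80d)]$ is exactly what makes the per-element Chernoff exponent $\Omega(nc_1/\mu^2 rd)$ dominate $\log\abs{\mathcal{N}}+\log(1/\delta)$; the union over $\mathcal{N}$ then leaves failure probability $\le\delta/3$, and summing the three gives $\mathbb{P}[\mathcal{G}^c]\le\delta$.

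The deterministic core, which uses only linear algebra and the incoherence of $\Lo$, is the following residual decomposition: for every $t$ and every inlier column $i\notin\supp{\Co}$ not yet thresholded, writing $W^{(t)}:=\mathrm{colspace}(\Uot{t})$,
\begin{equation*}
\norm{\Lt[t]_i-\widehat{L}^{(t)}_i}\ \le\ \norm{P_{W^{(t)}}\No_i}\ +\ \mathrm{dist}\bigl(P_{W^{(t)}}\Lo_i,\ \mathcal{E}^{(t)}\bigr)\ \le\ \norm{P_{W^{(t)}}\No_i}\ +\ \tfrac{5}{4}\mu\sqrt{r}\,\sigma .
\end{equation*}
The first inequality holds because $i$ being unthresholded gives $\Lt[t]_i=\Uot{t}(\Uot{t})^\top(\Mo-\Ct[t])_i=P_{W^{(t)}}\Lo_i+P_{W^{(t)}}\No_i$, and since $\widehat{L}^{(t)}_i$ is the point of $\mathcal{E}^{(t)}$ nearest $\Lt[t]_i$, we may compare it to the point $z^*$ of $\mathcal{E}^{(t)}$ nearest $P_{W^{(t)}}\Lo_i$ and use the triangle inequality. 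The second inequality is a noisy analogue of Lemma~\ref{lem:clExpCl}: using Assumption~\ref{as:rank} ($\Lo_i=\Uo\Sigma^*(V^*)^\top e_i$ with $\norm{(V^*)^\top e_i}\le\mu\sqrt{r/n}$) together with a perturbation bound for the top-$(r+1)$ SVD of $\Mo-\Ct[t]=\Lo+\No+(\text{a residual of at most }\alpha\le\frac{1}{1024\mu^2 r}\text{ nonzero columns})$, one shows $\norm{(\Sigot{t})^{-1}(\Uot{t})^\top\Lo_i}$ exceeds the defining radius $2\mu\sqrt{r/n}$ of $\mathcal{E}^{(t)}$ by an amount small enough that the resulting distance is at most $\tfrac{5}{4}\mu\sqrt{r}\,\sigma$.

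Given this decomposition, both clauses follow immediately on $\mathcal{G}$. For clause~1: if an inlier $i$ is removed by the $\zeta_1$-step then $\zeta_1\le\norm{\Lt[t]_i-\widehat{L}^{(t)}_i}\le\norm{\No_i}+\tfrac{5}{4}\mu\sqrt{r}\,\sigma$, and substituting the definition of $\zeta_1$ forces $\norm{\No_i}\ge\sigma(\sqrt d+2d^{1/4}\sqrt{\log(\mu^2 r/c_2)})$; by (b) at most $\frac{3nc_2}{2\mu^2 r}$ columns can satisfy this, so that is the total number of inliers ever removed by the $\zeta_1$-step. For clause~2: if an inlier $i$ lies in the set $\mathcal{I}$ of Algorithm~\ref{alg:ncpcag} then $\zeta_2<\norm{\Lt[t]_i-\widehat{L}^{(t)}_i}\le\norm{P_{W^{(t)}}\No_i}+\tfrac{5}{4}\mu\sqrt{2r}\,\sigma$, hence $\norm{P_{W^{(t)}}\No_i}\ge 2\sigma\sqrt{2r}\sqrt{\log(\mu^2 r^2 d/c_1)}$; by (c) applied to $W=W^{(t)}$ at most $\frac{12nc_1}{\mu^2 dr}$ elements of $\mathcal{I}$ are inliers, while the $\zeta_2$-step fires only when $\abs{\mathcal{I}}\ge\frac{24nc_1}{\mu^2 dr}$, so at least half of the points it removes are corrupted.

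Because clause (c) of $\mathcal{G}$ is \emph{uniform} over all $(r+1)$-planes, it applies to the data-dependent subspaces $W^{(t)}$ for every iteration $t$ simultaneously, so no extra union bound over $t$ is needed and the Invariant genuinely holds at any point in the running of the algorithm. I expect the main obstacle to be exactly clause (c): the subspaces $\Uot{t}$ are functions of $\No$ (through $\cs^{(t)}$ and the SVD of $\Mo-\Ct[t]$), so no subspace can be fixed before revealing the noise, and the Grassmannian net is what decouples this; since its discretization error must be driven below the $\sigma\sqrt{r\log d}$ scale while column norms are of order $\sigma\sqrt d$, one is forced into $n=\Omega(\mu^2 r^2 d\,(d\log d+\log\tfrac1\delta)/c_1)$, i.e.\ quadratic in $d$. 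A secondary technical point will be the ellipsoid distance bound $\mathrm{dist}(P_{W^{(t)}}\Lo_i,\mathcal{E}^{(t)})\le\tfrac{5}{4}\mu\sqrt{r}\,\sigma$: since $\mathcal{E}^{(t)}$ is a genuine ellipsoid (not a ball) in $W^{(t)}$, this has to be argued through the structure of the projection onto $\mathcal{E}^{(t)}$, which is the one-dimensional root-finding of Algorithm~\ref{alg:fpr}.
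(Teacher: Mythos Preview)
Your overall architecture is correct and matches the paper's: fix a high-probability event on $\No$ alone, then argue deterministically by induction on $t$. Two substantive comments.

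\textbf{Netting.} You net over the Grassmannian of $(r{+}1)$-planes; the paper instead nets only over the unit sphere (Lemma~\ref{lem:gauipConc}) and, at usage time, reduces a large projection $\norm{P_{W^{(t)}}\No_i}\ge 2\sigma\sqrt{2r}\sqrt{\log(\mu^2 r^2 d/c_1)}$ to a large inner product with \emph{one} of the $r{+}1$ basis vectors of $W^{(t)}$ by pigeonhole, then takes a union over those $r{+}1$ directions (and over the sign). Both routes give the same $n=\Omega(\mu^2 r^2 d^2\log d)$ threshold; the paper's buys a lower-dimensional net at the cost of a lossy pigeonhole, yours is cleaner but needs the Grassmannian covering number. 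A minor point: you introduce a uniform column bound (a) to absorb the discretization error; the paper never needs such a bound because its net argument works directly with the sets $\mathcal{T}(v,\theta,\delta)$ rather than perturbing $\norm{P_W\No_i}$ pointwise, and it uses instead the operator-norm bound $0.9\sigma\sqrt n\le\norm{\No}\le 1.1\sigma\sqrt n$.

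\textbf{The ellipsoid distance.} This is where your sketch has a real gap. You propose to bound $\mathrm{dist}(P_{W^{(t)}}\Lo_i,\mathcal E^{(t)})$ by controlling how far $\norm{(\Sigot t)^{-1}(\Uot t)^\top\Lo_i}$ overshoots the radius $2\mu\sqrt{r/n}$. But for an ellipsoid $\mathcal E=\{U\Sigma z:\norm z\le R\}$, a small overshoot $\norm w-R$ in $z$-space translates to a Euclidean distance of order $(\norm w-R)\cdot\sigma_1(\Sigma)/\norm w$, and $\sigma_1(\Sigma)=\sigma_1(\Mo-\Ct[t])$ can be as large as $\sigma_1(\Lo)$, which is not controlled by $\sigma$. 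So the ``overshoot-to-distance'' conversion does not give $O(\sigma\mu\sqrt r)$ in general. The paper avoids this entirely: Lemma~\ref{lem:clExpGau} exhibits an explicit witness
\[
y_i \;=\; \US\SigS\VS^\top\VoS w_i \;\in\; \mathcal E^{(t)}
\]
(with $\norm{\VS^\top\VoS w_i}\le\norm{w_i}\le\tfrac{33}{32}\mu\sqrt{r/n}$ from Lemma~\ref{lem:incPres}) and shows $\norm{P_{\US}\Lo_i-y_i}=\norm{P_{\US}\NoS\,\VoS w_i}\le\tfrac{33}{32}\mu\sqrt{r/n}\,\norm{\No}\le\tfrac54\sigma\mu\sqrt r$, using only the operator-norm bound on $\No$. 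This is the step your sketch should replace; it does not go through the root-finding of Algorithm~\ref{alg:fpr} at all.

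Finally, make the induction explicit: Lemma~\ref{lem:clExpGau} requires $\abs{\cs^{(t)}}\le\tfrac{n}{64\mu^2 r}$, which you only know because the invariant held at all prior steps (it bounds the number of inliers removed by $\tfrac{3nc_2}{2\mu^2 r}$ from the $\zeta_1$-step and by $\alpha n$ from the $\zeta_2$-step). The paper states this induction and the resulting budget $\tfrac{n}{512\mu^2 r}$ upfront; your sketch alludes to it (``a residual of at most $\alpha$ nonzero columns'') but does not close the loop.
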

	See Appendix \ref{sec:pfInvGau} for a detailed proof. 

Our proof then uses the above invariant along with a careful analysis of each of the two thresholding steps (Line 10, 12) to obtain the desired result. See Appendix \ref{sec:pfThmGauss} for a detailed proof. 


\section{Conclusions and Future Works}\label{sec:conc}
In this paper, we studied the outlier robust PCA problem. We proposed a novel thresholding based approach that, under standard regularity conditions, can accurately recover the top principal directions of the clean data points, as long as the number of outliers is less than $O(1/r)$ which is information theoretically tight up to constant factors. For noiseless or arbitrary noise case, our algorithms are based on two thresholding operators to detect outliers and leads to better recovery compared to existing methods in essentially the same time as that taken by vanilla PCA. For Gaussian noise, we obtain improved recovery guarantees but at a cost of higher run time.

Though our bounds have significant improvement over existing ones, they are still weaker than guarantees obtained by vanilla PCA (with out outliers). For instance, for arbitrary  noise, our errors are bounded in the Frobenius norm. In contrast, in absence of outliers, SVD can estimate the principal directions in operator norm. A challenging and important open problem is if the principal directions can be estimated in operator norm even in the presence  of outliers.

Similarly, for Gaussian noise, where each entry has variance $\sigma^2$, our result obtains an error bound of $O(\sigma \sqrt{n})$ which is significantly better than the Frobenius norm bound we get for arbitrary noise. But again in absence of outliers, SVD can estimate the principal directions exactly asymptotically. So, another open problem is if it is possible to do asymptotically consistent estimation of the principal directions with Gaussian noise in the presence of outliers. Moreover, our algorithm for the Gaussian setting is nearly a factor of $n$ slower than that for vanilla PCA. In order for this to be practical, it is very important to design an algorithm for this setting with nearly the same runtime as that of vanilla PCA.
\bibliography{refs}

\begin{thebibliography}{16}
\providecommand{\natexlab}[1]{#1}
\providecommand{\url}[1]{\texttt{#1}}
\expandafter\ifx\csname urlstyle\endcsname\relax
  \providecommand{\doi}[1]{doi: #1}\else
  \providecommand{\doi}{doi: \begingroup \urlstyle{rm}\Url}\fi

\bibitem[Bhatia(1997)]{bhatia}
Rajendra Bhatia.
\newblock \emph{Matrix Analysis}.
\newblock Springer, 1997.

\bibitem[Boucheron et~al.(2013)Boucheron, Lugosi, and
  Massart]{boucheron2013concentration}
St{\'e}phane Boucheron, G{\'a}bor Lugosi, and Pascal Massart.
\newblock \emph{Concentration inequalities: A nonasymptotic theory of
  independence}.
\newblock Oxford university press, 2013.

\bibitem[Cand{\`e}s et~al.(2011)Cand{\`e}s, Li, Ma, and Wright]{CandesLMW11}
Emmanuel~J. Cand{\`e}s, Xiaodong Li, Yi~Ma, and John Wright.
\newblock Robust principal component analysis?
\newblock \emph{J. ACM}, 58\penalty0 (3):\penalty0 11, 2011.

\bibitem[Chen et~al.(2016)Chen, Xu, Caramanis, and
  Sanghavi]{DBLP:journals/tit/ChenXCS16}
Yudong Chen, Huan Xu, Constantine Caramanis, and Sujay Sanghavi.
\newblock Matrix completion with column manipulation: Near-optimal
  sample-robustness-rank tradeoffs.
\newblock \emph{{IEEE} Trans. Information Theory}, 62\penalty0 (1):\penalty0
  503--526, 2016.
\newblock \doi{10.1109/TIT.2015.2499247}.
\newblock URL \url{http://dx.doi.org/10.1109/TIT.2015.2499247}.

\bibitem[Cherapanamjeri et~al.(2016)Cherapanamjeri, Gupta, and
  Jain]{DBLP:journals/corr/CherapanamjeriG16}
Yeshwanth Cherapanamjeri, Kartik Gupta, and Prateek Jain.
\newblock Nearly-optimal robust matrix completion.
\newblock \emph{CoRR}, abs/1606.07315, 2016.
\newblock URL \url{http://arxiv.org/abs/1606.07315}.

\bibitem[Feng et~al.(2012)Feng, Xu, and Yan]{DBLP:conf/icml/FengXY12}
Jiashi Feng, Huan Xu, and Shuicheng Yan.
\newblock Robust {PCA} in high-dimension: {A} deterministic approach.
\newblock In \emph{Proceedings of the 29th International Conference on Machine
  Learning, {ICML} 2012, Edinburgh, Scotland, UK, June 26 - July 1, 2012},
  2012.
\newblock URL \url{http://icml.cc/2012/papers/136.pdf}.

\bibitem[Feng et~al.(2013)Feng, Xu, and Yan]{DBLP:conf/nips/FengXY13}
Jiashi Feng, Huan Xu, and Shuicheng Yan.
\newblock Online robust {PCA} via stochastic optimization.
\newblock In \emph{Advances in Neural Information Processing Systems 26: 27th
  Annual Conference on Neural Information Processing Systems 2013. Proceedings
  of a meeting held December 5-8, 2013, Lake Tahoe, Nevada, United States.},
  pages 404--412, 2013.
\newblock URL
  \url{http://papers.nips.cc/paper/5131-online-robust-pca-via-stochastic-optimization}.

\bibitem[Laurent and Massart(2000)]{laurent2000adaptive}
Beatrice Laurent and Pascal Massart.
\newblock Adaptive estimation of a quadratic functional by model selection.
\newblock \emph{Annals of Statistics}, pages 1302--1338, 2000.

\bibitem[Netrapalli et~al.(2014)Netrapalli, U~N, Sanghavi, Anandkumar, and
  Jain]{NIPS2014_5430}
Praneeth Netrapalli, Niranjan U~N, Sujay Sanghavi, Animashree Anandkumar, and
  Prateek Jain.
\newblock Non-convex robust pca.
\newblock In Z.~Ghahramani, M.~Welling, C.~Cortes, N.~D. Lawrence, and K.~Q.
  Weinberger, editors, \emph{Advances in Neural Information Processing Systems
  27}, pages 1107--1115. Curran Associates, Inc., 2014.
\newblock URL \url{http://papers.nips.cc/paper/5430-non-convex-robust-pca.pdf}.

\bibitem[Vershynin(2010)]{DBLP:journals/corr/abs-1011-3027}
Roman Vershynin.
\newblock Introduction to the non-asymptotic analysis of random matrices.
\newblock \emph{CoRR}, abs/1011.3027, 2010.
\newblock URL \url{http://arxiv.org/abs/1011.3027}.

\bibitem[Xu et~al.(2012{\natexlab{a}})Xu, Caramanis, and
  Sanghavi]{DBLP:journals/tit/XuCS12}
Huan Xu, Constantine Caramanis, and Sujay Sanghavi.
\newblock Robust {PCA} via outlier pursuit.
\newblock \emph{{IEEE} Trans. Information Theory}, 58\penalty0 (5):\penalty0
  3047--3064, 2012{\natexlab{a}}.
\newblock \doi{10.1109/TIT.2011.2173156}.
\newblock URL \url{http://dx.doi.org/10.1109/TIT.2011.2173156}.

\bibitem[Xu et~al.(2012{\natexlab{b}})Xu, Caramanis, and Sanghavi]{XuCS12}
Huan Xu, Constantine Caramanis, and Sujay Sanghavi.
\newblock Robust pca via outlier pursuit.
\newblock \emph{IEEE Transactions on Information Theory}, 58\penalty0
  (5):\penalty0 3047--3064, 2012{\natexlab{b}}.

\bibitem[Xu et~al.(2013)Xu, Caramanis, and Mannor]{DBLP:journals/tit/XuCM13}
Huan Xu, Constantine Caramanis, and Shie Mannor.
\newblock Outlier-robust {PCA:} the high-dimensional case.
\newblock \emph{{IEEE} Trans. Information Theory}, 59\penalty0 (1):\penalty0
  546--572, 2013.
\newblock \doi{10.1109/TIT.2012.2212415}.
\newblock URL \url{http://dx.doi.org/10.1109/TIT.2012.2212415}.

\bibitem[Yang and Xu(2015)]{DBLP:conf/icml/YangX15}
Wenzhuo Yang and Huan Xu.
\newblock A unified framework for outlier-robust pca-like algorithms.
\newblock In \emph{Proceedings of the 32nd International Conference on Machine
  Learning, {ICML} 2015, Lille, France, 6-11 July 2015}, pages 484--493, 2015.
\newblock URL \url{http://jmlr.org/proceedings/papers/v37/yangc15.html}.

\bibitem[Yi et~al.(2016)Yi, Park, Chen, and Caramanis]{DBLP:conf/nips/YiPCC16}
Xinyang Yi, Dohyung Park, Yudong Chen, and Constantine Caramanis.
\newblock Fast algorithms for robust {PCA} via gradient descent.
\newblock In \emph{Advances in Neural Information Processing Systems 29: Annual
  Conference on Neural Information Processing Systems 2016, December 5-10,
  2016, Barcelona, Spain}, pages 4152--4160, 2016.
\newblock URL
  \url{http://papers.nips.cc/paper/6445-fast-algorithms-for-robust-pca-via-gradient-descent}.

\bibitem[Zhang et~al.(2016)Zhang, Lin, and Zhang]{DBLP:journals/tit/ZhangLZ16}
Hongyang Zhang, Zhouchen Lin, and Chao Zhang.
\newblock Completing low-rank matrices with corrupted samples from few
  coefficients in general basis.
\newblock \emph{{IEEE} Trans. Information Theory}, 62\penalty0 (8):\penalty0
  4748--4768, 2016.
\newblock \doi{10.1109/TIT.2016.2573311}.
\newblock URL \url{http://dx.doi.org/10.1109/TIT.2016.2573311}.

\end{thebibliography}
\appendix

\section{Supplementary Results and Preliminaries}

Here, we will state and prove a few results useful in proving our main theorems. We will start by restating Weyl's perturbation inequality from \cite{bhatia}. 

\begin{theorem}
  \label{thm:wyl}
  Let $A \in \mathbb{R}^{d\times n}$. Furthermore, let $B = A + E$ for some matrix $E$. Then, we have that:
  \[
    \abs{\sigma(A)_i - \sigma(B)_i} \leq \norm{E} \quad \forall i \in min(d, n)
  \]
\end{theorem}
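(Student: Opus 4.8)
The plan is to deduce this from the Courant--Fischer (min--max) variational characterization of singular values, which gives the cleanest route and avoids passing through the Hermitian dilation. Recall that for a matrix $A \in \mathbb{R}^{d \times n}$ and $i \le \min(d,n)$,
\[
  \sigma_i(A) = \max_{\substack{S \subseteq \mathbb{R}^n \\ \dim S = i}} \ \min_{\substack{x \in S \\ \norm{x} = 1}} \norm{Ax},
\]
where $\norm{\cdot}$ is the Euclidean norm. I would first recall this identity (citing \cite{bhatia}), since it is the only nontrivial ingredient; everything else is the triangle inequality.

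Next I would observe the pointwise bound: for every unit vector $x$,
\[
  \norm{Bx} = \norm{Ax + Ex} \le \norm{Ax} + \norm{Ex} \le \norm{Ax} + \norm{E},
\]
using the definition of the operator norm $\norm{E}$ in the last step. I would then combine this with the variational formula. Let $S^\star$ be an $i$-dimensional subspace of $\mathbb{R}^n$ attaining the outer maximum for $A$, so that $\min_{x \in S^\star,\, \norm{x}=1}\norm{Ax} = \sigma_i(A)$. Plugging $S^\star$ into the max--min expression for $B$ as a (suboptimal) candidate subspace yields
\[
  \sigma_i(B) \ge \min_{\substack{x \in S^\star \\ \norm{x}=1}} \norm{Bx} \ge \min_{\substack{x \in S^\star \\ \norm{x}=1}} \big(\norm{Ax} - \norm{E}\big) = \sigma_i(A) - \norm{E}.
\]

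Finally I would invoke symmetry: writing $A = B + (-E)$ and noting $\norm{-E} = \norm{E}$, the identical argument with the roles of $A$ and $B$ swapped gives $\sigma_i(A) \ge \sigma_i(B) - \norm{E}$. Combining the two inequalities gives $\abs{\sigma_i(A) - \sigma_i(B)} \le \norm{E}$ for every $i \le \min(d,n)$, as claimed. There is no real obstacle here — the only "hard" step is the Courant--Fischer characterization itself, which is a standard fact we are taking as given from \cite{bhatia}; the rest is two lines of triangle-inequality bookkeeping plus the $A \leftrightarrow B$ symmetry.
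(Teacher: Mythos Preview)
Your argument is correct: the Courant--Fischer variational characterization plus the triangle inequality and the $A \leftrightarrow B$ symmetry is exactly the standard route to Weyl's inequality for singular values, and each step you outline goes through without issue.

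As for comparison with the paper: there is nothing to compare. The paper does not prove this statement at all --- it merely \emph{restates} Weyl's perturbation inequality, citing \cite{bhatia} as the source, and then uses it as a black box in later arguments (e.g.\ in the proof of Lemma~\ref{lem:clExpGen}). So your proposal goes strictly beyond what the paper does by actually supplying a proof, and the proof you give is the natural one that would be found in \cite{bhatia} or any standard reference.
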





In the next lemma, we show that the singular values of the sum of two matrices with disjoint column supports are greater than either of the two matrices individually.

\begin{lemma}
  \label{lem:disSing}
  Let $A \in \mathbb{R}^{d \times n}$ and $B \in \mathbb{R}^{d \times n}$ be two matrices with disjoint column support. Then, we have $\forall i \in \min(d, n)$:
  \begin{equation*}
    \max (\sigma_i(A), \sigma_i(B)) \leq \sigma_i(A + B) 
  \end{equation*}
\end{lemma}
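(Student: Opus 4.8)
The plan is to pass to the Gram matrices on the $d$-dimensional side and exploit the fact that disjoint column support kills the cross terms there. Write $S_A = \supp{A}$ and $S_B = \supp{B}$, so that $S_A \cap S_B = \emptyset$. For each column index $i$, at most one of $A_i$, $B_i$ is nonzero, and hence $A_i B_i^\top = 0$ for every $i$. Expanding,
\begin{equation*}
  (A+B)(A+B)^\top = \sum_{i=1}^n (A_i + B_i)(A_i + B_i)^\top = \sum_{i=1}^n \lprp{A_i A_i^\top + A_i B_i^\top + B_i A_i^\top + B_i B_i^\top} = AA^\top + BB^\top,
\end{equation*}
since every cross term vanishes. (Note this argument works on the $d\times d$ side but not the $n\times n$ side, where $A^\top B$ need not vanish.)

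Next I would use positive semidefiniteness: $AA^\top \succeq 0$ and $BB^\top \succeq 0$, so $(A+B)(A+B)^\top \succeq AA^\top$ and $(A+B)(A+B)^\top \succeq BB^\top$ in the Loewner order. By the Courant–Fischer min–max characterization of eigenvalues of symmetric matrices, $X \preceq Y$ implies $\lambda_i(X) \leq \lambda_i(Y)$ for all $i$; applying this with $X = AA^\top$ (resp. $BB^\top$) and $Y = (A+B)(A+B)^\top$ gives
\begin{equation*}
  \lambda_i\lprp{(A+B)(A+B)^\top} \geq \max\lprp{\lambda_i(AA^\top),\, \lambda_i(BB^\top)} \qquad \forall i \in \min(d,n).
\end{equation*}

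Finally, I would translate back to singular values: for any matrix $M$, $\sigma_i(M) = \sqrt{\lambda_i(MM^\top)}$ for $i \le \min(d,n)$, where the eigenvalues are taken in decreasing order. Taking square roots in the displayed inequality yields $\sigma_i(A+B) \geq \max(\sigma_i(A), \sigma_i(B))$, which is the claim. There is no real obstacle here — the only point requiring a moment's care is the observation that disjointness of the \emph{column} supports forces $A_i B_i^\top = 0$ for each fixed $i$ (so the cancellation happens on the $AA^\top$ side, not the $A^\top A$ side); the rest is the standard monotonicity of eigenvalues under the Loewner order. Alternatively, one could invoke Weyl's inequality (Theorem~\ref{thm:wyl}) together with an interlacing-type argument, but the PSD-monotonicity route above is cleaner.
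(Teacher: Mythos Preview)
Your proof is correct and takes a genuinely different route from the paper. The paper argues directly via the variational characterization $\sigma_i(M) = \min_{U \in \mathbb{R}^{d\times(i-1)}} \norm{(I-UU^\top)M}$: for any $U$, it picks a unit vector $v$ in the span of the top-$i$ left singular vectors of $A$ (resp.\ $B$) that is orthogonal to $U$, and then uses disjoint column support to get $\norm{v^\top(A+B)}^2 = \norm{v^\top A}^2 + \norm{v^\top B}^2 \geq \sigma_i(A)^2$. Your approach instead passes to the $d\times d$ Gram matrices, observes that disjoint column support forces $AB^\top = \sum_i A_i B_i^\top = 0$ so that $(A+B)(A+B)^\top = AA^\top + BB^\top$, and then invokes the standard Loewner monotonicity $X \preceq Y \Rightarrow \lambda_i(X) \leq \lambda_i(Y)$. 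The PSD route is shorter and avoids the bookkeeping of choosing $v$ for each $U$; the paper's argument has the minor advantage of working entirely from the min--max formula without naming an external monotonicity fact. At bottom both rest on the same identity: $\norm{v^\top(A+B)}^2 = \norm{v^\top A}^2 + \norm{v^\top B}^2$ for every $v$ is precisely the statement $(A+B)(A+B)^\top = AA^\top + BB^\top$.
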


\begin{proof}
  Let the SVD of $A$ and $B$ be $U_A \Sigma_A V_A^\top$ and $U_B \Sigma_B V_B^\top$ respectively. The lemma holds for $i = 0$ as $\norm{v^\top (A + B)} \geq \max (\norm{v^\top A}, \norm{v^\top B})$. For any matrix $M$, $\sigma_i (M) = \min\limits_{U \in \mathbb{R}^{d \times (i - 1)}} \norm{(I - UU^\top) M}\ \forall i > 1$. For any $U$, there exist $v_1$ and $v_2$ in $Span((U_A)_{[i]})$ and $Span((U_B)_{[i]})$ respectively and $v_1^\top U = v_2^\top U = 0$. This is because the rank of $Span(U)$ is at most $(i - 1)$ and $Span((U_A)_{[i]})$ and $Span((U_B)_{[i]})$ are both rank-$i$ subspaces. Now, we have $\norm{v_1^\top (A + B)} \geq \norm{v_1^\top A} \geq \sigma_i{A}$ and $\norm{v_1^\top (A + B)} \geq \norm{v_1^\top B} \geq \sigma_i(B)$. The lemma by using either $v_1$ or $v_2$ for any $U$.
\end{proof}

The next lemma shows that an incoherent matrix remains incoherent even if a small number of columns have been set to $0$.

\begin{lemma}
  \label{lem:incPres}
  Let $L \in \mathbb{R}^{d \times n}$ be a $\mu$-column-incoherent, rank-$r$ matrix. Let $S \subset [n]$ such that $\abs{S} \leq \frac{1}{32\mu^2 r}$. Let $[U, S, V]$ and $[\US, \SigS, \VS]$ denote the SVDs of $L$ and $L_{\setminus S}$ respectively. Then, the following hold $\forall i \in [n]$:
  \begin{equation*}
    \text{Claim 1: }\norm{e_i^\top \VS} \leq \frac{33}{32} \mu \sqrt{\frac{r}{n}}\qquad\qquad \text{Claim 2: }\frac{31}{32} \sigma_i(L) \leq \sigma_i(L_{\setminus S}) \leq \sigma_i(L) 
  \end{equation*}
  Furthermore, each column $L_{i} \forall i \in [n]$ can be expressed as: 
  \begin{equation*}
    \text{Claim 3: } L_{i} = \US\SigS w_{i} \text{ with } \norm{w_i} \leq \frac{33}{32} \mu \sqrt{\frac{r}{n}}
  \end{equation*}
\end{lemma}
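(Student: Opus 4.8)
The plan is to reduce all three claims to a single eigenvalue computation for the Gram matrix of $L_{\setminus S}$. Write the compact SVD $L = U\Sigma V^\top$ with $U\in\R^{d\times r}$, and set $v_i := (e_i^\top V)^\top$, so that incoherence reads $\norm{v_i}\le \mu\sqrt{r/n}$ for all $i$. The key structural observation is that $L_{\setminus S} = LP$, where $P$ is the diagonal $0/1$ matrix zeroing the coordinates in $S$; since $P^2 = P$ this gives $L_{\setminus S}L_{\setminus S}^\top = LPL^\top = U\Sigma(I_r - W)\Sigma U^\top$ with $W := \sum_{i\in S} v_iv_i^\top = V^\top(I-P)V$. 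The one estimate that drives everything is $\norm{W}\le \sum_{i\in S}\norm{v_i}^2 \le \abs{S}\,\mu^2 r/n \le \tfrac1{32}$ (this is where the hypothesis on $\abs{S}$, read as $\abs{S}\le n/(32\mu^2 r)$, enters), hence $\tfrac{31}{32}I_r \preceq I_r - W \preceq I_r$.

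Claim 2 then falls out immediately: since $U$ has orthonormal columns, $\sigma_i(L_{\setminus S})^2 = \lambda_i\!\lprp{\Sigma(I-W)\Sigma}$, and conjugating the sandwich for $I-W$ by $\Sigma$ preserves the Loewner order, so $\tfrac{31}{32}\Sigma^2 \preceq \Sigma(I-W)\Sigma \preceq \Sigma^2$; the min--max theorem yields $\tfrac{31}{32}\sigma_i(L)^2 \le \sigma_i(L_{\setminus S})^2 \le \sigma_i(L)^2$, and taking square roots (using $\sqrt{31/32}\ge 31/32$) gives Claim 2. As a byproduct $\sigma_r(L_{\setminus S})>0$, so $L_{\setminus S}$ has rank exactly $r$; in particular $\US$ and $U$ span the same column space and $\SigS$ is invertible, which is what makes the remaining claims well-posed.

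For Claims 1 and 3 I would exploit that for every $i\in[n]$ we have $L_i = U\Sigma v_i \in \mathrm{colspace}(U) = \mathrm{colspace}(\US)$, so there is a unique $w_i$ with $L_i = \US\SigS w_i$, namely $w_i = \SigS^{-1}\US^\top L_i$. Then $\norm{w_i}^2 = L_i^\top \US\SigS^{-2}\US^\top L_i = L_i^\top (L_{\setminus S}L_{\setminus S}^\top)^{+}L_i$; substituting $L_{\setminus S}L_{\setminus S}^\top = U\Sigma(I-W)\Sigma U^\top$ and $U^\top L_i = \Sigma v_i$, the $\Sigma$ factors cancel and one is left with the clean identity $\norm{w_i}^2 = v_i^\top (I-W)^{-1} v_i \le \tfrac{32}{31}\norm{v_i}^2 \le \tfrac{32}{31}\mu^2 r/n$. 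Since $\sqrt{32/31}\le \tfrac{33}{32}$, this is precisely Claim 3. Claim 1 is then read off the same $w_i$: for $i\notin S$, column $i$ of $L_{\setminus S} = \US\SigS\VS^\top$ equals $L_i$, so by uniqueness $(e_i^\top\VS)^\top = w_i$ and $\norm{e_i^\top\VS} = \norm{w_i}\le \tfrac{33}{32}\mu\sqrt{r/n}$; for $i\in S$, column $i$ of $L_{\setminus S}$ is zero, which forces $e_i^\top\VS = 0$ and the bound holds trivially.

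The computations here are routine; the only real pitfall is the temptation to bound $\norm{w_i}$ crudely by $\norm{\SigS^{-1}}\norm{\Sigma v_i}\le (\sigma_1/\sigma_r)\norm{v_i}$, which loses a condition-number factor and is useless for the lemma — the whole point is that routing $\norm{w_i}^2$ through the pseudoinverse of the Gram matrix makes the $\Sigma$'s cancel, leaving only $(I-W)^{-1}$. The remaining care is purely arithmetic: checking $\sqrt{31/32}\ge 31/32$ and $\sqrt{32/31}\le 33/32$ so the stated constants survive under $\norm{W}\le 1/32$.
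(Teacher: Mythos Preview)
Your argument is correct. Both your proof and the paper's rest on the same single estimate---removing the rows of $V$ indexed by $S$ perturbs the Gram matrix by at most $1/32$ in operator norm---but the packaging differs. The paper introduces the matrix $T$ (i.e.\ $V$ with rows in $S$ zeroed) and the change-of-basis matrix $R$ with $TR=\VS$, bounds the singular values of $R$ via $\norm{Tu}^2\ge 1-\tfrac1{32}$, and then reads all three claims off the identities $\US\SigS R^\top=U\Sigma$ and $L=\US\SigS R^\top V^\top$. Your route is the Gram-matrix/pseudoinverse version of the same computation: you work directly with $L_{\setminus S}L_{\setminus S}^\top=U\Sigma(I-W)\Sigma U^\top$ where $W=V^\top(I-P)V$ plays the role of $I-T^\top T$, and the key identity $\norm{w_i}^2=v_i^\top(I-W)^{-1}v_i$ is equivalent to the paper's $w_i=(R^\top V^\top)_i$ together with the bound on $\norm{R}$. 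The two approaches are therefore isomorphic at the level of the underlying linear algebra; your version has the mild advantage that the cancellation of the $\Sigma$ factors in $\norm{w_i}^2$ is made explicit and the Loewner-order argument for Claim~2 is one line, while the paper's $R$-formulation makes the relationship $L_i=\US\SigS w_i$ slightly more transparent as a change of coordinates. Both tacitly read the hypothesis as $\abs{S}\le \tfrac{n}{32\mu^2 r}$, which is what is actually used.
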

\begin{proof}
  Let $T$ be defined as the matrix $V$ with the rows in set $S$ set to $0$. We will first begin by proving that $T$ is full rank. Let $u \in \mathbb{R}^r$ and $\norm{u} = 1$:
  \begin{equation*}
    1 = \norm{Vu} \geq \norm{Tu} = \lprp{\sum\limits_{i = 1}^n \iprod{u}{V_{i, :}}^2 - \sum\limits_{j \in S} \iprod{u}{V_{j, :}}^2}^\frac{1}{2} \geq \lprp{1 - \sum\limits_{j \in S}\norm{V_{j, :}}^2}^{\frac{1}{2}} \geq \lprp{1 - \frac{1}{32}}^\frac{1}{2}
  \end{equation*}
  where the second inequality is obtained from the bound on $\abs{S}$ and $\norm{V_{j, :}}$. 
  Since $T$ and $\VS$ have the same column space, there exists a matrix $R \in \mathbb{R}^{r \times r}$ such that $TR = \VS$. We know that $R$ is full rank. We will now prove bounds on the singular values of $R$. For any $u \in \mathbb{R}^{r}$ and $\norm{u} = 1$
  \begin{equation*}
    \norm{Ru} = \norm{VRu} \geq \norm{TRu} = \norm{\VS u} = 1 = \norm{\VS u} = \norm{TRu} \geq \lprp{1 - \frac{1}{32}}^{\frac{1}{2}} \norm{Ru}
  \end{equation*}
  From this, we obtain the following inequality:
  \begin{equation*}
    1 \leq \norm{Ru} \leq \lprp{1 - \frac{1}{32}}^{-\frac{1}{2}}
  \end{equation*}
  From this, we have the first claim of the lemma as $TR = \VS$. We also know that $\US\SigS\VS^\top = U\Sigma T^\top$. Writing $\VS$ as $TR$, we have $\US \SigS R^\top T^\top = U\Sigma T^\top$. Using the fact that $T^\top$ is full rank, we have $\US \SigS R^\top = U\Sigma$. From this we have that $L = U\Sigma V^\top = \US \SigS R^\top V^\top$. Choosing $w_i = (R^\top V^\top)_{i}$, the second claim of the lemma follows. 

  For the final claim of the lemma, note that the singular values of $L_{\setminus S}$ are the same as the singular values of $U \Sigma (R^\top)^{-1}$. We know that $\sigma_{k + 1}(L_{\setminus S}) = \min\limits_{Q \in \mathbb{R}^{d \times k}} \norm{(I - QQ^\top) U\Sigma (R^\top)^{-1}}$. The upper bound follows from setting $Q$ to be the first $k$ singular vectors of $L$ and our bound on the singular values of $R$. For the lower bound, consider any $Q\in \mathbb{R}^{d \times k}$. $Span(Q)$ is a subspace of rank at most $k$. Therefore, there exists $v \in Span(U_{1})$ such that $\norm{v} = 1$ and $v^\top Q = 0$. We now have 
  \begin{equation*}
    \norm{v^\top (I - QQ^\top) U\Sigma (R^\top)^{-1}} = \norm{v^\top U\Sigma (R^\top)^{-1}} \geq \frac{\sigma_{k + 1}(L)}{\norm{R}} \geq \frac{31}{32} \sigma_{k + 1}(M)
  \end{equation*}
  Where the last inequality follows from our bounds on the singular values of $R$ and noting that the singular values of $R^{-1}$ are the inverses of the singular values of $R$. This proves the third claim of the lemma.
\end{proof}

We begin by stating a lemma used for bounding the length of Gaussian random vectors from \cite{laurent2000adaptive}:
\\
\begin{lemma}
\label{lem:gauConc}
  Let $Y_1, Y_2, \cdots , Y_d$  be i.i.d Gaussian random variables with mean $0$ and variance $1$. Let $Z = \sum\limits_{i = 1}^{d} \lprp{Y_i^2 - 1}$. Then the following inequality holds for any positive $x$:
  \[
    \mathbb{P} \lprp{Z \geq 2\sqrt{dx} + 2x} \leq \exp(-x)
  \]
\end{lemma}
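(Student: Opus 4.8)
The plan is a standard Cram\'er--Chernoff (exponential moments) argument, tailored so that the two terms of the bound $2\sqrt{dx}+2x$ match the two regimes of the rate function exactly. \textbf{Step 1 (moment generating function).} Since each $Y_i\sim\mathcal{N}(0,1)$, a one-line Gaussian integral gives $\mathbb{E}[e^{tY_i^2}]=(1-2t)^{-1/2}$ for $0\le t<1/2$, so $\mathbb{E}[e^{t(Y_i^2-1)}]=e^{-t}(1-2t)^{-1/2}$, and by independence $\mathbb{E}[e^{tZ}]=e^{-td}(1-2t)^{-d/2}$ on $[0,1/2)$ (and $+\infty$ for $t\ge 1/2$).

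\textbf{Step 2 (Chernoff bound and a sub-gamma estimate).} For any $u>0$ and $t\in[0,1/2)$, Markov's inequality applied to $e^{tZ}$ yields $\mathbb{P}(Z\ge u)\le\exp\lprp{-tu-td-\tfrac{d}{2}\log(1-2t)}$. The only real estimate needed is the elementary inequality
\[
  -t-\tfrac{1}{2}\log(1-2t)=\sum_{k\ge 2}\frac{(2t)^k}{2k}\le\frac{1}{4}\sum_{k\ge 2}(2t)^k=\frac{t^2}{1-2t}\qquad\lprp{0\le t<\tfrac{1}{2}},
\]
obtained by bounding $1/(2k)\le 1/4$ termwise in the power series of $-\log(1-2t)$. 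Multiplying by $d$ and substituting gives $\mathbb{P}(Z\ge u)\le\exp\lprp{-\lprp{tu-\tfrac{dt^2}{1-2t}}}$ for all admissible $t$, hence $\mathbb{P}(Z\ge u)\le\exp(-\psi^*(u))$ with $\psi^*(u):=\sup_{0\le t<1/2}\bigl(tu-\tfrac{dt^2}{1-2t}\bigr)$.

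\textbf{Step 3 (evaluate the rate function at $u=2\sqrt{dx}+2x$).} A short computation—substitute $s=1-2t$ and solve the first-order condition—shows the supremum is attained at $t^\star=\tfrac{1}{2}\bigl(1-\sqrt{d/(d+2u)}\bigr)$ and equals $\psi^*(u)=\tfrac{d}{4}\bigl(\sqrt{1+2u/d}-1\bigr)^2$. Plugging in $u=2\sqrt{dx}+2x$ makes $1+2u/d=\bigl(1+2\sqrt{x/d}\bigr)^2$ a perfect square, so $\sqrt{1+2u/d}-1=2\sqrt{x/d}$ and $\psi^*(u)=\tfrac{d}{4}\cdot\tfrac{4x}{d}=x$, which gives $\mathbb{P}(Z\ge 2\sqrt{dx}+2x)\le e^{-x}$.

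I expect no genuine obstacle: this is precisely the Laurent--Massart chi-square tail bound, and the two points that need a little care are the termwise power-series estimate in Step 2 (which uses only $2k\ge 4$ for $k\ge 2$ and convergence for $t<1/2$) and the algebra of the Legendre transform in Step 3 — the latter being where the constants $2\sqrt{dx}$ and $2x$ show themselves to be exactly calibrated, since the final bound is an equality at $t=t^\star$.
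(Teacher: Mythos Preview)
Your proof is correct and complete; it is exactly the Laurent--Massart argument. The paper itself does not prove this lemma at all: it merely states the result and attributes it to \cite{laurent2000adaptive}, so there is no ``paper's own proof'' to compare against beyond that citation. Your write-up supplies precisely the proof that the cited reference contains, so there is nothing to add.
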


We will now state the famous Bernstein's Inequality from \cite{boucheron2013concentration}.

\begin{theorem}
  \label{thm:bernstein}
  Let $X_1,\dots,X_n$ be independent real-valued random variables. Assume that there exist positive real numbers $\nu$ and $c$ such that $\sum\limits_{i = 1}^{n} \mathbb{E}\left[X_i^2\right] \leq \nu$ and
  \[
    \sum\limits_{i = 1}^{n} \mathbb{E}\left[(X_i)_{+}^q\right] \leq \frac{q!}{2}\nu c^{q - 2}\, \forall \, q \geq 3,
  \]
  where $x_{+} = \max (x, 0)$.

  If $S = \sum\limits_{i = 1}^{n} \lprp{X_i - \mathbb{E}[X_i]}$, then $\forall t \geq 0$, we have:
  \[
    \mathcal{P} \lprp{S \geq \sqrt{2\nu t} + ct} \leq \exp(-t)
  \]
\end{theorem}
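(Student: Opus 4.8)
The plan is to prove this via the standard Cram\'er--Chernoff (exponential--moment) method; there is nothing specific to the paper involved. First I would fix a parameter $\lambda \in (0, 1/c)$ and apply Markov's inequality to $e^{\lambda S}$. Independence of the $X_i$ factorises the moment generating function, so $\mathbb{P}(S \geq s) \leq \exp\lprp{-\lambda s + \psi(\lambda)}$, where $\psi(\lambda) := \sum_{i=1}^n \log \mathbb{E}\bigl[e^{\lambda(X_i - \mathbb{E}X_i)}\bigr]$ is the log-moment generating function of $S$.

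The main work is to bound $\psi(\lambda)$. I would start from the elementary pointwise inequality $e^{\lambda x} \leq 1 + \lambda x + \tfrac{\lambda^2 x^2}{2} + \sum_{q \geq 3} \tfrac{\lambda^q (x_+)^q}{q!}$, valid for all real $x$ and all $\lambda > 0$ (it is an identity when $x \geq 0$, and for $x < 0$ it reduces to $e^{y} \leq 1 + y + \tfrac{y^2}{2}$ with $y = \lambda x \leq 0$). Taking expectations with $x = X_i$, multiplying through by $e^{-\lambda \mathbb{E}X_i}$, and then applying $\log(1 + u) \leq u$ cancels the first-order term and yields $\log\mathbb{E}\bigl[e^{\lambda(X_i - \mathbb{E}X_i)}\bigr] \leq \tfrac{\lambda^2}{2}\mathbb{E}[X_i^2] + \sum_{q \geq 3} \tfrac{\lambda^q}{q!}\mathbb{E}[(X_i)_+^q]$. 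Summing over $i$ and inserting the two hypotheses $\sum_i \mathbb{E}[X_i^2] \leq \nu$ and $\sum_i \mathbb{E}[(X_i)_+^q] \leq \tfrac{q!}{2}\nu c^{q-2}$, the factorials cancel and the remaining geometric series sums to give the clean bound $\psi(\lambda) \leq \tfrac{\nu \lambda^2}{2(1 - c\lambda)}$ for all $0 < \lambda < 1/c$.

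It then remains to optimise $\mathbb{P}(S \geq s) \leq \exp\lprp{-\lambda s + \tfrac{\nu\lambda^2}{2(1 - c\lambda)}}$ over $\lambda \in (0, 1/c)$. Rather than compute the full Legendre transform, I would plug in the candidate minimiser $\lambda = \tfrac{\sqrt{2t/\nu}}{1 + c\sqrt{2t/\nu}}$: writing $w = 1 + c\sqrt{2t/\nu}$, so that $1 - c\lambda = 1/w$, $ct = \tfrac{\nu(w-1)^2}{2c}$ and $\sqrt{2\nu t} = \tfrac{\nu(w-1)}{c}$, a short calculation shows the exponent equals exactly $-t$ when $s = \sqrt{2\nu t} + ct$, which is the claimed tail bound. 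The only points requiring care are the verification of the pointwise inequality on $e^{\lambda x}$ — this is precisely what allows the one-sided moment hypothesis (on $(X_i)_+^q$ rather than $|X_i|^q$) to suffice even though the $X_i$ are not centred — and the elementary algebra in the last step; neither is a genuine obstacle, so I would not expect any difficulty in completing the proof.
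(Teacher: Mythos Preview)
The paper does not give its own proof of this statement: Theorem~\ref{thm:bernstein} is simply quoted from \cite{boucheron2013concentration} as a known result, with no argument supplied. Your proposal is precisely the standard Cram\'er--Chernoff derivation one finds in that reference (bound the log-MGF by $\tfrac{\nu\lambda^2}{2(1-c\lambda)}$ via the Taylor-series/positive-part trick, then optimise in $\lambda$), and the calculations you outline are correct, so there is nothing to compare against and nothing to fix.
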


We will now restate a lemma for controlling the singular values of a matrix with Gaussian random entries from \cite{DBLP:journals/corr/abs-1011-3027}.

\begin{lemma}
  \label{lem:gauRndNrmBnd}
  Let $A \in \mathbb{R}^{d \times n}$ be a random matrix whose entries are independent standard normal random variables. Then, for every $t \geq 0$, with probability at least $1 - 2\exp\lprp{-t^2 / 2}$, we have:
  \[
    \sqrt{n} - \sqrt{d} - t \leq \sigma_{min}(A) \leq \sigma_{max}(A) \leq \sqrt{n} + \sqrt{d} + t
  \]
\end{lemma}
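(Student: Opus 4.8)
\noindent\textit{Proof approach.} This is the standard non-asymptotic bound on the extreme singular values of a Gaussian matrix, stated with exactly these constants as Theorem~5.32 of \cite{DBLP:journals/corr/abs-1011-3027}, so the plan is to reproduce the usual two-ingredient argument. Assume $n \geq d$ (otherwise the left inequality is vacuous). The goal is to establish (a) the expectation bounds $\mathbb{E}\,\sigma_{max}(A) \leq \sqrt{n} + \sqrt{d}$ and $\mathbb{E}\,\sigma_{min}(A) \geq \sqrt{n} - \sqrt{d}$, and (b) concentration of $\sigma_{max}(A)$ and $\sigma_{min}(A)$ about their means; a union bound over the events ``$\sigma_{max}$ exceeds its mean by $t$'' and ``$\sigma_{min}$ falls below its mean by $t$'' then yields the stated probability $2\exp(-t^2/2)$.

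\emph{Concentration.} By Weyl's inequality (Theorem~\ref{thm:wyl}), $\abs{\sigma_i(A) - \sigma_i(B)} \leq \norm{A - B} \leq \frob{A - B}$ for every $i$ and all $A, B \in \R^{d\times n}$, so $A \mapsto \sigma_{max}(A)$ and $A \mapsto \sigma_{min}(A)$ are $1$-Lipschitz as functions of the $dn$ i.i.d.\ $\mathcal{N}(0,1)$ entries of $A$; the Gaussian concentration inequality for $1$-Lipschitz functions gives, for all $t \geq 0$,
\begin{equation*}
\mathbb{P}\lprp{\sigma_{max}(A) > \mathbb{E}\,\sigma_{max}(A) + t} \leq \exp\lprp{-\frac{t^2}{2}}, \qquad \mathbb{P}\lprp{\sigma_{min}(A) < \mathbb{E}\,\sigma_{min}(A) - t} \leq \exp\lprp{-\frac{t^2}{2}}.
\end{equation*}

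\emph{Expectations, via Gaussian comparison.} Write $\sigma_{max}(A) = \max_{u \in S^{d-1},\, v \in S^{n-1}} \iprod{u}{Av}$ and, using that $A^\top$ has trivial kernel since $n \geq d$, $\sigma_{min}(A) = \min_{u \in S^{d-1}} \max_{v \in S^{n-1}} \iprod{u}{Av}$; set $X_{u,v} := \iprod{u}{Av}$ and introduce the decoupled process $Y_{u,v} := \iprod{g}{u} + \iprod{h}{v}$ with independent $g \sim \mathcal{N}(0,I_d)$, $h \sim \mathcal{N}(0,I_n)$. For the maximum, the Sudakov--Fernique inequality applies because the increments obey $\mathbb{E}(X_{u,v}-X_{u',v'})^2 \leq \mathbb{E}(Y_{u,v}-Y_{u',v'})^2$ (equivalently $(1-\iprod{u}{u'})(1-\iprod{v}{v'}) \geq 0$), giving $\mathbb{E}\,\sigma_{max}(A) \leq \mathbb{E}\max_{u,v}Y_{u,v} = \mathbb{E}\norm{g} + \mathbb{E}\norm{h} \leq \sqrt{d} + \sqrt{n}$. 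For the min--max, one instead invokes Gordon's Gaussian min--max comparison inequality (applied to $A^\top$ with the usual auxiliary process built from two independent Gaussian vectors of matching dimensions), which yields $\mathbb{E}\,\sigma_{min}(A) \geq \sqrt{n} - \sqrt{d}$ up to a lower-order term, absorbed as in \cite{DBLP:journals/corr/abs-1011-3027}. The routine pieces here are the Lipschitz check, the concentration inequality, and the elementary estimates $\mathbb{E}\norm{g} \leq \sqrt d$, $\mathbb{E}\norm{h} \leq \sqrt n$ on expected Gaussian lengths; the one genuinely non-elementary ingredient, and the step I would expect to be the main obstacle for a self-contained proof, is Gordon's min--max inequality for the lower bound on $\mathbb{E}\,\sigma_{min}(A)$ --- the naive increment comparison that works for Sudakov--Fernique fails in this direction, and a covering/net argument does not recover the sharp constant $\sqrt{n} - \sqrt{d}$. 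Since the lemma is classical, in the paper we would simply cite \cite{DBLP:journals/corr/abs-1011-3027}.
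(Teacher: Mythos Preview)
Your proposal is correct and matches the paper's treatment: the paper does not prove this lemma at all but simply restates it with a citation to \cite{DBLP:journals/corr/abs-1011-3027}, exactly as you anticipate in your final sentence. The proof outline you give (Lipschitz concentration for the deviation, Sudakov--Fernique for $\mathbb{E}\,\sigma_{max}$, Gordon's min--max inequality for $\mathbb{E}\,\sigma_{min}$) is precisely the argument in that reference, so there is nothing to add.
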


\begin{corollary}
  \label{cor:probNrmBnd}
  Let $A \in \mathbb{R}^{d \times n}$ be a random matrix whose entries are independent standard normal random variables. For $n \geq 200 \lprp{d + 2\log\lprp{\frac{2}{\delta}}}$, we have:
  \[
    0.9 \sqrt{n} \leq \sigma_{min}(A) \leq \sigma_{max}(A) \leq 1.1 \sqrt{n}
  \]
  with probability at least $1 - \delta$
\end{corollary}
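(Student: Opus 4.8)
The plan is to instantiate Lemma~\ref{lem:gauRndNrmBnd} with a suitable choice of the free parameter $t$ and then verify the required numerical inequality using the hypothesis on $n$. First I would set $t = \sqrt{2\log\lprp{2/\delta}}$, so that the failure probability $2\exp\lprp{-t^2/2}$ in Lemma~\ref{lem:gauRndNrmBnd} becomes exactly $2\exp\lprp{-\log(2/\delta)} = \delta$. With this choice, on the complementary event of probability at least $1-\delta$ we have $\sqrt{n} - \sqrt{d} - t \leq \sigma_{\min}(A)$ and $\sigma_{\max}(A) \leq \sqrt{n} + \sqrt{d} + t$.

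It then suffices to show that $\sqrt{d} + t \leq 0.1\sqrt{n}$, since this immediately yields $\sigma_{\max}(A) \leq \sqrt{n} + 0.1\sqrt{n} = 1.1\sqrt{n}$ and $\sigma_{\min}(A) \geq \sqrt{n} - 0.1\sqrt{n} = 0.9\sqrt{n}$. To verify this, I would use the assumption $n \geq 200\lprp{d + 2\log(2/\delta)}$ together with the elementary bound $\sqrt{a} + \sqrt{b} \leq \sqrt{2(a+b)}$ (equivalently $2\sqrt{ab} \leq a + b$, i.e.\ AM--GM) applied with $a = d$ and $b = 2\log(2/\delta) = t^2$. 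This gives $\sqrt{d} + t \leq \sqrt{2\lprp{d + 2\log(2/\delta)}}$, and since $0.1\sqrt{n} \geq 0.1\sqrt{200\lprp{d + 2\log(2/\delta)}} = \sqrt{2\lprp{d + 2\log(2/\delta)}}$ (using $0.1\sqrt{200} = \sqrt{2}$), the desired inequality follows.

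This argument is entirely routine: there is no real obstacle, the only thing to be slightly careful about is matching the constant $200$ to the factor $0.1$ (so that $0.1\sqrt{200} = \sqrt{2}$ exactly absorbs the cross term from AM--GM), and confirming that the choice of $t$ produces precisely the probability $1-\delta$ rather than something off by a constant. Everything else is a direct substitution into Lemma~\ref{lem:gauRndNrmBnd}.
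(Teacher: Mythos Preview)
Your proposal is correct and is precisely the intended argument: the paper states this as an immediate corollary of Lemma~\ref{lem:gauRndNrmBnd} without giving any details, and your choice $t = \sqrt{2\log(2/\delta)}$ together with the AM--GM bound $\sqrt{d}+t \leq \sqrt{2(d+t^2)}$ and the identity $0.1\sqrt{200}=\sqrt{2}$ is exactly the computation that makes it work.
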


\begin{lemma}
  \label{lem:gauLthConc}
  Let $Y_1, Y_2, \dots, Y_n$ be iid $d$-dimensional random vectors such that $Y_i \sim \mathcal{N}\lprp{0, I} \forall i \in [n]$. Then, we have for any $c_2 \leq 1$:
  \begin{equation*}
    \mathcal{P}\lprp{\abs{\left\{i: \norm{Y_i} \geq d^{\frac{1}{2}} + 2d^{\frac{1}{4}} \lprp{\log\lprp{\frac{1}{c_2}} + \log\lprp{\mu^2r}}^{\frac{1}{2}} \right\}} \geq \frac{3c_2n}{2\mu^2r}} \leq \beta
  \end{equation*}
  when $n \geq \frac{16\mu^2r}{c_2} \log\lprp{\frac{1}{\beta}}$.
\end{lemma}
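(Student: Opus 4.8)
The plan is a two-level concentration argument: first control the probability that a \emph{single} standard Gaussian vector is longer than the threshold, then control the \emph{number} of such long vectors via a binomial tail bound.

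First I would fix one vector $Y = Y_i$ and estimate $p := \mathcal{P}\lprp{\norm{Y} \geq t}$, where $t := d^{1/2} + 2 d^{1/4}\lprp{\log(1/c_2) + \log(\mu^2 r)}^{1/2}$ is the threshold appearing in the statement. Writing $\norm{Y}^2 - d = \sum_{j=1}^d (Y_j^2 - 1)$ and setting $L := \log\lprp{\mu^2 r / c_2} = \log(1/c_2) + \log(\mu^2 r) \geq 0$ (note $\mu \geq 1$, $r \geq 1$, $c_2 \leq 1$), I would invoke Lemma~\ref{lem:gauConc} with deviation parameter $x = \sqrt{d}\,L$. The key computation is
\[
  d + 2\sqrt{d x} + 2x \;=\; d + 2 d^{3/4}\sqrt{L} + 2 \sqrt{d}\, L \;\leq\; d + 4 d^{3/4}\sqrt{L} + 4 \sqrt{d}\, L \;=\; \lprp{d^{1/2} + 2 d^{1/4}\sqrt{L}}^2 \;=\; t^2 ,
\]
so $\{\norm{Y} \geq t\} \subseteq \{\sum_j (Y_j^2 - 1) \geq 2\sqrt{dx} + 2x\}$, and Lemma~\ref{lem:gauConc} gives $p \leq e^{-x} = e^{-\sqrt{d}\,L} \leq e^{-L} = \frac{c_2}{\mu^2 r} =: p_0$, where the last inequality uses $d \geq 1$. (When $d$ is large this bound is in fact far smaller than $p_0$, but $p \leq p_0$ is all that is needed below.)

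Next I would pass to the count. Let $N := \abs{\{i : \norm{Y_i} \geq t\}}$; since the $Y_i$ are i.i.d., $N \sim \mathrm{Bin}(n,p)$ with $p \leq p_0$, hence $N$ is stochastically dominated by $\mathrm{Bin}(n, p_0)$, whose mean is $m := n p_0 = \frac{c_2 n}{\mu^2 r}$. The target threshold in the statement is exactly $\frac{3 c_2 n}{2\mu^2 r} = \tfrac{3}{2} m$, i.e.\ $(1+\tfrac12)$ times the mean, so the multiplicative Chernoff bound yields $\mathcal{P}\lprp{N \geq \tfrac{3}{2} m} \leq \exp\lprp{-m/12}$. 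The hypothesis $n \geq \frac{16\mu^2 r}{c_2}\log(1/\beta)$ is precisely $m \geq 16\log(1/\beta)$, whence $\exp(-m/12) \leq \beta^{4/3} \leq \beta$, which is the claim.

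I do not expect a genuine obstacle; the argument is routine. The only place requiring care is the bookkeeping in the first step, namely choosing the deviation parameter $x$ in Lemma~\ref{lem:gauConc} so that simultaneously (i) $d + 2\sqrt{dx} + 2x \leq t^2$, so that the event of interest is captured, and (ii) the resulting tail $e^{-x}$ is at most $p_0 = c_2/(\mu^2 r)$, which is exactly the per-index budget that makes the count threshold $\tfrac32 m$ a constant-factor deviation above the mean. One should also dispose of the degenerate cases (e.g.\ $L = 0$, or $\tfrac32 m > n$) separately, where the asserted probability is simply $0$.
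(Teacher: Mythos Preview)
Your proposal is correct and follows essentially the same two-step strategy as the paper: first bound the single-vector exceedance probability via Lemma~\ref{lem:gauConc} to get $p \leq c_2/(\mu^2 r)$, then bound the count of exceedances. The only difference is that the paper invokes Bernstein's inequality (Theorem~\ref{thm:bernstein}) for the binomial tail with $t = nc_2/(16\mu^2 r)$, whereas you use the multiplicative Chernoff bound; both choices land exactly on the stated sample-size condition $n \geq \frac{16\mu^2 r}{c_2}\log(1/\beta)$, so this is a cosmetic variation rather than a different route.
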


\begin{proof}
  Let $Y_1,\dots,Y_n$ be iid random vectors such that $Y_i \sim \mathcal{N}\lprp{0, I} \forall i \in [n]$. From Lemma~\ref{lem:gauConc}, we have that:
  \[
    \mathcal{P}\lprp{\norm{Y_i} \geq d^{1/2} + 2d^{\frac{1}{4}}\lprp{\log\lprp{\frac{1}{c_2}} + \log\lprp{\mu^2r}}^{1 / 2}} \leq \frac{c_2}{\mu^2r}
  \]
  Let $p \coloneqq \mathcal{P}\lprp{\norm{Y_i} \geq d^{1/2} + 2d^{\frac{1}{4}}\lprp{\log\lprp{\frac{1}{c_2}} + \log\lprp{\mu^2r}}^{1 / 2}}$. Consider random variables $Z_i \forall i \in [n]$ be defined such that $Z_i = \mathbb{I}\left[\norm{Y_i} \geq d^{1/2} + 2d^{\frac{1}{4}}\lprp{\log\lprp{\frac{1}{c_2}} + \log\lprp{\mu^2r}}^{1 / 2}\right]$. Note that $Z_i$ satisfy the conditions of Theorem~\ref{thm:bernstein} with $\nu = np$ and $c = 1$. We can now bound the probability that $\sum\limits_{i = 1}^{n} Z_i$ is large by setting $t = \frac{nc_2}{16\mu^2r}$:
  \[
    \mathcal{P}\lprp{\sum\limits_{i = 1}^{n} Z_i \leq \frac{3nc_2}{2\mu^2r}} \leq \mathcal{P}\lprp{\sum\limits_{i = 1}^{n} Z_i \leq \sum\limits_{i = 1}^{n} \mathbb{E}\left[Z_i\right] + \frac{nc_2}{2\mu^2r}} \leq \mathcal{P}\lprp{\sum\limits_{i = 1}^{n} Z_i \leq np + \sqrt{2npt} + t} \leq \exp\lprp{-t}
  \]
  For our choice of $n$, the above inequality implies the lemma.
\end{proof}

\section{Proof of Technical Lemmas}
\subsection{Proof of Lemma \ref{lem:clResCl}}\label{sec:pfclResCl}
\begin{proof}
	We prove the lemma through a series of inequalities:
	\begin{multline*}
	\norm{(I - \US \US ^\top) \Lo_{i}} \overset{(\zeta_1)}{\leq} \frac{33}{32} \mu \sqrt{\frac{r}{n}} \norm{(I - \US\US^\top) \LoS} \overset{(\zeta_2)}{\leq} \frac{33}{32} \mu \sqrt{\frac{r}{n}} \norm{(I - \US(\US)^\top) \MoS} \\ 
	\leq \frac{33}{32} \mu \sqrt{\frac{r}{n}} \norm{(I - \Uo(\Uo)^\top) \MoS},
	\end{multline*}
	where $(\zeta_1)$ holds from Lemma \ref{lem:incPres} and $(\zeta_2)$ follows by using the fact that $\LoS$ can be obtained by setting a few columns of $\MoS$ to $0$. The last inequality follows from the fact that $\US$ contains the top-$r$ singular vectors of $\MoS$.
\end{proof}

\subsection{Proof of Lemma \ref{lem:clExpCl}}
\label{sec:pfclExpCl}
\begin{proof}
	The lemma can be proved through the following set of inequalities:
	\begin{multline*}
	\norm{\SigS^{-1} \US^\top \Lo_{i}} \overset{(\zeta_1)}{\leq} \norm{\SigS^{-1} \US^\top \UoS \SigoS w} \leq \frac{33}{32} \mu \sqrt{\frac{r}{n}} \norm{\SigS^{-1} \US^\top \LoS} \\ 
	\overset{(\zeta_2)}{\leq} \frac{33}{32} \mu \sqrt{\frac{r}{n}} \norm{\SigS^{-1} \US^\top \MoS} \leq \frac{33}{32} \mu \sqrt{\frac{r}{n}},
	\end{multline*}
	where $(\zeta_1)$ holds with $\norm{w} \leq \frac{33}{32} \mu \sqrt{\frac{r}{n}}$ from Lemma \ref{lem:incPres} and $(\zeta_2)$ follows from the fact that $\LoS$ can be obtained from $\MoS$ by setting some columns in $\MoS$ to $0$.
\end{proof}

\subsection{Lemma \ref{lem:clExpGen}}
\label{sec:pfClExpGen}

\begin{lemma}
	\label{lem:clExpGen}
	Consider the setting of Theorem~\ref{thm:thmN}. Let $S \subset [n]$ denote any subset of the columns of $\Mo$ such that $\abs{S} \leq 3\nThresh n$. Furthermore, suppose that $\frob{\No} \leq \frac{\sigma_k (\Lo)}{16}$ for some $k \in [r]$. Let $\MoS$($\LoS$, $\NoS$, $\CoS$) denote the matrix $\Mo$($\Lo$, $\No$, $\Co$) projected onto the columns not in $S$. Let $\US\SigS \VS^\top$ denote the rank-$k^\prime$ SVD of $\MoS$ for some $k^\prime \leq k$. Then, we have:
	\[
	\# \lprp{i: \norm{E_{i}} \geq 2\mu \sqrt{\frac{r}{n}}} \leq 2 \nThresh n,
	\]
	where $E = \SigS^{-1} \US^\top \Mo$
\end{lemma}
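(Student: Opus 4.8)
The goal is to bound the number of columns $i$ for which the ``incoherence vector'' $E_i = \SigS^{-1}\US^\top \Mo_i$ is large, namely at least $2\mu\sqrt{r/n}$. The natural strategy is to separate the contribution of the clean low-rank part $\Lo$ from the noise part $\No$ (recall $\Mo = \Lo + \Co + \No$, and we may assume $\Do$ and $\Co$ have disjoint support, so on the columns outside $S$ that are not corrupted, $\Mo_i = \Lo_i + \No_i$). First I would handle the low-rank part: using Claim 3 of Lemma~\ref{lem:incPres} (applied to $\LoS$, which is incoherent since $|S|$ is small), every column $\Lo_i$ can be written as $\UoS \SigoS w_i$ with $\norm{w_i}\le \tfrac{33}{32}\mu\sqrt{r/n}$; combined with the fact that $\LoS$ is obtained from $\MoS$ by zeroing columns (so $\norm{\SigS^{-1}\US^\top \LoS} \le \norm{\SigS^{-1}\US^\top \MoS} \le 1$, exactly as in the proof of Lemma~\ref{lem:clExpCl}), we get $\norm{\SigS^{-1}\US^\top \Lo_i} \le \tfrac{33}{32}\mu\sqrt{r/n}$ for \emph{every} $i$. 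So the low-rank part alone never triggers the threshold (it is below $2\mu\sqrt{r/n}$ with room to spare).

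The remaining task is to show that $\norm{\SigS^{-1}\US^\top \No_i}$ exceeds roughly $\mu\sqrt{r/n}$ for at most $2\rho n$ values of $i$. The key point is that $\SigS^{-1}\US^\top$ has operator norm $1/\sigma_{k'}(\MoS)$, and by Lemma~\ref{lem:disSing} together with Weyl's inequality (Theorem~\ref{thm:wyl}) and the hypothesis $\frob{\No}\le \sigma_k(\Lo)/16$, the smallest relevant singular value $\sigma_{k'}(\MoS)$ is bounded below by a constant multiple of $\sigma_k(\Lo)$ (using $k' \le k$ and that $\LoS$ retains a constant fraction of $\sigma_k(\Lo)$ by Claim 2 of Lemma~\ref{lem:incPres}). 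Hence $\norm{\SigS^{-1}\US^\top \No_i} \le \norm{\No_i}/(c\,\sigma_k(\Lo))$ for an absolute constant $c$. Now a column is counted only if this quantity is at least about $\mu\sqrt{r/n} - \tfrac{33}{32}\mu\sqrt{r/n}$... — more carefully, if $\norm{E_i}\ge 2\mu\sqrt{r/n}$ then by the triangle inequality $\norm{\SigS^{-1}\US^\top \No_i} \ge (2 - \tfrac{33}{32})\mu\sqrt{r/n} \ge \tfrac12 \mu\sqrt{r/n}$, so $\norm{\No_i} \ge \tfrac{c}{2}\mu\sqrt{r/n}\,\sigma_k(\Lo)$. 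Let $B$ be the set of such ``bad'' columns; then $|B|\cdot \tfrac{c^2}{4}\mu^2\tfrac{r}{n}\sigma_k(\Lo)^2 \le \sum_{i\in B}\norm{\No_i}^2 \le \frob{\No}^2 \le \tfrac{1}{256}\sigma_k(\Lo)^2$, which yields $|B| \le \tfrac{n}{64 c^2 \mu^2 r}$. Choosing constants so that this is at most $2\rho n = \tfrac{n}{64\mu^2 r}$ completes the argument; the constants in the statement ($\rho = 1/(128\mu^2 r)$, threshold $2\mu\sqrt{r/n}$) are evidently chosen to make this slack work out. Corrupted columns in $S^c$ contribute to $B$ too, but there are at most $\alpha n \le \rho n$ of them, and one can either fold them into the count directly or observe they are negligible compared to the budget $2\rho n$.

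\textbf{Main obstacle.} The delicate step is controlling $\sigma_{k'}(\MoS)$ from below uniformly over all admissible $S$ and all $k' \le k$. This requires chaining three facts carefully: (i) $\LoS$ is incoherent and its $k$-th singular value is within a $31/32$ factor of $\sigma_k(\Lo)$ (Lemma~\ref{lem:incPres}, Claim 2); (ii) since $\LoS$ and $\CoS$ have disjoint supports, $\sigma_{k'}(\LoS + \CoS)\ge \sigma_{k'}(\LoS)$ by Lemma~\ref{lem:disSing}; and (iii) adding $\NoS$ perturbs singular values by at most $\norm{\NoS}\le \frob{\No}\le \sigma_k(\Lo)/16$ via Weyl. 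Getting the chain of constants to close — i.e. ensuring $\sigma_{k'}(\MoS)$ stays bounded below by something like $\tfrac{7}{8}\sigma_k(\Lo)$ even after all these losses, and then propagating that through the norm bound on $\SigS^{-1}\US^\top$ — is where the bookkeeping is most error-prone. Everything else is a routine triangle inequality plus a Markov-type counting argument on $\frob{\No}^2$.
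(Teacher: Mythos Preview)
Your approach is essentially identical to the paper's: lower-bound $\sigma_{k'}(\MoS)$ via the chain Lemma~\ref{lem:incPres}~$\to$~Weyl~$\to$~Lemma~\ref{lem:disSing}, bound the low-rank contribution $\|\SigS^{-1}\US^\top \Lo_i\|$ uniformly, then use the triangle inequality and a Frobenius-norm counting argument on $\No$ to bound the number of bad columns, plus at most $\alpha n\le \rho n$ outliers.

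There is one small slip. You write that ``$\LoS$ is obtained from $\MoS$ by zeroing columns,'' so that $\|\SigS^{-1}\US^\top \LoS\|\le 1$ exactly as in Lemma~\ref{lem:clExpCl}. That is only true in the noiseless setting; here $\MoS=\LoS+\NoS+\CoS$, and zeroing the outlier columns leaves $\LoS+\NoS$, not $\LoS$. The paper handles this by inserting a triangle inequality,
\[
\|\SigS^{-1}\US^\top \LoS\|\le \|\SigS^{-1}\US^\top(\LoS+\NoS)\|+\|\SigS^{-1}\US^\top \NoS\|\le 1+\tfrac{1}{14},
\]
where the second term uses precisely your lower bound $\sigma_{k'}(\MoS)\ge \tfrac{7}{8}\sigma_{k'}(\Lo)$ together with $\|\No\|\le\tfrac{1}{16}\sigma_{k'}(\Lo)$. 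This bumps the bound on the low-rank part from $\tfrac{33}{32}\mu\sqrt{r/n}$ to $\tfrac{9}{8}\mu\sqrt{r/n}$, which still leaves enough slack (the paper gets $\|\SigS^{-1}\US^\top\No_i\|\ge \tfrac{7}{8}\mu\sqrt{r/n}$ and $\|\No_i\|\ge\tfrac{3}{4}\mu\sqrt{r/n}\,\sigma_{k'}(\Lo)$, yielding $|\Gamma|\le n/(144\mu^2 r)<\rho n$). Your downstream constants are robust to this fix, so the argument goes through.
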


\begin{proof}
  From Lemma~\ref{lem:incPres}, we get that $\sigma_{k^\prime}(\LoS) \geq \frac{31}{32} \sigma_{k^\prime}(\Lo)$. Along with Theorem~\ref{thm:wyl}, we conclude that $\sigma_{k^\prime}(\LoS + \NoS) \geq \frac{7}{8} \sigma_{k^\prime}(\Lo)$. Since the column supports of $\Lo + \No$ and $\Co$ are disjoint, we have that $\sigma_{k^\prime}(\LoS + \NoS) \leq \sigma_{k^\prime}(\MoS)$. That is, 
  \begin{equation}\label{eq:clEG1}
  \sigma_{k^\prime}(\MoS)\geq \frac{7}{8} \sigma_{k^\prime}(\Lo). 
  \end{equation}
  
  We first bound the quantity $\norm{\SigS^{-1} \US^\top \Lo_{i}} \forall i \in [n]$:  
  \begin{multline}\label{eq:clEG2}
    \norm{\SigS^{-1} \US^\top \Lo_{i}} \overset{(\zeta_1)}{\leq} \frac{33}{32}\mu \sqrt{\frac{r}{n}} \norm{\SigS^{-1} \US^\top \LoS} \overset{(\zeta_2)}{\leq} \frac{33}{32}\mu \sqrt{\frac{r}{n}} \lprp{\norm{\SigS^{-1} \US^\top (\LoS + \NoS)} + \norm{\SigS^{-1} \US^\top \NoS}} \\ 
    \overset{(\zeta_3)}{\leq} \frac{33}{32}\mu \sqrt{\frac{r}{n}} \lprp{\norm{\SigS^{-1} \US^\top \MoS} + \norm{\SigS^{-1} \US^\top \NoS}} \overset{(\zeta_4)}{\leq} \frac{33}{32}\mu \sqrt{\frac{r}{n}} \lprp{1 + \frac{1}{14}} \leq \frac{9}{8} \mu \sqrt{\frac{r}{n}},
  \end{multline}
where $\zeta_1$ follows from Lemma~\ref{lem:incPres}, $\zeta_2$ using triangle inequality, $\zeta_3$ using the above given fact that ${\cal SVD}_r(\MoS)=\US\SigS(\VS)^\top$, $\zeta_4$ follows from using \eqref{eq:clEG1} with bound on $\|\No\|_F$. 
 
  Suppose $\norm{\SigS^{-1} \US^\top (\Lo_{i} + \No_{i})} \geq 2 \mu \sqrt{\frac{r}{n}}$ for some $i$. We now have: 
  \[
    \norm{\SigS^{-1} \US^\top \No_{i}} \geq \frac{7}{8} \mu \sqrt{\frac{r}{n}}.
  \]
Similarly, using \eqref{eq:clEG1}, we get that: 
  \[
    \norm{\No_{i}} \geq \frac{3}{4} \mu \sqrt{\frac{r}{n}} \sigma_{k^\prime}(\Lo).
  \]

  Let $\Gamma \coloneqq \{i: \norm{\SigS^{-1} \US^\top (\Lo_{i} + \No_{i})} \geq 2 \mu \sqrt{\frac{r}{n}}\}$. Let $\No_\Gamma$ denote the matrix $\No$ restricted to the set $\Gamma$. Then we have,
  \[
    \sqrt{|\Gamma|} \frac{3}{4} \mu \sqrt{\frac{r}{n}} \sigma_{k^\prime}(\Lo) \leq \norm{\No_\Gamma} \leq \norm{\No} \leq \frac{1}{16}\sigma_{k^\prime}(\Lo).
  \]
  This implies that $|\Gamma| \leq \frac{n}{144\mu^2r} \leq \nThresh n$. Also, by our assumption, $\colSp \leq \nThresh$, i.e., number of non-zero $\Co_i$ is less than $\rho n$. That is, the set $\{i: \norm{\SigS^{-1} \US^\top \Co_{i}} \geq 2 \mu \sqrt{\frac{r}{n}}\}$ is of size at most $\nThresh n$. Using the fact that support of $\Co$ and $\Lo+\No$ is disjoint, we have that the set $\{i: \norm{\SigS^{-1} \US^\top \Mo_{i}} \geq 2 \mu \sqrt{\frac{r}{n}}\}$ is of size at most $2\nThresh n$.
\end{proof}

\subsection{Proof of Lemma \ref{lem:clResGen}}
\label{sec:pfClResGen}

\begin{lemma}
	\label{lem:clResGen}
	Assume the setting of Lemma~\ref{lem:clExpGen}. Let $\US \SigS \VS^\top$ ($\UoS \SigoS (\VoS)^\top$) be the rank-$k\ \mathcal{SVD}$ of $\MoS$ ($\LoS$), then the following holds $\forall i, 1\leq i\leq n$:
	\begin{equation*}
	\norm{(I - \US\US^\top) (\Lo_{i} + \No_{i})} \leq \frac{33}{32} \mu \sqrt{\frac{r}{n}} \lprp{\norm{(I - \Uo_{1:k}(\Uo_{1:k})^\top) \MoS} + \norm{\No}} + \norm{\No_{i}}.
	\end{equation*}
\end{lemma}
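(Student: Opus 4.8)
The plan is to follow the chain of inequalities used for the noiseless residual bound (Lemma~\ref{lem:clResCl}), peeling off the per-column noise $\No_{i}$ at the very start and, further down the chain, splitting $\MoS$ as $(\LoS+\NoS)+\CoS$ rather than as $\LoS+\CoS$. First I would apply the triangle inequality,
\[
\norm{(I-\US(\US)^\top)(\Lo_{i}+\No_{i})}\le\norm{(I-\US(\US)^\top)\Lo_{i}}+\norm{\No_{i}},
\]
which reduces the claim to showing $\norm{(I-\US(\US)^\top)\Lo_{i}}\le\frac{33}{32}\mu\sqrt{r/n}\lprp{\norm{(I-\Uo_{1:k}(\Uo_{1:k})^\top)\MoS}+\norm{\No}}$.

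Next I would use incoherence of $\Lo$ to control this ``clean'' residual. Since $\abs{S}\le 3\nThresh n\le\frac{n}{32\mu^2 r}$, Lemma~\ref{lem:incPres} (its third claim, applied to the rank-$r$ matrix $\Lo$ and its column restriction $\LoS$) expresses every column as $\Lo_{i}=\widetilde{U}\widetilde{\Sigma}w_{i}$, where $\widetilde{U}\widetilde{\Sigma}\widetilde{V}^\top$ denotes the full SVD of $\LoS$ and $\norm{w_{i}}\le\frac{33}{32}\mu\sqrt{r/n}$. Since right multiplication by $\widetilde{V}^\top$ leaves the operator norm unchanged, this gives
\[
\norm{(I-\US(\US)^\top)\Lo_{i}}\le\norm{(I-\US(\US)^\top)\widetilde{U}\widetilde{\Sigma}}\norm{w_{i}}\le\frac{33}{32}\mu\sqrt{r/n}\,\norm{(I-\US(\US)^\top)\LoS}.
\]

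It then remains to bound $\norm{(I-\US(\US)^\top)\LoS}$ by $\norm{(I-\Uo_{1:k}(\Uo_{1:k})^\top)\MoS}+\norm{\No}$. Adding and subtracting $\NoS$ and using the triangle inequality gives $\norm{(I-\US(\US)^\top)\LoS}\le\norm{(I-\US(\US)^\top)(\LoS+\NoS)}+\norm{\NoS}$. Because $\LoS+\NoS$ and $\CoS$ have disjoint column supports, so do $(I-\US(\US)^\top)(\LoS+\NoS)$ and $(I-\US(\US)^\top)\CoS$; since the operator norm of a matrix is at least that of any of its column submatrices (the largest-singular-value case of Lemma~\ref{lem:disSing}), we get $\norm{(I-\US(\US)^\top)(\LoS+\NoS)}\le\norm{(I-\US(\US)^\top)\MoS}$. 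Finally, since $\US$ spans the top-$k$ left singular subspace of $\MoS$, the variational characterization of the SVD gives $\norm{(I-\US(\US)^\top)\MoS}=\sigma_{k+1}(\MoS)\le\norm{(I-\Uo_{1:k}(\Uo_{1:k})^\top)\MoS}$, while $\norm{\NoS}\le\norm{\No}$ is immediate. Chaining the last three displays establishes the lemma.

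The only genuine subtlety I anticipate is the operator-norm bookkeeping in this last step: one must verify that each of the two replacements ($\US$ by $\Uo_{1:k}$ in the outer projector, and $\MoS$ by $\LoS+\NoS$) goes in the favorable direction --- the former by optimality of the SVD, the latter by disjointness of the column supports of the inlier-plus-noise part and the corruption. Everything else is just the single application of the incoherence lemma that yields the $\frac{33}{32}$ factor in the noiseless proof, so no heavy computation is involved.
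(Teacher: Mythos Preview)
Your proposal is correct and follows essentially the same chain of inequalities as the paper's proof: peel off $\norm{\No_{i}}$ via the triangle inequality, apply Lemma~\ref{lem:incPres} to get the $\frac{33}{32}\mu\sqrt{r/n}$ factor against $\norm{(I-\US\US^\top)\LoS}$, add and subtract $\NoS$, pass from $\LoS+\NoS$ to $\MoS$ using disjoint column supports, and finally swap $\US$ for $\Uo_{1:k}$ by SVD optimality. Your write-up is in fact more careful than the paper's in justifying the last two steps (the paper just gestures at ``$\MoS=\LoS+\NoS$'' and the SVD property), but the argument is the same.
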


\begin{proof}
	\begin{align*}
  \label{eqn:clRes}
    \norm{(I - \US\US^\top) (\Lo_{i} + \No_{i})} &\leq \norm{\No_{i}} + \norm{(I - \US\US^\top) \Lo_{i}} \overset{(\zeta_1)}{\leq} \norm{\No_{i}} + \frac{33}{32} \mu\sqrt{\frac{r}{n}} \norm{(I - \US\US^\top) \LoS} \\ 
    &\leq \norm{\No_{i}} + \frac{33}{32} \mu\sqrt{\frac{r}{n}} \lprp{\norm{(I - \US\US^\top) (\LoS + \NoS)} + \norm{\NoS}} \\
    &\leq \norm{\No_{i}} + \frac{33}{32} \mu\sqrt{\frac{r}{n}} \lprp{\norm{(I - \US\US^\top) \MoS} + \norm{\No}} \\
    &\leq \norm{\No_{i}} + \frac{33}{32} \mu\sqrt{\frac{r}{n}} \lprp{\norm{(I - \Uo_{1:k}(\Uo_{1:k})^\top) \MoS} + \norm{\No}}
  \end{align*}
  where $(\zeta_1)$ follows from Lemma~\ref{lem:incPres} and the fact that only $3\nThresh n$ columns are ever thresholded at any stage of the algorithm. The remaining inequalities follow using triangle inequality and $\MoS=\LoS+\NoS$ along with ${\cal SVD}_r(\MoS)=\US \SigS (\VS)^\top$. 
\end{proof}

\subsection{Proof of Lemma \ref{lem:inItPerfGen}}
\label{sec:pfInItPerfGen}
\begin{proof}
  Let $\St{t}$ denote the columns of $\Co$ that are not thresholded in the $t^{th}$ inner iteration. For each $j \in \St{t}$, we know that $\norm{(\Sigot{t})^{-1} (\Uot{t})^\top \Co_{j}} \leq 2\mu \sqrt{\frac{r}{n}}$ from our assumption on the termination of the algorithm. Furthermore, since $\Co_{j}$ is not thresholded, we can associate a unique column $i_j$ which is thresholded and $i_j \not\in \supp{\Co}$ such that $\norm{I - \Uot{t}(\Uot{t})^\top \Mo_{i_j}} \geq \norm{I - \Uot{t}(\Uot{t})^\top \Mo_{j}}$. Let $y^{i, t} \coloneqq (\Uot{t})^{-1} (\Sigot{t})^{-1} \Mo_{i}$ and $r^{i, t} \coloneqq (I - \Uot{t}(\Uot{t})^\top) \Mo_{i},  \forall i$. Thus we have: 
  \begin{equation*}
    \norm{y^{j, t}} \leq 2\mu\sqrt{\frac{r}{n}}, \qquad \norm{r^{j, t}} \leq \norm{r^{i_j, t}}. 
  \end{equation*}

  Let $\Qt{t}$ denote the columns of $\Lo$ that have been thresholded in the $t^{th}$ iteration. Furthermore, we definite the matrices $\Ltt[t + 1] \coloneqq \Lo_{\setminus \Qt{t}}$, $\Ntt[t + 1] \coloneqq \No_{\setminus \Qt{t}}$ and $\Ctt[t + 1] \coloneqq \Co_{\St{t}}$. Recall the notation, $\pperpu(M)=(I-UU^T)M$. We now have for any $t \geq 0$:
  \begin{align}
    & \frob{\pperputo(\Lo)} = \frob{\pperputo(\Ltt[t + 1] + (\Lo - \Ltt[t + 1]))} \leq \frob{\pperputo(\Ltt[t + 1])} + \frob{\pperputo(\Lo - \Ltt[t + 1])} \notag\\
    & \leq \frob{\pperputo(\Ltt[t + 1])} + \lprp{\sum\limits_{i \in \Qt{t}} \norm{\pperputo(\Lo_i)}^2}^{\frac{1}{2}} \overset{(\zeta_1)}{\leq} \frob{\pperputo(\Ltt[t + 1])}\left(1 + \sqrt{3\nThresh n} \frac{33}{32} \mu \sqrt{\frac{r}{n}}\right)\notag\\
    & \overset{(\zeta_2)}{\leq} \frac{5}{4} \lprp{\frob{\pperputo(\Ltt[t + 1] + \Ntt[t + 1])} + \frob{\pperputo(\Ntt[t + 1])}} \notag\\
    & \overset{(\zeta_3)}{\leq} \frac{5}{4} \lprp{\frob{\pperputo(\Ltt[t + 1] + \Ntt[t + 1] + \Ctt[t + 1])} + \frob{\Ntt[t + 1]}}\notag \\
    & \overset{(\zeta_4)}{\leq} \frac{5}{4} \lprp{\pperpuok (\Ltt[t + 1] + \Ntt[t + 1] + \Ctt[t + 1])} + \frob{\Ntt[t + 1]}  \overset{(\zeta_5)}{\leq} \frac{5}{4} \lprp{\frob{\pperpuok(\Mt[t + 1])} + \frob{\No}},\label{eq:ipg0}
  \end{align}
  where $(\zeta_1)$ follows from Lemma~\ref{lem:incPres}, $(\zeta_2)$ from triangle inequality and bound over $\rho$, $(\zeta_3)$ from Lemma~\ref{lem:disSing}, $(\zeta_4)$ from the properties of the SVD and $(\zeta_5)$ from Lemma~\ref{lem:disSing}.

  We will now show that $\frob{\pperpuok(\Mt[t + 1])}$ decreases at a geometric rate:
  \begin{align}
    & \frob{\pperpuok( \Mt[t + 1]}  \overset{(\zeta_6)}{\leq} \frob{\pperpuok(\Ltt[t + 1])} + \frob{\No} + \lprp{\frob{\pperpuok( \Ctt[t + 1])}} \notag\\
    & \overset{(\zeta_7)}{\leq} \frob{\pperpuok( \Lo)} + \frob{\No} + \lprp{\sum\limits_{j \in \St{t - 1}}\norm{\pperpuok( (\Ctt[t + 1])_{j})}^2}\notag \\
    & \overset{(\zeta_8)}{\leq} \frob{\pperpuok( \Lo)} + \frob{\No} + \lprp{2\sum\limits_{j \in \St{t}} \norm{\pperpuok( \Uot{t}\Sigot{t} y^{j, t}}^2 + \norm{\pperpuok( r^{j, t})}^2}^{\frac{1}{2}} \notag\\
    & \leq \frob{\pperpuok( \Lo)} + \frob{\No} + \lprp{8\nThresh n \frac{\mu^2 r}{n} \norm{\pperpuok( \Uot{t}\Sigot{t})}^2 + 2\sum\limits_{j \in \St{t}} \norm{\pperpuok( r^{j, t})}^2}^{\frac{1}{2}} \notag\\
    & \overset{(\zeta_9)}{\leq} \frob{\pperpuok( \Lo)} + \frob{\No} +\lprp{\frac{1}{8} \norm{\pperpuok( \Uot{t}\Sigot{t})}^2 + 2\sum\limits_{j \in \St{t}} \norm{r^{i_j, t}}^2}^{\frac{1}{2}} \notag\\
    & \overset{(\zeta_{10})}{\leq} \frob{\pperpuok( \Lo)} + \frob{\No} + \left(\frac{1}{8} \underbrace{\norm{\pperpuok( \Uot{t}\Sigot{t})}^2}_{Term\ 1} \vphantom{+ 2 \underbrace{\sum\limits_{j \in \St{t - 1}} \norm{\No_{i_j}}^2 + \lprp{\frac{33}{32}}^2 \mu^2\frac{r}{n} \lprp{\norm{(I - \Uot{t - 1}(\Uot{t - 1})^\top) (\Ltt[t - 1] + \Ntt[t - 1])}}^2}_{Term\ 2}} \right. \notag\\ 
    & \qquad\qquad\qquad\qquad \left. + 4 \underbrace{\sum\limits_{j \in \St{t}} \norm{\No_{i_j}}^2 + \lprp{\frac{33}{32}}^2 \mu^2\frac{r}{n} \lprp{\norm{\pperput(\Ltt[t] + \Ntt[t])}}^2}_{Term\ 2}\right)^{\frac{1}{2}},\label{eq:ipg1}
  \end{align}
  where $(\zeta_6)$ follows from triangle inequality, $(\zeta_7)$ follows from Lemma~\ref{lem:disSing}, $(\zeta_8)$ follows from triangle inequality and the fact that $(a + b)^2 \leq 2a^2 + 2b^2$, $(\zeta_9)$ from our previous observations about $y^{j, r}$ and $r^{j, t}$ and $(\zeta_10)$ from Lemma~\ref{lem:clExpGen}.

  We will now proceed to bound Term 1 as follows:
  \begin{equation}\label{eq:ipg2}
    \frob{\pperpuok( \Uot{t}\Sigot{t})}^2 \overset{(\zeta_{11})}{\leq} \frob{\pperpuok( \Mt[t])}^2,
  \end{equation}
  where $(\zeta_{11})$ follows from considering the full SVD of $\Mt[t]$ and Lemma~\ref{lem:disSing}.

  We now proceed to bound Term 2 as:
  \begin{multline}\label{eq:ipg3}
    \sum\limits_{j \in \St{t}} \norm{\No_{i_j}}^2 + \lprp{\frac{33}{32}}^2 \mu^2\frac{r}{n} \lprp{\norm{\pperput(\Ltt[t] + \Ntt[t])}}^2 \leq \frob{\No}^2 + \frac{9}{8} \nThresh n \frac{\mu^2 r}{n} \frob{\pperput(\Mt[t])}^2 \\
    \leq \frob{\No}^2 + \frac{1}{32} \frob{\pperpuok( \Mt[t])}^2,
  \end{multline}
  where the first inequality follows from Lemma~\ref{lem:disSing} and the second inequality from the fact that $\Uot{t+1}$ are top-$k$ left singular vectors of $\Mt[t]$.  

  Using \eqref{eq:ipg1}, \eqref{eq:ipg2}, \eqref{eq:ipg3}, we have: 
  \begin{align*}
    \frob{\pperpuok( \Mt[t + 1])}
     &\leq \frob{\pperpuok( \Lo)} + \frob{\No} + \left(\frac{1}{8}\frob{\pperpuok( \Mt[t])}^2 + 4\frob{\No}^2 + \frac{1}{8}\frob{\pperpuok( \Mt[t])}^2 \right)^\frac{1}{2} \\
    & \leq \frob{\pperpuok( \Lo)} + 3\frob{\No} + \frac{1}{2}\frob{\pperpuok( \Mt[t])},
  \end{align*}
  where the first inequality follows from Lemma~\ref{lem:disSing} and considering the full SVD of $\Mt[t]$ and the last inequality follows from the fact that $\sqrt{a + b + c} \leq \sqrt{a} + \sqrt{b} + \sqrt{c}$.

  By recursively applying the above inequality, we have:
  \begin{equation*}
   \frob{\pperpuok( \Mt[T + 1])} \leq 2\frob{\pperpuok( \Lo)} + 6\frob{\No} + \frac{\epsilon}{20n}.
  \end{equation*}

Lemma now follows using \eqref{eq:ipg0} with the above equation. 
\end{proof}

\subsection{Lemma \ref{lem:gauipConc}}
\label{sec:pfGauIpConc}

\begin{lemma}
	\label{lem:gauipConc}
	Let $Y_1, Y_2, \dots, Y_n$ be iid $d$-dimensional random vectors such that $Y_i \sim \mathcal{N}\lprp{0, I} \forall i \in [n]$. Then, we have for any $c_1 \leq 1$:
	\begin{equation*}
	\mathcal{P}\lprp{\exists v\in \mathbb{R}^d,\ \norm{v} = 1\ s.t\ \abs{\left\{i: \iprod{v}{Y_i} \geq 2\lprp{\log\lprp{\mu^2r} + \log\lprp{d} + \log\lprp{\frac{1}{c_1}}}^{\frac{1}{2}}\right\}} \geq \frac{3c_1n}{\mu^2rd}} \leq \beta,
	\end{equation*}
	when $n \geq \frac{16\mu^2rd}{c_1} \left[\log\lprp{\frac{1}{\beta}} + d\log\lprp{80d}\right]$.
\end{lemma}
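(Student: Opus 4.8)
The plan is to reuse, for each individual direction $v$, the argument from the proof of Lemma~\ref{lem:gauLthConc}, and then to upgrade it to a statement uniform over all unit vectors via an $\epsilon$-net over the sphere. First I would fix a unit vector $v$. Since $\iprod{v}{Y_i}\sim\mathcal{N}(0,1)$ i.i.d., the Gaussian tail bound gives $\mathbb{P}(\iprod{v}{Y_i}\ge\theta)\le e^{-\theta^2/2}$ for every $\theta\ge0$. Setting $\tau:=2\lprp{\log(\mu^2 r)+\log d+\log(1/c_1)}^{1/2}$, the threshold in the statement, we have $\tau^2/2=2\log(\mu^2 r d/c_1)$, and, since $\mu,r\ge1$ and $c_1\le1$, also $\tau\ge2\sqrt{\log d}$. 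For any threshold $\theta\ge\tau/\sqrt2$ we then get $\theta^2/2\ge\log(\mu^2 r d/c_1)$, so the indicators $Z^v_i:=\mathbb{I}[\iprod{v}{Y_i}\ge\theta]$ are Bernoulli with mean $p\le c_1/(\mu^2 r d)$, and in particular $np\le c_1 n/(\mu^2 r d)$. These $Z^v_i$ satisfy the hypotheses of Bernstein's inequality (Theorem~\ref{thm:bernstein}) with $\nu=np$ and $c=1$, so, exactly as in the proof of Lemma~\ref{lem:gauLthConc}, choosing the deviation parameter $t=\Theta\lprp{c_1 n/(\mu^2 r d)}$ gives
\[
  \mathbb{P}\lprp{\sum_{i=1}^n Z^v_i \ge \frac{3 c_1 n}{\mu^2 r d}}\le \exp\lprp{-\Theta\lprp{\frac{c_1 n}{\mu^2 r d}}}.
\]

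Next I would make this uniform over $v$, and this discretization step is the crux. The obstacle is that the count $F(v):=\abs{\{i:\iprod{v}{Y_i}\ge\tau\}}$ is integer-valued, hence discontinuous in $v$, so one cannot simply approximate $F(v)$ by $F(v')$ at a nearby net point. The fix is to discretize at a slightly \emph{reduced} threshold and to absorb the discretization error into the lengths $\twonorm{Y_i}$. Let $M:=\max_{i\in[n]}\twonorm{Y_i}$; by Lemma~\ref{lem:gauConc} together with a union bound over $i\in[n]$ (taking $x=\log(2n/\beta)$), we have $M\le 2\sqrt d$ with probability at least $1-\beta/2$ (in the regime of interest $\log(n/\beta)=O(d)$; in general $M=O(\sqrt d+\sqrt{\log(n/\beta)})$ and the net scale below is adjusted accordingly). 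Let $\mathcal{N}\subset S^{d-1}$ be an $\epsilon$-net with $\epsilon=\Theta(1/d)$, which can be taken of size $\abs{\mathcal N}\le(80d)^d$. Given any unit $v$, pick $v'\in\mathcal N$ with $\twonorm{v-v'}\le\epsilon$; then $\iprod{v}{Y_i}\le\iprod{v'}{Y_i}+\epsilon\,\twonorm{Y_i}$, so on $\{M\le2\sqrt d\}$ we get $\iprod{v}{Y_i}\le\iprod{v'}{Y_i}+\tfrac{1}{20\sqrt d}$ and hence $F(v)\le\abs{\{i:\iprod{v'}{Y_i}\ge\tau-\tfrac{1}{20\sqrt d}\}}$. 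Since $\tau\ge2\sqrt{\log d}$, the reduced threshold $\theta:=\tau-\tfrac{1}{20\sqrt d}$ still satisfies $\theta\ge\tau/\sqrt2$, so the per-vector estimate above applies to each $v'\in\mathcal N$ at threshold $\theta$.

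Finally I would combine the pieces: a union bound of the per-vector estimate over the $(80d)^d$ points of $\mathcal N$, plus the failure probability of $\{M\le2\sqrt d\}$, shows that the event ``some unit $v$ has $F(v)\ge\frac{3c_1 n}{\mu^2 r d}$'' has probability at most $\tfrac{\beta}{2}+(80d)^d\exp\lprp{-\Theta(c_1 n/(\mu^2 r d))}$. This is at most $\beta$ once $c_1 n/(\mu^2 r d)\gtrsim d\log(80d)+\log(1/\beta)$, i.e.\ once $n\ge\frac{16\mu^2 r d}{c_1}\bigl(\log(1/\beta)+d\log(80d)\bigr)$, which is exactly the hypothesis; the constant $16$ leaves enough slack to absorb the $\beta/2$ from the norm event and the multiplicative constant in $\abs{\mathcal N}$. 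As noted, the main obstacle is the discretization of the discontinuous counting functional $F$; beyond that, the only delicate part is constant-tracking — choosing the net scale $\epsilon$, the threshold gap $\tfrac{1}{20\sqrt d}$, and the Bernstein parameter $t$ so that the per-vector count comes out bounded by $\frac{3c_1 n}{\mu^2 r d}$ rather than a larger multiple while keeping $\abs{\mathcal N}\le(80d)^d$, which is precisely the bookkeeping the $d\log(80d)$ term in the statement is set up to accommodate.
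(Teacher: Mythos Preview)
Your overall structure---per-direction Gaussian tail plus Bernstein, followed by an $\epsilon$-net over the sphere---matches the paper's, and your argument is correct. The discretization step, however, is handled differently. You reduce the threshold and absorb the discretization error into $M=\max_i\twonorm{Y_i}$, which introduces an auxiliary high-probability event and, as you note, forces a finer net when $\log(n/\beta)\gg d$. The paper instead \emph{enlarges the event}: for each net point $v$ it bounds the probability that a standard Gaussian falls in the fattened set $\mathcal{T}(v,\theta,\delta)=\{x:\exists\,u\in\mathcal{S}_{v,\theta}\text{ with }\iprod{u}{x}\ge\delta\}$, where $\mathcal{S}_{v,\theta}$ is the spherical cap of half-angle $\theta=\csc^{-1}(10\sqrt{d-1})$ around $v$, chosen so that any two unit vectors in the same grid cell of side $1/(40d)$ lie in each other's cap. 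A geometric case split on whether the first coordinate of $y$ exceeds $\delta/\sqrt2$ (the residual case forcing $\norm{y_{2:d}}$ to be of order $\sqrt{d}\,\delta$) gives $\mathcal{P}(Y\in\mathcal{T})\le 2c_1/(\mu^2 rd)$, after which Bernstein and the union bound over $(80d)^d$ cells yield the claim directly. The paper's route thus avoids any dependence of the net scale on $n$ and any auxiliary event, so the stated constant $16$ and net size $(80d)^d$ drop out without the large-$n$ bookkeeping you flag; your route trades that cap geometry for a more standard perturbation argument.
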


\begin{proof}
  Let $v \in \mathbb{R}^d \text{ s.t } \norm{v} = 1$. We define the set $\mathcal{S}_{v, \theta}$ as follows:
  \begin{equation*}
     \mathcal{S}_{v, \theta} = \{u: u \in \mathbb{R}^d \wedge \norm{u} = 1 \wedge \iprod{u}{v} \geq \cos(\theta)\}.
  \end{equation*}
  We now define the set $\mathcal{T}\lprp{v, \theta, \delta}$ as:
  \begin{equation*}
    \mathcal{T}\lprp{v, \theta, \delta} = \{x: x \in \mathbb{R}^d \wedge \exists u \in \mathcal{S}_{v, \theta} \quad s.t \quad \iprod{u}{x} \geq \delta\}.
  \end{equation*}
  Now, let $y \sim \mathcal{N}(0, I)$. Using spherical symmetry of the Gaussian, w.l.o.g. $v = e_1$. We now define the complementary sets $\mathcal{Q}\lprp{\nu}$ and $\mathcal{R}\lprp{\nu}$ as:
  \begin{equation*}
    \mathcal{Q}\lprp{\nu} = \{x: x\in \mathbb{R}^d \wedge x_1 < \nu\}, \qquad \mathcal{R}\lprp{\nu} = \{x: x\in \mathbb{R}^d \wedge x_1 \geq \nu\}.
  \end{equation*}

  We will now bound the probability that $y\in \mathcal{T}\lprp{v, \theta, \delta}$ for $\delta = 2\lprp{\log\lprp{\mu^2r} + \log\lprp{d} + \log\lprp{\frac{1}{c_1}}}^{1 / 2}$ and $\theta = \csc^{-1} \lprp{10(d - 1)^{1/2}}$.

  \begin{align*}
    \mathcal{P}\lprp{y \in \mathcal{T}\lprp{e_1, \theta, \delta}} &= \int\limits_{\mathcal{T}\lprp{e_1, \theta, \delta}} \frac{1}{\lprp{\sqrt{2\pi}}^d} \exp\lprp{-\frac{\norm{y}^2}{2}} dy \\&= \int\limits_{\mathcal{T}\lprp{e_1, \theta, \delta} \cap \mathcal{Q}\lprp{\delta / \sqrt{2}}} \frac{1}{\lprp{\sqrt{2\pi}}^d} \exp\lprp{-\frac{\norm{y}^2}{2}} dy + \int\limits_{\mathcal{T}\lprp{e_1, \theta, \delta} \cap \mathcal{R}\lprp{\delta / \sqrt{2}}} \frac{1}{\lprp{\sqrt{2\pi}}^d} \exp\lprp{-\frac{\norm{y}^2}{2}} dy \\
    &\leq \int\limits_{\mathcal{R}\lprp{\delta / \sqrt{2}}} \frac{1}{\lprp{\sqrt{2\pi}}^d} \exp\lprp{-\frac{\norm{y}^2}{2}} dy + \int\limits_{\mathcal{T}\lprp{e_1, \theta, \delta} \cap \mathcal{Q}\lprp{\delta / \sqrt{2}}} \frac{1}{\lprp{\sqrt{2\pi}}^d} \exp\lprp{-\frac{\norm{y}^2}{2}} dy \\
    &\overset{(\zeta_1)}{\leq} \frac{c_1}{\mu^2rd} + \int\limits_{\mathcal{T}\lprp{e_1, \theta, \delta} \cap \mathcal{Q}\lprp{\delta / \sqrt{2}}} \frac{1}{\lprp{\sqrt{2\pi}}^d} \exp\lprp{-\frac{\norm{y}^2}{2}} dy,
  \end{align*}
  where $(\zeta_1)$ follows from the fact that for $t \geq 1$, $\int\limits_{t}^{\infty} \frac{1}{\sqrt{2\pi}} \exp\lprp{- \frac{x^2}{2}} dx \leq \exp\lprp{-\frac{t^2}{2}}$.

  We will use $\mathcal{M}\lprp{\theta, \delta, \gamma}$ to denote the set $\{z: z \in \mathbb{R}^{d - 1} \wedge (\gamma, z) \in \mathcal{T}\lprp{e_1, \theta, \delta} \cap \mathcal{Q}\lprp{\delta / \sqrt{2}}\}$.
  We can bound the second term as follows:
  \begin{multline}
    \label{eqn:gauMedTrm2}
    \int\limits_{\mathcal{T}\lprp{e_1, \theta, \delta} \cap \mathcal{Q}\lprp{\delta / \sqrt{2}}} \frac{1}{\lprp{\sqrt{2\pi}}^d} \exp\lprp{-\frac{\norm{y}^2}{2}} dy \\ 
    \leq \int\limits_{-\infty}^{\delta / \sqrt{2}} \frac{1}{\sqrt{2\pi}}\exp\lprp{-\frac{y_1^2}{2}} \lprp{\int\limits_{\mathcal{M}\lprp{\theta, \delta, y_1}} \frac{1}{\lprp{\sqrt{2\pi}}^{d-1}} \exp\lprp{-\frac{\norm{z}^2}{2}} dz} dy_1.
  \end{multline}
  Now, let $z \in \mathbb{R}^{d}$ be such that $z_1 = y_1 \wedge z_{2:d} \in \mathcal{M}\lprp{\theta, \delta, y_1}$ for some $y_1 \in [-\infty, \delta / \sqrt{2}]$. Therefore, $\exists w \in \mathcal{S}_{v, \theta}$ such that $\iprod{w}{z} \geq \delta$. We can decompose $w$ into its components along $v$ and orthogonal to it, $w = \cos(\theta^\prime) v + \sin(\theta^\prime) v^\perp$ for some unit vector $v^\perp$ orthogonal to $v$ and some $\theta^\prime \in [0, \theta]$. We know that $\iprod{w}{z} \geq \delta$ and that $\iprod{w}{v} \leq \delta / \sqrt{2}$. From these two inequalities and using the fact that $v = e_1$, we get:
  \[
    \sin(\theta^\prime) \norm{z_{2:d}}\geq \sin(\theta^\prime) \iprod{v^\perp}{z} \geq \delta - \cos(\theta^\prime)\iprod{v}{z} \geq \delta - \cos(\theta^\prime) \frac{\delta}{\sqrt{2}} \geq \lprp{1 - \frac{1}{\sqrt{2}}} \delta.
  \]
  This allows us to lower bound the length of $z_{2:d}$ by $10(d-1)^{1/2}\lprp{1 - \frac{1}{\sqrt{2}}}\delta$. For our choice of $\delta$ and $\theta$ and using Lemma~\ref{lem:gauConc}, we now get that the inner integration in equation~\ref{eqn:gauMedTrm2} is atmost $\frac{c_1}{\mu^2rd}$. Thus, we have the following bound on $\mathcal{P}\lprp{y \in \mathcal{T}\lprp{e_1, \theta, \delta}}$:
  \begin{equation}
    \label{eqn:probBound}
    \mathcal{P}\lprp{y \in \mathcal{T}\lprp{e_1, \theta, \delta}} \leq \frac{2c_1}{\mu^2rd}.
  \end{equation}
  Let $p$ be used to denote the value $\mathcal{P}\lprp{y \in \mathcal{T}\lprp{e_1, \theta, \delta}}$. Now, assume $Y_1, \dots, Y_n$ are iid random vectors with $Y_i \thicksim \mathcal{N}\lprp{0, I} \ \forall i \in [n]$. Now let $Z_i$ be defined such that $Z_i = \mathbb{I}\left[Y_i \in \mathcal{T}\lprp{e_i, \theta, \delta}\right] \forall i \in [n]$. Note that $Z_i$ is a Bernoulli random variable which is $1$ with probability $p$. It can be seen that $Z_i$ satisfy satisfy the conditions of \ref{thm:bernstein} with $\nu = np$ and $c = 1$. Therefore, setting $t = \frac{nc_1}{16\mu^2rd}$ in Theorem~\ref{thm:bernstein}, we get:
  \begin{equation}
    \label{eqn:oneDirBound}
    \mathcal{P}\lprp{\sum\limits_{i = 1}^{n}Z_i \geq \frac{3nc_1}{\mu^2rd}} \leq \mathcal{P}\lprp{\sum\limits_{i = 1}^{n}Z_i \geq np + \frac{nc_1}{\mu^2rd}} \leq \mathcal{P}\lprp{\sum\limits_{i = 1}^{n}Z_i \geq np + \sqrt{2\nu t} + t} \leq \exp\lprp{-t}.
  \end{equation}
  Now, consider the subset $\mathcal{K} \coloneqq \{x: x\in \mathbb{R}^d \wedge \abs{x_i} \leq 1 \forall i \in [d]\}$. Consider a partitioning of $\mathcal{K}$ into subsets $\mathcal{K}\lprp{\epsilon, j} = \{x: x\in \mathcal{K} \wedge \forall i \in [d] j_i\epsilon - 1 \leq x_i \leq (j_i + 1)\epsilon - 1\}$ where $j \in \mathcal{J}$ is an index for each of these subsets. Note that for any $\epsilon$, at most $\lprp{\ceil{\frac{2}{\epsilon}}}^d$ such indices are required to ensure that $\mathcal{K} \subseteq \bigcup_{j \in \mathcal{J}} \mathcal{K}\lprp{\epsilon, j}$. Setting $\epsilon = \frac{1}{40d}$, we have for any two unit vectors $v_1$ and $v_2$ such that $v_1, v_2 \in \mathcal{K}\lprp{\epsilon, j}$ for some $j$, $\norm{v_1 - v_2} \leq \frac{1}{40d^{1 / 2}}$. From this fact, it can be seen that Equation~\ref{eqn:oneDirBound} holds for all unit vectors in $\mathcal{K}\lprp{\epsilon, j}$ with any unit vector $v \in \mathcal{K}\lprp{\epsilon, j}$. Therefore, we choose for each subset $\mathcal{K}\lprp{\epsilon, j}$, which contains a unit vector, a unit vector $v$ and take an union bound over all such subsets $\mathcal{K}\lprp{\epsilon, j}$. After doing so we get the following bound:
  \begin{equation}
    \mathcal{P}\lprp{\exists v \in \mathbb{R}^d \wedge \norm{v} = 1 \text{ s.t } \sum\limits_{i = 1}^{n} Z_i \geq \frac{3nc_1}{\mu^2rd}} \leq (80d)^d \exp\lprp{-t} \overset{(\zeta_2)}{\leq} \beta,
  \end{equation}
  where $(\zeta_2)$ follows from the conditions of the theorem. Thus, we have proved the theorem.
\end{proof}

\subsection{Lemma \ref{lem:clExpGau}}
\label{sec:pfClExpGau}

\begin{lemma}
	\label{lem:clExpGau}
	Assume the conditions of Theorem~\ref{thm:thmG}. Let $S \subset [n]$ denote any subset such that $\abs{S} \leq \frac{1}{64\mu^2 r}$. Let $\MoS$($\LoS$, $\NoS$, $\CoS$) denote the matrices $\Mo$ ($\Lo$, $\No$, $\Co$) restricted to the columns not in $S$. Let $[\US, \SigS, \VS] = \mathcal{SVD}_{r + 1}(\MoS)$. Furthermore, let $\mathcal{E} = \{x: x = \US\SigS y \text{ for some } \norm{y} \leq 2\mu \sqrt{r / n}\}$. Then $\forall i$, we have:
	\begin{equation*}
	\norm{\Pr{\US}\lprp{\Lo_{i} + \No_{i}} - \Pr{\mathcal{E}}\lprp{\Lo_{i} + \No_{i}}} \leq \frac{33}{32}\mu\sqrt{\frac{r}{n}}\norm{\No} + \norm{\Pr{\US} (\No_{i})},
	\end{equation*}
	where $\Pr{\US}\lprp{M}=\US\US^\top M$. 
\end{lemma}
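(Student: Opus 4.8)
The plan is to reduce the claim to two elementary facts about Euclidean projections onto convex sets, and then to exploit the incoherence of $\Lo$ (through Lemma~\ref{lem:incPres}) together with the disjoint column supports of $\Lo+\No$ and $\Co$. As in the noiseless analysis I would first absorb $\Do_i=\Lo_i+\No_i$ into $\Co_i$ for every $i\in\supp{\Co}$, so that $\Lo$ and $\No$ may be assumed to have column support disjoint from that of $\Co$; for $i\in\supp{\Co}$ both sides of the inequality are then $0$ (note $0\in\mathcal{E}$), so it suffices to treat $i\notin\supp{\Co}$. Next, observe that $\mathcal{E}$ is a closed convex subset of the column space of $\US$ — it is the image of a ball under the injective map $\US\SigS$ — so for any $x$ we have $\Pr{\mathcal{E}}(x)=\Pr{\mathcal{E}}(\Pr{\US}(x))$ and $\norm{x-\Pr{\mathcal{E}}(x)}^2=\norm{x-\Pr{\US}(x)}^2+\norm{\Pr{\US}(x)-\Pr{\mathcal{E}}(\Pr{\US}(x))}^2$. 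Hence the left-hand side of the lemma equals $\mathrm{dist}\!\left(\Pr{\US}(\Lo_i+\No_i),\mathcal{E}\right)$. Writing $\Pr{\US}(\Lo_i+\No_i)=\Pr{\US}(\Lo_i)+\Pr{\US}(\No_i)$ and using that $x\mapsto\mathrm{dist}(x,\mathcal{E})$ is $1$-Lipschitz, it is enough to prove $\mathrm{dist}\!\left(\Pr{\US}(\Lo_i),\mathcal{E}\right)\le\frac{33}{32}\mu\sqrt{r/n}\norm{\No}$; the term $\norm{\Pr{\US}(\No_i)}$ is then picked up for free.

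For this remaining inequality I would use Claim~3 of Lemma~\ref{lem:incPres} (applicable since $\abs{S}\le\frac{1}{64\mu^2r}\le\frac{1}{32\mu^2r}$) to write $\Lo_i=\UoS\SigoS w_i$ with $\norm{w_i}\le\frac{33}{32}\mu\sqrt{r/n}$, where $\UoS\SigoS(\VoS)^\top=\mathcal{SVD}_r(\LoS)$; in particular $\UoS\SigoS=\LoS\VoS$. Because the column supports of $\Lo+\No$ and $\Co$ are disjoint, $\LoS+\NoS$ is exactly $\MoS$ with the columns in $\supp{\Co}$ zeroed out, so $\Pr{\US}(\LoS+\NoS)=\US\SigS(\VS)^\top Q$, where $Q$ is the coordinate projection onto the columns not in $\supp{\Co}$. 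Decomposing $\LoS=(\LoS+\NoS)-\NoS$ and right-multiplying by $\VoS w_i$ gives
\[
\Pr{\US}(\Lo_i)=\US\SigS\big((\VS)^\top Q\VoS w_i\big)-\Pr{\US}\big(\NoS\VoS w_i\big).
\]
The first term lies in $\mathcal{E}$, since $\norm{(\VS)^\top Q\VoS w_i}\le\norm{w_i}\le\frac{33}{32}\mu\sqrt{r/n}\le 2\mu\sqrt{r/n}$ ($\VS$, $Q$, $\VoS$ are all contractions). Therefore $\mathrm{dist}\!\left(\Pr{\US}(\Lo_i),\mathcal{E}\right)$ is at most the norm of the second term, which is bounded by $\norm{\NoS}\norm{\VoS}\norm{w_i}\le\frac{33}{32}\mu\sqrt{r/n}\norm{\No}$; combining with the previous paragraph completes the argument.

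The step I expect to need the most care is exactly the last decomposition: it is essential to keep $\Pr{\US}$ in the form $\US\US^\top$ — rather than passing through $\SigS^{-1}\US^\top$, as one does when bounding incoherence — so that the factor $\SigS^{-1}$ implicit in ``membership in $\mathcal{E}$'' cancels against $\SigS$ and the residual is controlled by $\norm{\No}$ and not by $\norm{\No}/\sigma_{r+1}(\MoS)$; the latter would be useless here, since in the Gaussian regime $\norm{\No}$ and $\sigma_{r+1}(\MoS)$ are of the same order, so the naive approach of trying to place $\Pr{\US}(\Lo_i)$ inside $\mathcal{E}$ directly fails by a small constant factor. One should also double-check that the absorption of $\Do_i$ into $\Co_i$ leaves the column-support disjointness intact (hence the identity $\LoS+\NoS=\MoS Q$), together with the trivial bounds $\norm{\NoS}\le\norm{\No}$ and $\norm{\VoS}\le 1$. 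Note that, apart from the size constraint on $S$, the Gaussian hypothesis of Theorem~\ref{thm:thmG} is not actually used in this lemma; it enters only later, when $\norm{\No}$ and $\norm{\Pr{\US}(\No_i)}$ are bounded by concentration.
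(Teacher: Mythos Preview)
Your proposal is correct and follows essentially the same route as the paper: both arguments exhibit the same witness $y_i=\US\SigS\VS^\top\VoS w_i\in\mathcal{E}$ (your $\US\SigS(\VS^\top Q\VoS w_i)$ equals this, since $Q\VoS=\VoS$), use that $\Pr{\mathcal{E}}$ factors through $\Pr{\US}$ so the left side is $\mathrm{dist}(\Pr{\US}(\Lo_i+\No_i),\mathcal{E})\le\|\Pr{\US}(\Lo_i+\No_i)-y_i\|$, and then bound $\|\Pr{\US}(\Lo_i)-y_i\|$ by $\frac{33}{32}\mu\sqrt{r/n}\|\Pr{\US}(\NoS)\|$ via the identity $\Pr{\US}(\MoS)\VoS=\Pr{\US}(\LoS+\NoS)\VoS$. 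Your write-up is in fact a bit more explicit than the paper's about why the projection inequality and the disjoint-support identity hold.
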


\begin{proof}
  Let $[\UoS, \SigoS, \VoS] = \mathcal{SVD}_{r}(\LoS)$. Using Lemma~\ref{lem:incPres}, $\Lo_{i} = \LoS \VoS w_i$ for some $\norm{w_i} \leq \frac{33}{32} \mu \sqrt{\frac{r}{n}}$. Now consider the vector $y_i \coloneqq \US\SigS\VS^\top \VoS w_i$. Note that $\norm{(\SigS)^{-1} \US^\top y_i} \leq \frac{33}{32} \mu \sqrt{\frac{r}{n}}$. Now, using definition of $\Pr{\mathcal {E}}$: 
  \begin{align*}
  	&\norm{\Pr{\US}\lprp{\Lo_{i} + \No_{i}} - \Pr{\mathcal{E}}\lprp{\Lo_{i} + \No_{i}}} \leq \norm{\Pr{\US}\lprp{\Lo_{i} + \No_{i}} - y_i} \\ 
  	&\leq \norm{\Pr{\US} (\No_{i})} + \norm{\Pr{\US} \lprp{\LoS \VoS w_i} - \US\SigS\VS^\top \VoS w_i} \\
  	&\overset{(\zeta_1)}{\leq} \norm{\Pr{\US} (\No_{i})} + \frac{33}{32} \mu \sqrt{\frac{r}{n}} \norm{\Pr{\US}\lprp{\LoS}\VoS - \Pr{\US}\lprp{\LoS + \NoS} \VoS} \\
    &\leq \norm{\Pr{\US} (\No_{i})} + \frac{33}{32} \mu \sqrt{\frac{r}{n}} \norm{\Pr{\US} \lprp{\NoS}},
  \end{align*}
  where $(\zeta_1)$ holds from Lemma \ref{lem:incPres} and the fact that $\VoS$ has zeros in the rows corresponding to the support of $\Co$.
\end{proof}

\subsection{Proof of Lemma \ref{lem:invGau}}
\label{sec:pfInvGau}
\begin{proof}
	 Note that under the conditions of Theorem~\ref{thm:thmG}, the following three conditions hold with probability at least $1 - \delta$. 
   \begin{align*}    
    \#i \lprp{\norm{\No_i} \geq \sigma \lprp{\sqrt{d} + 2d^{\frac{1}{4}} \sqrt{\log\lprp{\frac{\mu^2 r}{c_2}}}}} &\leq \frac{3c_2n}{\mu^2 r} \\
    \forall \norm{v} = 1\qquad \#i \lprp{\abs{\iprod{v}{\No_i}} \geq 2 \sigma \sqrt{\log \lprp{\frac{\mu^2 r^2 d}{c_1}}}} &\leq \frac{6c_1n}{\mu^2 r^2 d} \\
    0.9\sigma\sqrt{n} \leq \norm{\No} &\leq 1.1\sigma\sqrt{n} 
  \end{align*}
  Also, for $c_1 = \frac{1}{12288}$ and $c_2 = \frac{1}{1536}$, the invariant implies at most $\frac{n}{512 \mu^2 r}$ clean data points are thresholded at any stage. This allows us to apply Lemma~\ref{lem:clExpGau} in the subsequent steps.

  We will prove the lemma by induction on the number of thresholding steps completed so far. Let $t$ denote the number of thresholding steps executed so far. 
	
	\textbf{Base Case (t = 0): } The invariant trivially holds before any data points have been thresholded.
	
	\textbf{Induction Step (t = k + 1): } Assuming that the invariant holds after the $k^{th}$ thresholding step, we now have two cases for the $(k + 1)^{th}$ thresholding step:
	\begin{enumerate}[leftmargin=*,labelindent=\labelwidth+\labelsep]
		\item[\textit{Case 1:}] Thresholding with respect to $\zeta_1$. For a column $i \not\in \supp{\Co}$ thresholded with respect to $\zeta_1$, we have from Lemma \ref{lem:clExpGau} that: 
		\begin{multline}
      \label{eqn:lpts}
			\norm{\No_{i}} \geq \zeta_1 - \frac{33}{32} \mu \sqrt{\frac{r}{n}} \norm{\No} = \sigma \lprp{\frac{5}{4} \mu \sqrt{r} + d^{\frac{1}{2}} + 2d^{\frac{1}{4}} \sqrt{\log\lprp{\frac{\mu^2 r}{c_2}}}} - \frac{5}{4} \sigma \mu \sqrt{r} \\
      \geq \sigma \lprp{d^{\frac{1}{2}} + 2d^{\frac{1}{4}} \sqrt{\log\lprp{\frac{\mu^2 r}{c_2}}}}.
		\end{multline}
		From our choice of $\zeta_1$, there are only $\frac{n}{1024 \mu^2 r}$ clean points which satisfy \ref{eqn:lpts}.
		\item[\textit{Case 2:}] Thresholding with respect to $\zeta_2$. For a column $i \not\in \supp{\Co}$, it can only be thresholded if the number of columns to be thresholded exceeds $\frac{24c_1 n}{\mu^2 r d}$. Note that:
    \begin{equation*}
      \norm{\Pr{U^{(k + 1)}}(\No_{i})} \geq \zeta_2 - \frac{5}{4} \sigma\mu\sqrt{r} = 2 \sigma \sqrt{2r} \sqrt{\log \lprp{\frac{\mu^2r^2d}{c_1}}}.
    \end{equation*}
    Note that $U^{(k + 1)}$ is at most a rank $2r$ subspace. Therefore, $\exists j$ such that: 
    \begin{equation}
      \label{eqn:prpts}
      \abs{\iprod{\No_i}{U^{(k + 1)}_j}} \geq 2\sigma \sqrt{\log \lprp{\frac{\mu^2r^2d}{c_1}}}.
    \end{equation}
    Taking a union bound over all $j \in [r + 1]$ and using Lemma~\ref{lem:gauipConc} for both positive and negative inner product values, we get that at most $\frac{12c_1 n}{\mu^2 rd}$ clean points satisfy \ref{eqn:prpts}. Since, we threshold at least $\frac{24c_1 n}{\mu^2 rd}$ points, at least half of them must be outliers and hence the invariant holds in the next iteration.
	\end{enumerate} 
\end{proof}

\section{Gaussian Noise: Proof of Theorem \ref{thm:thmG}}
\label{sec:pfThmGauss}
\begin{proof}
	We will prove the theorem for $c_1 = \frac{1}{12288}$ and $c_2 = \frac{1}{1536}$. For our choices of $c_1$ and $c_2$ and $n$, we have that:
  \begin{align*}    
    \#i \lprp{\norm{\No_i} \geq \sigma \lprp{\sqrt{d} + 2d^{\frac{1}{4}} \sqrt{\log\lprp{\frac{\mu^2 r}{c_2}}}}} &\leq \frac{3c_2n}{\mu^2 r} \\
    \forall \norm{v} = 1\qquad \#i \lprp{\abs{\iprod{v}{\No_i}} \geq 2 \sigma \sqrt{\log \lprp{\frac{\mu^2 r^2 d}{c_1}}}} &\leq \frac{6c_1n}{\mu^2 r^2 d} \\
    0.9\sigma\sqrt{n} \leq \norm{\No} &\leq 1.1\sigma\sqrt{n} 
  \end{align*}
  with probability at least $1 - \delta$ from Lemmas~\ref{lem:gauipConc}, \ref{lem:gauLthConc} and Corollary \ref{cor:probNrmBnd} along with our choice of $n$.

	From Lemma \ref{lem:invGau}, we know that Invariant \ref{inv:incMnt} holds at the termination of the algorithm. Therefore, at most $\frac{n}{512\mu^2r}$ inliers are removed (The number of inliers removed is at most $\colSp{}n + \frac{n}{1024\mu^2 r}$). 

  Suppose the algorithm terminated in the $T^{th}$ iteration. Let $M \coloneqq \Mo - \Ct[T]$. We will start by making a few observations. The algorithm terminates when no data point is thresholded. Let $[\U, \Sigma, \V] = \mathcal{SVD}_{r + 1}(M)$. Furthermore, define $\mathcal{E} = \{x: x = U\Sigma y \text{ for some } \norm{y} \leq 2\mu \sqrt{r / n}\}$. Now, define set $A$ as:
	
	\begin{equation*}
	A \coloneqq \left\{ i : \norm{\Pr{U}((M)_{i}) - \Pr{\mathcal{E}}((M)_{i})} \geq \sigma\sqrt{2r} \lprp{\frac{5}{4} \mu + 2\log^{\frac{1}{2}} \lprp{\frac{\mu^2 r^2 d}{c_1}}} \right\},
	\end{equation*}
	and $B$ as:
	\begin{equation*}
	B \coloneqq \left\{i: \norm{\Pr{U}(M_{i}) - \Pr{\mathcal{E}}(M_{i})} \geq \sigma \lprp{d^{\frac{1}{2}} + 2 d^{\frac{1}{4}} \lprp{\log^{\frac{1}{2}} \lprp{\frac{\mu^2 r}{c_2}}}} + \sigma \frac{5}{4} \mu \sqrt{r}\right\}.
	\end{equation*}
	
	Recall that we will threshold the columns in $A$ if $\abs{A} \geq \frac{24c_1 n}{\mu^2 r d}$ and the columns in $B$ if $B$ is not empty. Therefore, we know that:	
	\begin{equation}\label{eq:thmG0}
	\forall i \in [n] \norm{\Pr{U}(M_{i}) - \Pr{\mathcal{E}}(M_{i})} \leq \sigma \lprp{d^{\frac{1}{2}} + 2 d^{\frac{1}{4}} \lprp{\log^{\frac{1}{2}} \lprp{\frac{\mu^2 r}{c_2}}}} + \sigma \frac{5}{4} \mu \sqrt{r}, \ \abs{A} \leq \frac{24c_1n}{\mu^2 r d}.
	\end{equation}
	
	Let $S$ denote the set of data points that have been thresholded when the algorithm terminated, i.e $S = \cs^{(T)}$. Let $L = \LoS$, $N = \NoS$ and $C = \CoS$. Additionally, let $[\UoS, \SigoS, \VoS] = \mathcal{SVD}(L)$. Similar to the proofs of Theorems~\ref{thm:thmCln} and \ref{thm:thmN}, we start as follows:
	\begin{align}
	&\norm{\pperpur( \Lo)} \leq \lprp{\norm{\pperpur( L)}^2 + \sum\limits_{i \in S} \norm{\pperpur( \UoS \SigoS w_i)}^2}^{\frac{1}{2}}\notag\\
	& \overset{\zeta_1}{\leq} \lprp{\norm{\pperpur( L)}^2 + \sum\limits_{i \in S} \frac{9}{8} \mu^2 \frac{r}{n} \norm{\pperpur( \UoS \SigoS)}^2}^{\frac{1}{2}} \overset{\zeta_2}{\leq} \lprp{\norm{\pperpur( L)}^2 + 3 \nThresh n\frac{9}{8} \mu^2 \frac{r}{n} \norm{\pperpur( L)}^2}^{\frac{1}{2}} \notag\\
	& \leq \frac{5}{4} \norm{\pperpur( L)} \leq \frac{5}{4} \lprp{\norm{N} + \norm{\pperpur( L + N)}}\notag\\
	& \overset{\zeta_3}{\leq} \frac{5}{4} \lprp{\norm{\No} + \norm{\pperpur( M)}} = \frac{5}{4} \lprp{\norm{\No} + \norm{\pperpur( \Pr{U}(M))}},\label{eq:thmG1}
	\end{align}
	$\zeta_1$ follows using Lemma~\ref{lem:incPres}, $\zeta_2$ follows using $|S|\leq 2\rho n$, $\zeta_3$ follows using Lemma~\ref{lem:disSing}
	where the last equality follows from the fact that $U$ consists of the top $r + 1$ singular vectors of $M$.
	
	Now, let $Y$ be an orthogonal basis of the subspace spanned by $\Pr{U} (L)$. Note that the subspace spanned by $Y$ is at most rank-$r$. Let $O$ denote the set of corrupted columns that haven't been thresholded at the termination of the algorithm and let $O_l \coloneqq O \cap A$ and $O_s \coloneqq O \backslash O_l$. We can now bound $\norm{\pperpur( \Pr{U}(M))}$ as follows:
	\begin{align}
	\norm{\pperpur( \Pr{U}(M))} & \leq \norm{\pperpy( \Pr{U}(L + N + C))} \leq \norm{\pperpy( \Pr{U}(N + C))} \leq \norm{N} + \norm{\pperpy( \Pr{U}(C))} \notag\\
	& \leq \norm{\No} + \lprp{\underbrace{\sum\limits_{i \in O_l} \norm{\pperpy( \Pr{U}\lprp{C_{i}})}^2}_{\text{Term 1}} + \underbrace{\sum\limits_{j \in O_s} \norm{\pperpy( \Pr{U}\lprp{C_{j}})}^2}_{\text{Term 2}}}^{\frac{1}{2}},\label{eq:thmG2}
	\end{align}
	where first inequality follows from the fact that $U_{1:r}$ are top singular vectors of $\Pr{U}(M)$ and second inequality follows from definition of $Y$. 
	
	We can now bound Term 1 as follows: 
	\begin{align*}
	& \sum\limits_{i \in O_l} \norm{\pperpy( \Pr{U}\lprp{C_{i}})}^2 \overset{(\zeta_1)}{\leq} 2\sum\limits_{i \in O_l} \norm{\pperpy( \Pr{\mathcal{E}}\lprp{C_{i}})}^2 + \norm{\pperpy( \lprp{\Pr{U}\lprp{C_{i}} - \Pr{\mathcal{E}}\lprp{C_{i}}})}^2 \\
	& \leq 2\sum\limits_{i \in O_l} \frac{4\mu^2 r}{n} \norm{\pperpy( U\Sigma)}^2 + \norm{\Pr{U}\lprp{C_{i}} - \Pr{\mathcal{E}}\lprp{C_{i}}}^2\hfill (\text{From Definition of } \mathcal{E})\\
	& \overset{(\zeta_2)}{\leq} \frac{48c_1 n}{\mu^2 r d} \lprp{\frac{4\mu^2 r}{n} \norm{\pperpy( U\Sigma)}^2 + \sigma^2 \lprp{4\mu^2 r + 2\lprp{\sqrt{d} + 2d^{\frac{1}{4}}\sqrt{\log \lprp{\frac{\mu^2 r}{c_2}}}}^2}} \\
	& \overset{(\zeta_3)}{\leq} \frac{48c_1 n}{\mu^2 r d} \lprp{\frac{4\mu^2 r}{n} \norm{\pperpy( U\Sigma)}^2 + \sigma^2 \lprp{4\mu^2 r + 2\lprp{2d + 8d^{\frac{1}{2}}\log \lprp{\frac{\mu^2 r}{c_2}}}}} \\
	& \leq \frac{48c_1 n}{\mu^2 r d} \lprp{\frac{4\mu^2 r}{n} \norm{\pperpy( U\Sigma)}^2 + \sigma^2 \lprp{4\mu^2 r + 4d + 16d^{\frac{1}{2}}\log \lprp{\frac{\mu^2 r}{c_2}}}} \\
	& \overset{(\zeta_4)}{\leq} \frac{48c_1 n}{\mu^2 r d} \lprp{\frac{4\mu^2 r}{n} \norm{\pperpy( U\Sigma)}^2 + \sigma^2 \lprp{4\mu^2 r + 4d + 16d^{\frac{1}{2}}\log(\mu^2r) + 16d^{\frac{1}{2}}\log(1536)}} \\
	& \overset{(\zeta_5)}{\leq} \frac{48c_1 n}{\mu^2 r d} \lprp{\frac{4\mu^2 r}{n} \norm{\pperpy( U\Sigma)}^2 + \sigma^2 \lprp{24\mu^2 rd + 120d}} \\
	& \leq \frac{\norm{\pperpy( M)}^2}{32} + \frac{48 c_1 n}{\mu^2 r d} \sigma^2 \lprp{144 \mu^2 r d} \hfill (\text{From Lemma~\ref{lem:disSing}}) \\
	& \leq \frac{\norm{\pperpy( M)}^2}{32} + 0.563 \sigma^2 n
	\end{align*}
	where $(\zeta_1)$ and $(\zeta_3)$ follow from $(a + b)^2 \leq 2a^2 + 2b^2$, $\zeta_2$ follows from \eqref{eq:thmG0}.  $(\zeta_4)$ and $(\zeta_5)$ follow from $\log (x) \leq x $ and $\sqrt{x} \leq x$ for all $x \geq 1$.
	
	We now bound Term 2 as:
	\begin{align*}
	& \sum\limits_{j \in O_s} \norm{\pperpy( \Pr{U}\lprp{C_{j}})}^2 \overset{(\zeta_6)}{\leq} 2\sum\limits_{j \in O_s} \norm{\pperpy( \Pr{\mathcal{E}}\lprp{C_{j}})}^2 + \norm{\pperpy( \lprp{\Pr{U}\lprp{C_{j}} - \Pr{\mathcal{E}}\lprp{C_{j}}})}^2 \\
	& \leq 2\sum\limits_{i \in O_s} \frac{4\mu^2 r}{n} \norm{\pperpy( U\Sigma)}^2 + \norm{\Pr{U}\lprp{C_{j}} - \Pr{\mathcal{E}}\lprp{C_{j}}}^2 \hfill (\text{Definition of $\mathcal{E}$})\\
	& \overset{(\zeta_7)}{\leq} 2 \colSp n \lprp{\frac{4\mu^2 r}{n} \norm{\pperpy( U\Sigma)}^2 + 2 \sigma^2 \lprp{4\mu^2 r + 8 r \log \lprp{\frac{\mu^2 r^2 d}{c_1}}}} \\
	& \overset{(\zeta_8)}{\leq} \frac{\norm{\pperpy( M)}^2}{32} + 4 \colSp n \sigma^2 \lprp{4 \mu^2 r + 8r\log (\mu^2) + 16r\log(r) + 8r\log(d) + 8r \log(12288)} \\
	& \overset{(\zeta_9)}{\leq} \frac{\norm{\pperpy( M)}^2}{32} + 4 \colSp n \sigma^2 \lprp{12 \mu^2 r + 24r\log(d) + 8r \log(12288)} \overset{(\zeta_{10})}{\leq} \frac{\norm{\pperpy( M)}^2}{32} + 0.435 \sigma^2 n \log(d),
	\end{align*}
	where $(\zeta_6)$ and $(\zeta_7)$ follow from $(a + b)^2 \leq 2a^2 + 2b^2$ along with \eqref{eq:thmG0}. $(\zeta_8)$ follows from Lemma~\ref{lem:disSing}, $(\zeta_9)$ follows from the fact that $r \leq d$ and $\log(x) \leq x$ and $(\zeta_{10})$ follows from assuming $\log (d) \geq 1$ and $\mu \geq 1$.
	
	From our bounds on Term 1 and Term 2 and \eqref{eq:thmG2}, we have:
	\begin{equation*}
	\norm{\pperpur( \Pr{U} (M))} \leq \norm{\pperpy( \Pr{U}(M))} \leq \frac{4}{3} \sigma \sqrt{n \log(d)}.
	\end{equation*}
	Theorem now follows by using the above observation with \eqref{eq:thmG1}: 
	\begin{equation*}
	\norm{\pperpur( \Lo)} \leq \frac{6}{4} \sigma \sqrt{n} + \frac{5}{3} \sigma \sqrt{n \log(d)} \leq 4 \sigma\sqrt{n \log(d)}.
	\end{equation*}
\end{proof}


\section{\ncpcab}
\label{sec:torpb}
In this section, we propose an improvement to Algorithm~\ref{alg:ncpca} which uses binary search instead of a linear scan in the outer iteration. This improves the running time on Algorithm~\ref{alg:ncpca} by almost a factor of $r$.

\subsection{Algorithm}
\label{sec:algToprb}
In this section, we present our algorithm (See Algorithm~\ref{alg:ncpcab}) for \orpcan{} which improves the running time of Algorithm~\ref{alg:ncpca} by almost a factor of $r$. The main insight is that inner iteration of Algorithm~\ref{alg:ncpca} is independent of the value of $k$ in the outer iteration save for the rank of the projection. In Algorithm~\ref{alg:ncpcab}, we use binary search on $k$ instead of a linear scan which reduces the number of outer iterations from $\order{r}$ to $\order{\log r}$.

\begin{algorithm}[!ht]
  \caption{Binary search based TORP (\ncpcab)}
  \label{alg:ncpcab}
  \begin{algorithmic}[1]
    \STATE \textbf{Input}: Corrupted matrix $\Mo \in \mathbb{R}^{d\times n}$, Target rank $r$, Expressivity parameter $\thresh$, Threshold fraction $\nThresh$, Number of inner iterations $T$
    \STATE $minK \leftarrow 1$, $maxK \leftarrow r$

    \WHILE{$minK \leq maxK$}
    \STATE $k \leftarrow \floor{\frac{minK + maxK}{2}}$
    \STATE $\Ct[0] \leftarrow 0$, $\tau \leftarrow false$
      \FOR{$t = 0$ to $t = T$}
        \STATE $[\Uot{t}, \Sigot{t}, \Vot{t}] \leftarrow \mathcal{SVD}_k\lprp{\Mo - \Ct[t]}$, $\Lt[t] \leftarrow \Uot{t}\Sigot{t}(\Vot{t})^\top$ \hspace{1em}\rlap{\smash{$\left.\begin{array}{@{}c@{}}\\{}{}\end{array} \right\}%
        \begin{tabular}{c}Projection onto space of \\ low rank matrices \end{tabular}$}}
        \vspace*{1.5em}
        \STATE $E \leftarrow (\Sigot{t})^{-1} (\Uot{t})^\top \Mo$ {\em /* Compute Incoherence */}
        \STATE $R \leftarrow (I - \Uot{t}(\Uot{t})^\top) \Mo$ {\em /* Compute residual */} 
        \hspace{5.7em}\rlap{\smash{$\left.\begin{array}{@{}c@{}}\\{}\\{}\\{}\end{array} \right\}%
        		\begin{tabular}{c}Projection onto space of \\ column sparse matrices \end{tabular}$}}
        \STATE $\cs^{(t + 1)} \leftarrow \mathcal{HT}_{2\nThresh} \lprp{\Mo, E} \cup \mathcal{HT}_\nThresh \lprp{\Mo, R}$ 
        \STATE $\Ct[t + 1] \leftarrow \Mo_{\cs^{(t + 1)}}$

        \STATE $\nOut \leftarrow \abs{\{i: \norm{E_{i}} \geq \thresh\}}$ {\em /* Compute high incoherence points */}
        \STATE $\tau \leftarrow \tau \vee (\nOut \geq 2\nThresh n)$ {\em /* Check termination conditions */}
      \ENDFOR
      \IF{$\tau$}
        \STATE $maxK \leftarrow k - 1$
      \ELSE
      	\STATE $minK \leftarrow k + 1$
      	\STATE $[U, \Sigma, V] \leftarrow \mathcal{SVD}_k\lprp{\Mo - \Ct[T + 1]}$
      \ENDIF
    \ENDWHILE

    \STATE \textbf{Return: }$U$
  \end{algorithmic}
\end{algorithm}

\subsection{Analysis}
\label{sec:anncpcab}
In this section, we will state and prove a theoretical guarantee for Algorithm~\ref{alg:ncpcab}.

\begin{theorem}
  Assume the conditions of Theorem~\ref{thm:thmN}. Furthermore, suppose that $\frob{\No} \leq \frac{\sigma_{k}(\Lo)}{16}$ for some $k$. Then, for all $\alpha\leq \frac{1}{128 \mu^2 r}$, Algorithm~\ref{alg:ncpcab} when run with parameters $\rho=\frac{1}{128\mu^2 r}$, $\eta=2\mu \sqrt{\frac{r}{n}}$ and $T = \log{\frac{20n\sigma_1(M)}{\epsilon}}$, returns a subspace $U$ which satisfies:
  \begin{equation*}
    \frob{(I - UU^\top) \Lo} \leq 3 \frob{(I - \Uo_{1:k}(\Uo_{1:k})^\top) \Lo} + 9 \frob{\No} + \frac{\epsilon}{10n}.  
  \end{equation*}
\end{theorem}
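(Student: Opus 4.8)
The plan is to piggy-back on the analysis of Algorithm~\ref{alg:ncpca}. The first observation is structural: for any fixed value of the outer rank, the body of the inner \texttt{for} loop of Algorithm~\ref{alg:ncpcab} is \emph{identical} to that of Algorithm~\ref{alg:ncpca} -- it performs the same $\mathcal{SVD}_k$, forms the same $E$ and $R$, applies $\mathcal{HT}_{2\nThresh}$ and $\mathcal{HT}_\nThresh$ with the same $\nThresh$, sets $\Ct[t+1]\leftarrow\Mo_{\cs^{(t+1)}}$, and maintains the same flag $\tau$. Hence every run of this inner loop, regardless of how the outer rank was chosen, is governed by Lemmas~\ref{lem:clExpGen},~\ref{lem:clResGen} and~\ref{lem:inItPerfGen}. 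I will use two consequences. \textbf{(a)} Since at most $2\nThresh n+\nThresh n=3\nThresh n$ columns are thresholded in any inner iteration and $\frob{\No}\le\sigma_k(\Lo)/16\le\sigma_{k'}(\Lo)/16$ for every $k'\le k$, Lemma~\ref{lem:clExpGen} guarantees that whenever the inner loop runs with a rank $k'\le k$ the count $\nOut$ never reaches $2\nThresh n$, so $\tau$ stays $false$ and the loop exits through the \texttt{else} branch. \textbf{(b)} By Lemma~\ref{lem:inItPerfGen}, whenever the inner loop at rank $k'$ exits through the \texttt{else} branch (so $\tau$ was never set), the recorded iterate $U=\mathcal{SVD}_{k'}(\Mo-\Ct[T+1])$ satisfies $\frob{(I-UU^\top)\Lo}\le 3\frob{(I-\Uo_{1:k'}(\Uo_{1:k'})^\top)\Lo}+9\frob{\No}+\epsilon/(10n)$.

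The second and main step is the binary-search bookkeeping: I claim the $U$ that is finally returned was recorded at some rank $\hat k\ge k$ whose inner loop did not self-terminate. By (a), the \texttt{if} branch (which shrinks the window via $maxK\leftarrow k_{\mathrm{cur}}-1$, where $k_{\mathrm{cur}}$ is the current midpoint rank) is never taken when $k_{\mathrm{cur}}\le k$; therefore $maxK$, which starts at $r\ge k$ and is only ever decreased by such updates, stays $\ge k$ throughout the search. It follows that the loop cannot terminate while $minK\le k$ (for then $minK\le k\le maxK$), yet the window $[minK,maxK]$ strictly shrinks each iteration until termination, so $minK$ must eventually exceed $k$; the first time this happens, $minK$ is set to $k_{\mathrm{cur}}+1>k$ by an \texttt{else} branch with $k_{\mathrm{cur}}\ge k$, and that branch records $U$ at rank $k_{\mathrm{cur}}\ge k$. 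As $minK$ only increases, any later \texttt{else} branch records at a strictly larger rank, so the last recorded rank $\hat k$ satisfies $\hat k\ge k$; in particular at least one recording occurs, so $U$ is well defined. I expect this off-by-one / degenerate-case analysis -- which is essentially the ``terminates at some $k'>k$'' case in the proof of Theorem~\ref{thm:masterN} transported to the binary-search loop -- to be the part requiring the most care.

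Finally I would assemble the bound. Applying (b) at the final recorded rank $\hat k$ gives
\begin{equation*}
  \frob{(I-UU^\top)\Lo}\ \le\ 3\,\frob{\big(I-\Uo_{1:\hat k}(\Uo_{1:\hat k})^\top\big)\Lo}+9\,\frob{\No}+\frac{\epsilon}{10n}.
\end{equation*}
Since $\frob{(I-\Uo_{1:j}(\Uo_{1:j})^\top)\Lo}^2=\sum_{i>j}\sigma_i(\Lo)^2$ is non-increasing in $j$ and $\hat k\ge k$, the first term is at most $3\,\frob{(I-\Uo_{1:k}(\Uo_{1:k})^\top)\Lo}$, giving exactly the claimed inequality. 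For the running-time remark, there are $O(\log r)$ outer (binary-search) iterations, each executing $T=\log\frac{20n\sigma_1(\Mo)}{\epsilon}$ inner iterations whose dominant cost is a rank-$\le r$ SVD of an $n\times d$ matrix ($O(ndr)$ time), for a total of $O\!\big(ndr\log r\,\log\frac{n\sigma_1(\Mo)}{\epsilon}\big)$, an $\approx r$-factor improvement over Algorithm~\ref{alg:ncpca}.
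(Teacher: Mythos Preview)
Your proposal is correct and follows essentially the same approach as the paper: both use Lemma~\ref{lem:clExpGen} to conclude that the inner loop never sets $\tau$ when the current rank is $\le k$ (equivalently, $maxK$ never drops below $k$), then argue that the last \texttt{else}-branch recording occurs at some rank $\hat k\ge k$, and finally apply Lemma~\ref{lem:inItPerfGen} at $\hat k$ together with the monotonicity of $\frob{(I-\Uo_{1:j}(\Uo_{1:j})^\top)\Lo}$ in $j$. Your binary-search bookkeeping tracks when $minK$ first exceeds $k$, whereas the paper maintains the invariant $maxK\ge k$ and identifies the last successful iteration as the one at rank $maxK^{(T+1)}$; these are equivalent accountings of the same mechanism.
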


\begin{proof}
  We will begin by bounding the running time of the algorithm. Note that because of the binary search, the algorithm will run for at most $\order{\log r}$ outer iterations. 

  Let $t$ denote the number of outer iterations of the algorithm. Let the value of $k$($maxK$, $minK$) in iteration $t$ be denoted by $k^{(t)}$ ($maxK^{(t)}$, $minK^{(t)}$). We will first prove the claim that at any point in the running of the algorithm, $maxK \geq k$. We will prove the claim via induction on the number of iterations $t$:
  \begin{enumerate}
    \item[] Base Case: $t = 0$ The base case is trivially true as $maxK = r$.
    \item[] Induction Step: $t = l + 1$ Assume that the claim remains true at iteration $t = l$. In the $(l + 1)^{th}$ iteration, we assume two cases:
    \begin{enumerate}
      \item[] Case 1: The inner iteration finishes with $\tau = false$. In this case, $maxK$ is not updated. So, the claim remains true for $t = (l + 1)$
      \item[] Case 2: The inner iteration finishes with $\tau = true$. In this case, $k^{(t)} > k$ (From Lemma~\ref{lem:clExpGen} and the termination condition of the inner iteration.). In this iteration, $maxK$ is updated to $k^{(t)} - 1 \geq k$. Thus, the claim remains true.
    \end{enumerate}
  \end{enumerate}

  Therefore, at termination of the algorithm, we have $maxK \geq k$. Suppose that the algorithm terminated after iteration $T$. Note that $minK^{(t)} \leq k \leq maxK^{(t)} \forall 0\leq t \leq T$. Therefore, we have $minK^{(T + 1)} = maxK^{(T + 1)} + 1$. For this to happen, the inner iteration must have run with $k^{(T^\prime)} = maxK^{(T + 1)}$ with $\tau = false$ for some iteration $T^\prime$ and also that this is the last such successful iteration as $minK$ is not updated after iteration $T^\prime$. Therefore, the algorithm returns the subspace corresponding to $k^{(T^\prime)} = maxK^{(T + 1)} \geq k$. Since the inner iteration is successful for iteration $T^\prime$, the Theorem is true from the application of Lemma~\ref{lem:inItPerfGen} and noting that $k^{T^\prime} \geq k$.
\end{proof}

\section{Fast Projection Operator}
\label{sec:fpo}
In this section, we will describe a fast algorithm to compute the projection operator onto the ellipsoid in Algorithm~\ref{alg:ncpca}. Formally, we are provided an orthogonal basis $U \in \mathbb{R}^{d \times r}$, a positive diagonal matrix $\Sigma$, a bound $b$ and a vector $x$. Let $\mathcal{E} = \{y: y = U\Sigma z \text{ for some } \norm{z} \leq b\}$. The goal is to compute the projection of the vector $x$ onto the set $\mathcal{E}$.

\subsection{Algorithm}
\label{sec:fpoalg}
In this section, we present our algorithm (Algorithm~\ref{alg:fpr}) to compute the projection onto the set $\mathcal{E}$. We show that the projection operation boils down to an univariate optimization problem on a monotone function. We then perform binary search on an interval in which the solution is guaranteed to lie. 

\begin{algorithm}[!ht]
  \caption{$w ~ =$ \fpr$(U, \Sigma, b, x, \epsilon)$}
  \label{alg:fpr}
  \begin{algorithmic}[1]
    \STATE \textbf{Input}: Orthogonal Basis $U \in \mathbb{R}^{d \times r}$, Positive Diagonal Matrix $\Sigma$, Bound $b$, Projection Vector $x$, Accuracy Parameter $\epsilon$
	
	\STATE $\sigma_{min} = \min\limits_{i \in [r]} (\Sigma_{i, i})$, $\sigma_{max} = \max\limits_{i \in [r]} (\Sigma_{i, i})$
	\STATE $y \leftarrow \Sigma U^\top x$

	\STATE $\lambda_{min} = 0$, $\lambda_{max} = \frac{\norm{y}}{b}$
	\STATE $T \leftarrow \log\lprp{\frac{\lambda_{max} \sqrt{r} \norm{x}}{\sigma^2_{min} \epsilon}}$

	\FOR{Iteration $t = 0$ to $t = T$}
		\STATE $\lambda^{(t)} \leftarrow \frac{\lambda_{min} + \lambda_{max}}{2}$
		\STATE $z^{(t)} \leftarrow  (\lambda I + \Sigma^2)^{-1} y$
		\IF{$\norm{z^{(t)}} \leq b$}
			\STATE $\lambda_{max} \leftarrow \lambda^{(t)}$
		\ELSE
			\STATE $\lambda_{min} \leftarrow \lambda^{(t)}$
		\ENDIF
	\ENDFOR

    \STATE \textbf{Return: }$U\Sigma z^{(T)}$
  \end{algorithmic}
\end{algorithm}

\subsection{Analysis}
\label{sec:fpoan}

\begin{theorem}
	\label{thm:fpo}
	Let $U \in \mathbb{R}^{d \times r}$ be an orthonormal matrix and $\Sigma \in \mathbb{R}^{r \times r}$ be a positive diagonal matrix. Then, for any $b \geq 0$, $x \in \mathbb{R}^{d}$ and $\epsilon$, the vector $w$ returned by Algorithm~\ref{alg:fpr} satisfies:
	\begin{equation*}
		\norm{w - \Pr{\mathcal{E}}(x)} \leq \epsilon
	\end{equation*}
	where $\mathcal{E} \coloneqq \{y: y = U\Sigma z \text{ for some } \norm{z} \leq b\}$
\end{theorem}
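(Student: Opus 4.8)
The plan is to first reduce the projection problem to a one-dimensional root-finding problem via Lagrangian duality, then show that Algorithm~\ref{alg:fpr} is exactly binary search on that one-dimensional problem and that the chosen interval $[\lambda_{min},\lambda_{max}]=[0,\norm{y}/b]$ brackets the root, and finally convert the $\epsilon$-accuracy in $\lambda$ (achieved after $T$ bisection steps) into the claimed $\epsilon$-accuracy in the returned point $w$.

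\textbf{Step 1: Reformulate the projection.} Write any $y \in \mathcal{E}$ as $y = U\Sigma z$ with $\norm{z}\le b$. Since $U$ has orthonormal columns, $\norm{x - U\Sigma z}^2 = \norm{(I-UU^\top)x}^2 + \norm{U^\top x - \Sigma z}^2$, so the projection reduces to minimizing $\norm{U^\top x - \Sigma z}^2$ over $\norm{z}\le b$; writing $\Sigma U^\top x$ for the relevant transformed vector and expanding, the optimal $z$ is $z^\star = \argmin_{\norm z\le b}\ z^\top\Sigma^2 z - 2 z^\top (\Sigma U^\top x)$. If the unconstrained minimizer $\Sigma^{-1}U^\top x$ already satisfies $\norm{z}\le b$ we are done; otherwise the constraint is active and by Lagrangian/KKT optimality there is $\lambda \ge 0$ with $z^\star = (\lambda I + \Sigma^2)^{-1}(\Sigma U^\top x) = (\lambda I + \Sigma^2)^{-1}y$ and $\norm{z^\star} = b$ (here $y = \Sigma U^\top x$ as in line 3 of the algorithm). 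So the task is to find the $\lambda\ge0$ with $g(\lambda) := \norm{(\lambda I + \Sigma^2)^{-1}y} = b$.

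\textbf{Step 2: Monotonicity and bracketing.} The function $g(\lambda) = \big(\sum_i y_i^2/(\lambda+\sigma_i^2)^2\big)^{1/2}$ is continuous and strictly decreasing in $\lambda$ on $[0,\infty)$ (each summand is decreasing), so the root is unique. At $\lambda=0$, if $g(0)=\norm{\Sigma^{-1}U^\top x} > b$ (the active-constraint case), the root is positive. At $\lambda=\lambda_{max}=\norm y / b$ we have $(\lambda+\sigma_i^2)^2 \ge \lambda^2 = \norm{y}^2/b^2$, hence $g(\lambda_{max}) \le \norm{y}/\lambda_{max} = b$, so $\lambda_{max}$ is a valid upper bracket. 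Thus the true $\lambda^\star$ lies in $[0,\norm y/b]$, and since the algorithm moves $\lambda_{max}$ down exactly when $\norm{z^{(t)}}\le b$ (i.e. $g(\lambda^{(t)})\le b$, i.e. $\lambda^{(t)}\ge\lambda^\star$) and $\lambda_{min}$ up otherwise, the invariant $\lambda_{min}\le\lambda^\star\le\lambda_{max}$ is maintained, and after $T$ iterations $|\lambda^{(T)}-\lambda^\star|\le 2^{-T}\cdot\norm y/b$.

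\textbf{Step 3: From $\lambda$-accuracy to point-accuracy.} It remains to show $\norm{U\Sigma z^{(T)} - \Pr{\mathcal E}(x)} \le \epsilon$. Since $U\Sigma$ has operator norm $\sigma_{max}$, it suffices to bound $\norm{z^{(T)} - z^\star}$, where $z^{(\cdot)} = (\lambda^{(\cdot)} I + \Sigma^2)^{-1}y$. By the resolvent identity, $z^{(T)} - z^\star = (\lambda^{(T)}I+\Sigma^2)^{-1}(\lambda^\star - \lambda^{(T)})(\lambda^\star I+\Sigma^2)^{-1}y = (\lambda^{(T)}-\lambda^\star)(\lambda^{(T)}I+\Sigma^2)^{-1}z^\star$, so $\norm{z^{(T)}-z^\star} \le |\lambda^{(T)}-\lambda^\star|\cdot\sigma_{min}^{-2}\cdot\norm{z^\star}$. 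Using $\norm{z^\star}\le b$ (or in the inactive case, bounding $\norm{z^\star}\le\norm{\Sigma^{-1}U^\top x}\le\sigma_{min}^{-1}\norm x$; a crude bound $\norm{z^\star}\lesssim\sqrt r\,\norm x/\sigma_{min}$ covers all cases and matches the $\sqrt r\norm x$ factor in the definition of $T$) and plugging in the bound on $|\lambda^{(T)}-\lambda^\star|$ from Step~2 together with $T = \log(\lambda_{max}\sqrt r\,\norm x/(\sigma_{min}^2\epsilon))$, one gets $\norm{U\Sigma z^{(T)} - \Pr{\mathcal E}(x)}\le\sigma_{max}\cdot\sigma_{min}^{-2}\cdot\norm{z^\star}\cdot 2^{-T}\lambda_{max}\le\epsilon$ after absorbing constants; one should double-check whether an extra $\sigma_{max}/\sigma_{min}$-type factor is needed in $T$ or whether the stated $T$ already suffices under the normalization used.

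\textbf{Main obstacle.} The conceptual content (Steps 1--2) is standard; the one place requiring care is Step~3 --- tracking exactly which norms and condition-number factors appear when passing from the $\lambda$-bracket width to the error in the returned vector $w$, and confirming that the particular choice of $T$ in the algorithm (with its $\sqrt r\,\norm x/\sigma_{min}^2$ factor) is calibrated correctly. A secondary minor point is handling the degenerate/boundary cases cleanly: when the unconstrained solution is already feasible (so no $\lambda>0$ is needed), when $y=0$, and when $b=0$ (so $\mathcal E=\{0\}$ and $w=0$).
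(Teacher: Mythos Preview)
Your Steps 1 and 2 (KKT reduction, monotonicity of $g$, bracketing by $[0,\norm{y}/b]$, maintenance of the invariant $\lambda_{min}\le\lambda^\star\le\lambda_{max}$) match the paper's argument essentially line for line.

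The only substantive difference is Step~3. The paper bounds $\norm{w-\Pr{\mathcal E}(x)}$ coordinatewise: writing $w = U\Sigma^2(\lambda^{(T)}I+\Sigma^2)^{-1}U^\top x$ and $\Pr{\mathcal E}(x) = U\Sigma^2(\lambda^\star I+\Sigma^2)^{-1}U^\top x$, the $i$-th coordinate of $U^\top(w-\Pr{\mathcal E}(x))$ is
\[
\sigma_i^2\Bigl(\tfrac{1}{\lambda^{(T)}+\sigma_i^2}-\tfrac{1}{\lambda^\star+\sigma_i^2}\Bigr)(U^\top x)_i,
\]
whose absolute value is at most $\dfrac{|\lambda^{(T)}-\lambda^\star|}{\sigma_i^2}\,\norm{x}\le \dfrac{\epsilon}{\sqrt r}$; summing over $r$ coordinates gives $\epsilon$.

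Your resolvent-identity route is morally the same computation, but as written it loses a factor: by first bounding $\norm{U\Sigma v}\le\sigma_{max}\norm{v}$ and only then using $\norm{(\lambda I+\Sigma^2)^{-1}}\le\sigma_{min}^{-2}$, you pick up the extra $\sigma_{max}/\sigma_{min}$ you were worried about, and the stated $T$ no longer suffices. The fix is exactly what the coordinatewise bound is doing implicitly: keep the $\Sigma$ attached to the resolvent and use
\[
\bigl\|\Sigma(\lambda I+\Sigma^2)^{-1}\bigr\|\;=\;\max_i \frac{\sigma_i}{\lambda+\sigma_i^2}\;\le\;\frac{1}{\sigma_{min}},
\]
together with $\norm{z^\star}\le\norm{\Sigma^{-1}U^\top x}\le\norm{x}/\sigma_{min}$. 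This yields $\norm{w-\Pr{\mathcal E}(x)}\le |\lambda^{(T)}-\lambda^\star|\cdot\norm{x}/\sigma_{min}^2$, and with $|\lambda^{(T)}-\lambda^\star|\le 2^{-T}\lambda_{max}=\sigma_{min}^2\epsilon/(\sqrt r\,\norm{x})$ you get the claimed bound (indeed $\epsilon/\sqrt r$), with no condition-number factor. So the obstacle you flagged dissolves once you avoid the premature $\sigma_{max}$ split.
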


\begin{proof}
	We first define the convex optimization problem corresponding to the projection operator $\Pr{\mathcal{E}}$. Then, we have:
	\begin{equation*}
		\Pr{\mathcal{E}} (x) = \argmin\limits_{y} \norm{x - y}\ s.t\ y \in \mathcal{E}
	\end{equation*}
	
	Since, $y \in \mathcal{E}$, a solution to the above optimization problem is equivalent to:
	
	\begin{equation}
		\label{eqn:fpropt}
		\Pr{\mathcal{E}} (x) = \argmin\limits_{z} \norm{x - U\Sigma z}^2\ s.t\ \norm{z}^2 \leq b^2
	\end{equation}
	
	Note that both the constraint and the objective function are convex. Therefore, we can introduce a KKT multiplier $\lambda \geq 0$ and writing down the stationarity conditions of \ref{eqn:fpropt}, we get:
	
	\begin{equation*}
		2 \Sigma^2 z + 2\lambda z = 2\Sigma U^\top x \implies z = \lprp{\Sigma^2 + \lambda I}^{-1} \Sigma U^\top x
	\end{equation*}
	
	Now, we just need to ensure that $\norm{\lprp{\Sigma^2 + \lambda I}^{-1} \Sigma U^\top x} \leq b$. Let $f(\lambda) = \norm{\lprp{\Sigma^2 + \lambda I}^{-1} \Sigma U^\top x}$ for $\lambda \geq 0$. Let $\lambda^*$ be the solution to $f(\lambda) = \min (\norm{\Sigma^{-1} U^\top x}, b)$. We will first prove that at any point in the running of the algorithm $\lambda_{max} \geq \lambda^*$ and $\lambda_{min} \leq \lambda^*$. We prove the claim by induction on the number of iterations $t$:
	\begin{enumerate}
		\item[] \textbf{Base Case $t = 0$}: Since $\lambda_{min} = 0$, the lower bound holds trivially. That $\lambda_{max} \geq \lambda^*$ can be proved as follows:
		\begin{equation*}
			f(\lambda_{max}) = \norm{\lprp{\Sigma^2 + \lambda_{max} I}^{-1} \Sigma U^\top x} \leq \min\lprp{\norm{\Sigma^{-1} U^\top x}, \frac{\norm{\Sigma U^\top x}}{\lambda_{max}}} \leq \min\lprp{\norm{\Sigma^{-1} U^\top x}, b}.
		\end{equation*}
		Since, $f$ is a monotonically decreasing function, the claim holds true in the base case.
		\item[] \textbf{Induction Step $t = (k + 1)$}: Assume that the claim holds till $t = k$. We have two cases for iteration $k + 1$:
		\begin{enumerate}
			\item[] Case 1: $\lambda_{max} \leftarrow \lambda^{(t + 1)}$. In this case, $\lambda_{min} \leq \lambda^*$ still holds from the inductive hypothesis. For $\lambda_{max}$, we have: 
			\begin{equation*}
			\hspace*{-50pt}	f(\lambda^{(t + 1)}) = \norm{\lprp{\Sigma^2 + \lambda^{(t + 1)} I}^{-1} \Sigma U^\top x} \leq \min\lprp{\norm{\Sigma^{-1} U^\top x}, f(\lambda^{(t + 1)})} \leq \min\lprp{\norm{\Sigma^{-1} U^\top x}, b},
			\end{equation*}
			where the last inequality holds from the fact that $\lambda_{max}$ was updated in this iteration. From the monotonicity of $f$, the induction hypothesis holds in this iteration. 
			\item[] Case 2: $\lambda_{min} \leftarrow \lambda^{(t + 1)}$. In this case, $\lambda_{max} \geq \lambda^*$ by the inductive hypothesis. In this case, we have:
			\begin{equation*}
				f(\lambda^{(t + 1)}) \geq b \geq f(\lambda^*).
			\end{equation*}
			From the monotonicity of $f$, the induction hypothesis holds in this iteration.
		\end{enumerate}
	\end{enumerate}

	Note that $(\lambda_{max} - \lambda_{min})$ is halved at each iteration. Therefore, at the termination of the algorithm, we have $(\lambda_{max} - \lambda_{min}) \leq \frac{\sigma_{min}^2 \epsilon}{\sqrt{r}\norm{x}}$. From our claim, this implies that $\abs{\lambda^* - \lambda^{(T)}} \leq \frac{\sigma_{min}^2 \epsilon}{\sqrt{r}\norm{x}}$. Note that we can write $\Pr{\mathcal{E}}(x) = U \Sigma^2 (\lambda^* I + \Sigma^2)^{-1} U^\top x$. Note that $\norm{\Pr{\mathcal{E}}(x) - w} = \norm{U^\top(\Pr{\mathcal{E}}(x) - w)}$. We will now bound the element-wise difference between $U^\top\Pr{\mathcal{E}}(x)$ and $U^\top w$:

	\begin{multline*}
		\abs{e_i^\top \Sigma^2 ((\lambda^* I + \Sigma^2)^{-1} - (\lambda^{(T)} I + \Sigma^2)^{-1}) U^\top x} \leq \norm{x} \abs{\sigma^2_i \lprp{\frac{1}{\lambda^* + \sigma^2_i} - \frac{1}{\lambda^{(T)} + \sigma^2_i}}} \\ 
		\leq \norm{x} \sigma^2_i \abs{\frac{\lambda^{(T)} - \lambda^*} {(\lambda^* + \sigma^2_i)(\lambda^{(T)} + \sigma^2_i)}} \leq \norm{x} \abs{\frac{\lambda^{(T)} - \lambda^*}{\sigma^2_i}} \leq \frac{\epsilon}{\sqrt{r}}.
	\end{multline*}

	By applying the element-wise bound to $\Sigma^2 ((\lambda^* I + \Sigma^2)^{-1} - (\lambda^{(T)} I + \Sigma^2)^{-1}) U^\top x$, we have:
	\begin{equation*}
		\norm{\Pr{\mathcal{E}}(x) - w} \leq \sqrt{r} \frac{\epsilon}{\sqrt{r}} \leq \epsilon.
	\end{equation*}
\end{proof}

\end{document}